\crefname{section}{Sec.}{Secs.}
\Crefname{section}{Section}{Sections}
\Crefname{table}{Table}{Tables}
\crefname{table}{Tab.}{Tabs.}
\newcommand{\pmscore}[2]{{#1}\footnotesize\,$\pm$\,{#2}}
\newcommand{\boldpmscore}[2]{\textbf{#1}\footnotesize\,$\pm$\,\textbf{#2}}
\newsavebox\CBox
\def\textBF#1{\sbox\CBox{#1}\resizebox{\wd\CBox}{\ht\CBox}{\textbf{#1}}}
\definecolor{tabhighlight}{HTML}{e5e5e5}
\newtheorem{proposition}{Proposition}
\begin{document}

\title{Prompt Tuning with Soft Context Sharing for Vision-Language Models}

\author{Kun Ding\textsuperscript{13}\quad Ying Wang\textsuperscript{13}\quad Pengzhang Liu\textsuperscript{4}\quad Qiang Yu\textsuperscript{3}\quad Haojian Zhang\textsuperscript{2}\\Shiming Xiang\textsuperscript{13}\quad Chunhong Pan\textsuperscript{3}\\
	\textsuperscript{1}State Key Laboratory of Multimodal Artificial Intelligence Systems, CASIA, Beijing, China\\
	\textsuperscript{2}Engineering Laboratory for Intelligent Industrial Vision, CASIA, Beijing, China\\
	\textsuperscript{3}Research Center of Aerospace Information, CASIA, Beijing, China\\
	\textsuperscript{4}JD.com, Beijing, China\\
	{\tt\small \{kun.ding,\,qiang.yu,\,zhanghaojian2014\}@ia.ac.cn}\\
	{\tt\small \{ywang,\,smxiang,\,chpan\}@nlpr.ia.ac.cn\quad liupengzhang@jd.com}
}

\maketitle

\begin{abstract}
	Vision-language models have recently shown great potential on many tasks in computer vision. Meanwhile, prior work demonstrates prompt tuning designed for vision-language models could acquire superior performance on few-shot image recognition compared to linear probe, a strong baseline. In practice, many few-shot tasks are inherently correlated, particularly within specialized domains. However, such information is overlooked previously. Inspired by the fact that modeling task relationship by multi-task learning can usually boost performance, we propose a novel method SoftCPT (Soft Context Sharing for Prompt Tuning) to tune pre-trained vision-language models on multiple target few-shot tasks jointly. Specifically, we design a task-shared meta network to generate prompt context for each task using task name together with a learnable task context as input. The parameters of this meta network as well as the task context are tuned on the joint training set of all tasks. As such, the prompt context of all tasks will be shared in a soft manner. Extensive experiments across four multi-task few-shot datasets covering 44 tasks and 1593 categories demonstrate that SoftCPT significantly outperforms single-task prompt tuning methods, highlighting the effectiveness of multi-task learning for vision-language prompt tuning.
\end{abstract}

\section{Introduction}
Vision-language models (VLMs)~\cite{ALIGN,CLIP} especially the contrastive learning based ones~\cite{SLIP,CLIP} pre-trained on large-scale image-text pairs collected from the web have recently obtained extensive attention. Increasing evidences indicate that improved performance can be achieved by transferring knowledge from VLMs to various computer vision tasks, such as zero/few shot recognition~\cite{LI2023103497,CLIP,CoOp,DMPT}, detection~\cite{OpenVocaOD_VLM,PromptDet,gu2022openvocabulary} and segmentation~\cite{lueddecke22_cvpr,DenseCLIP,FISCHER2024103024}.

\begin{figure}[!t]
	\centering
	\includegraphics[width=\columnwidth]{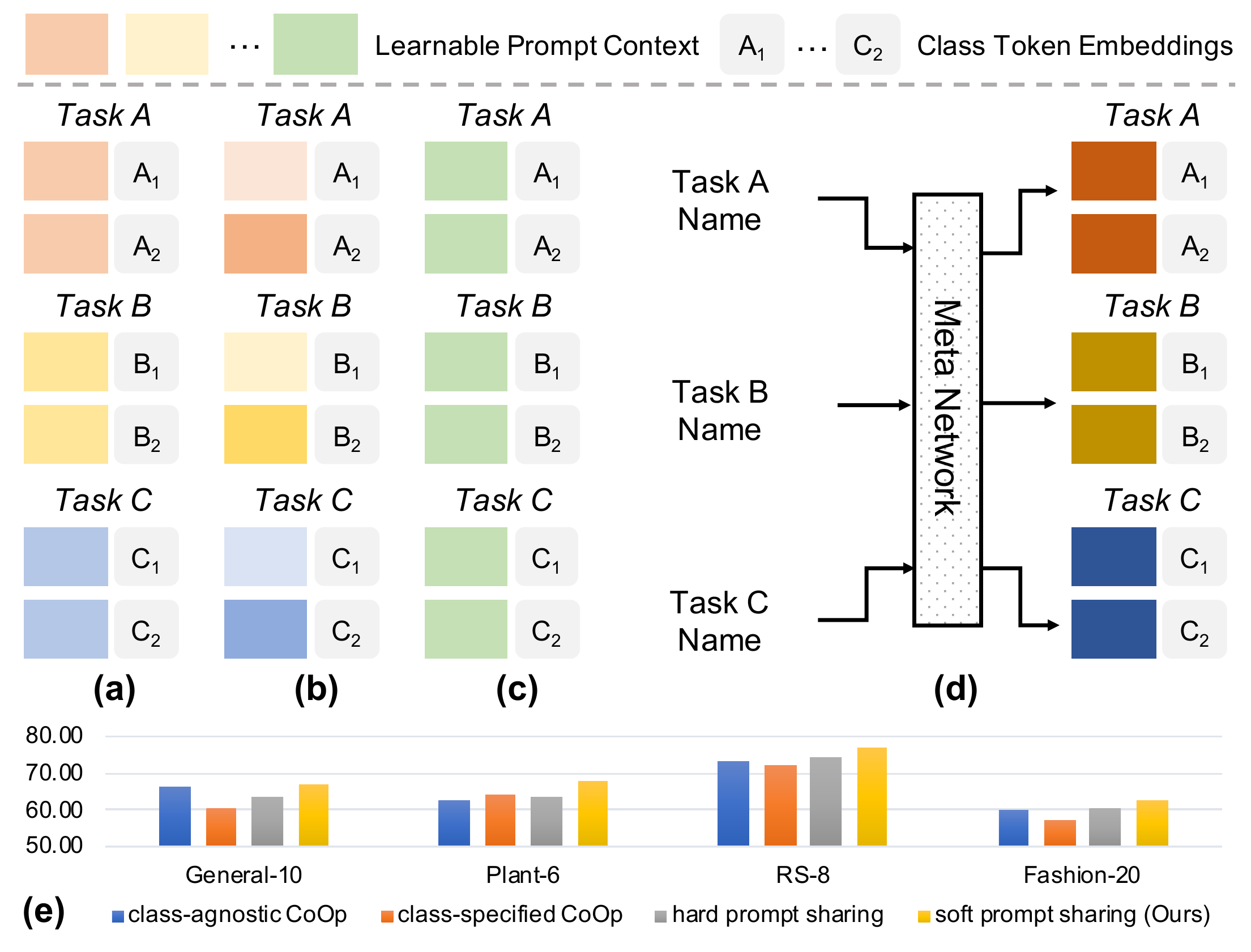}
	\caption{A conceptual comparison of different prompt tuning methods. (a) class-agnostic CoOp, (b) class-specified CoOp, (c) hard prompt sharing for CoOp, (d) our soft prompt sharing, (e) average performances on four datasets. In (a)-(d), we assume there are 2 classes per task and shared prompt contexts are in the same color.}
	\label{fig:concept}
	\vspace{-6pt}
\end{figure}

Provided with pre-trained VLMs, how to adapt their implicit knowledge to various downstream tasks becomes an essential problem. Recently, Zhou et al.~\cite{CoOp} introduced CoOp, a method that incorporates prompt tuning, originally developed in NLP~\cite{PowerofScale,PrefixTuning,GPT_Understands}, into computer vision to tackle the challenge of few-shot image recognition. In CoOp, a learnable context comprising several vectors is concatenated with the token embeddings of class names. These resultant embeddings are then fed into a text encoder to generate classifier weights. Notably, both the parameters of image and text encoder are frozen during training. Meanwhile, for a certain few-shot task, all classes can share a unified prompt context in a class-agnostic way or each class can learn its own prompt context in a class-specified way. According to their findings, CoOp can achieve improved accuracy against a strong baseline of linear probe~\cite{linear_probe}, thus paving a novel path in addressing the visual few-shot recognition problem.

The above ``pre-train then prompt-tune'' paradigm suggests that for some specialized areas we only need to keep one copy of pre-trained model and activate it to accommodate to various downstream tasks. In real-life applications, it is very natural to assume there are some relationships between these tasks. From prior research, jointly training a model on some related tasks in a multi-task manner is believed to be useful~\cite{MT_DNN_Survey,KendallGC18}. Thus, we naturally pose a question: \emph{can prompt tuning of VLMs further benefit from multi-task learning?} Although multi-task learning has already been explored in the context of prompt tuning for pure language models~\cite{AttnMixSoftPromptTune,Intrinsic_Task_Subspace}, a systematic investigation in the vision-language domain is still absent.

To study the effectiveness of multi-task prompt tuning, our initial attempt involves adapting CoOp by sharing a class-agnostic prompt context across all target tasks and tuning this context on joint multi-task few-shot training set (see Fig.~\ref{fig:concept}(c)). However, we observe that this approach of hard prompt sharing exhibited unstable performance (see Fig.~\ref{fig:concept}(e)). It underperforms on the General-10 dataset, probably due to the loose relations between the tasks on this generalized dataset. To tackle this problem, we relax the hard constraint by sharing the prompt contexts in a soft manner, leading to a novel multi-task prompt tuning technique called \textbf{SoftCPT} (\textbf{Soft} \textbf{C}ontext Sharing for \textbf{P}rompt \textbf{T}uning), as visualized by Fig.~\ref{fig:concept}(d). Specifically, a shared meta network is devised to generate per-task soft (i.e. continuous) prompt context by using task names as inputs. Owing to the capability of extracting semantic features of CLIP's text encoder, similar task names would yield analogous soft prompt contexts, enabling more effective knowledge transfer across tasks.

We conduct extensive experiments on four multi-task few-shot datasets. The results show that \textbf{SoftCPT} outperforms CoOp by \textbf{0.73\%}, \textbf{5.09\%}, \textbf{3.63\%} and \textbf{2.80\%} on four datasets, which hint multi-task prompts are beneficial. Our contributions are summarized as follows:
\begin{enumerate}
	\item A softly-shared multi-task prompt tuning method is proposed for VLMs, as far as we know, which is the first attempt to explore the effectiveness of multi-task learning in prompt tuning for VLMs.
	\item A new few-shot fashion classification dataset is constructed to test the effectiveness of multi-task prompt tuning in real industrial scenario, which will be made publicly available to facilitate future researches.
	\item Experiments on four datasets, ranging from generalized to specialized area, are conducted to study the effectiveness of SoftCPT. Our results reveal that multi-task prompts if learned in a soft sharing manner are indeed useful for the scenarios with multiple related tasks.
\end{enumerate}

The remainder of the paper is organized in the following manner: In Section~\ref{sec:related_work}, we review related work including pre-trained models, vision-language models, parameter-efficient fine-tuning, prompt tuning for vision-language models, and multi-task learning. In Section~\ref{sec:method}, we detail the proposed SoftCPT. In Section~\ref{sec:exp}, we describe the experiments and present the main findings. In Section~\ref{sec:conclusion}, we provide the overall conclusions.

\section{Related Work}
\label{sec:related_work}

\subsection{Vision-Language Models}
Existing methods in modeling vision and language signals can be roughly categorized as one-stream~\cite{VisualBert,Unicoder_VL,ViLT} and two-stream methods~\cite{CLIP,SLIP,ALIGN}. The recent contrastive learning based two-stream methods like CLIP~\cite{CLIP} have garnered significant attention. With CLIP, strong zero-shot and few-shot performance is achieved, which demonstrates the extracted visual features are of superior generalization ability.

\subsection{Parameter-Efficient Fine-Tuning}
With the rapid development of pre-training techniques, lots of pre-trained models are available, which poses a new challenge -- how to fine-tune the models in a parameter-efficient way on new tasks. Traditional fine-tuning methods~\cite{BERT,RuderH18,FineTuning_Pretrained_Language_Models,RevisitingBertFT} add task-specific heads and tunes all parameters. Although it is simple and extensively adopted, it has several obvious deficiencies~\cite{SMART,FTDistortFeats,AttnMixSoftPromptTune}. In view of this, two kinds of parameter-efficient fine-tuning methods are proposed, i.e.  prompt-based~\cite{AutoPrompt,GPT_Understands,PrefixTuning} and adapter-based~\cite{HoulsbyGJMLGAG19,RuckleGGBP0G21,CLIP_Adapter,sung2022vladapter}. The former freezes all parameters and only designs or optimizes the inputs of models on different tasks, which can be further categorized into prompt design~\cite{RaffelSRLNMZLL20,BrownMRSKDNSSAA20}, prompt search~\cite{AutoPrompt,GaoFC20} and prompt tuning~\cite{GPT_Understands,PrefixTuning,ProGrad}. Compared to prompt design and prompt search, prompt tuning learns continuous prompts which is more favorable~\cite{AutoPrompt,GaoFC20}. The latter freezes the model parameters and inserts or attaches some tuneable layers, whose parameters will be tuned on target task. Prompt-based and adapter-based method can now achieve comparable or even better performance against classical fine-tuning method~\cite{PPT,ConvBypass,CoOp,ProDA}. Recently, more and more evidences suggested that different parameter-efficient fine-tuning methods are complementary~\cite{NPS,UPT}.

\subsection{Prompt Tuning for Vision-Language Models}
CoOp proposed by Zhou et al.~\cite{CoOp} is the first method that successfully introduces prompt tuning to VLMs. Following this line, several prompt tuning methods were proposed to further improve the effectiveness and versatility for few-shot recognition task. CoCoOp~\cite{zhou2022cocoop} was proposed to address the class shift problem by introducing a meta network, which has a distinct structure to ours and its input is conditional on image feature instead of text feature. It improves the accuracy on new classes but at the expense of the accuracy on base classes, which is quite important in practical use. Subsequently, ProGrad~\cite{ProGrad} and KgCoOp~\cite{KgCoOp} were proposed to further improve the generalization ability from the viewpoints of gradient and knowledge, respectively. To further enhance the representation ability of CoOp, Chen et al. proposed PLOT~\cite{PLOT}, which is based on optimal transport. To adapt CoOp to multi-label classification, Sun et al. proposed the dual prompt tuning method DualCoOp~\cite{DualCoOp}. Besides, Lu et al. proposed to learn the distribution of prompt context~\cite{ProDA}. The proposed method ProDA learns a collection of diverse prompts and models them by multivariate Gaussian distribution. Different from all these works, our work aims at studying the effectiveness of multi-task learning in the setting of prompt tuning for VLMs.

\subsection{Multi-task Learning} 
Multi-task learning (MTL) is an important subfield in machine learning~\cite{MTL,MT_DNN_Survey,LUO2023126836}. By exploiting task relatedness, it is able to improve the performance over single-task learning. There are two dominant methods for deep multi-task learning, hard and soft parameter sharing, which learn identical and similar features, respectively. Recently, MTL has been introduced to prompt tuning in NLP. Asai et al.~\cite{AttnMixSoftPromptTune} adopted MTL to learn the parameters of attention generator and slightly improved performances were observed on language tasks. Qin et al.~\cite{Intrinsic_Task_Subspace} proposed to learn a universal intrinsic task subspace of prompts in a multi-task manner. Compared to traditional MTL that is usually performed on features or classifiers, MTL for prompt tuning must be conducted on inputs. Thus, the research of MTL for prompt tuning is necessary. Finally, although MTL is proved promising in NLP, its efficacy on prompt tuning for VLMs has not been studied yet. Our work fills this gap.

\begin{figure*}[!t]
	\centering
	\includegraphics[width=0.9\linewidth]{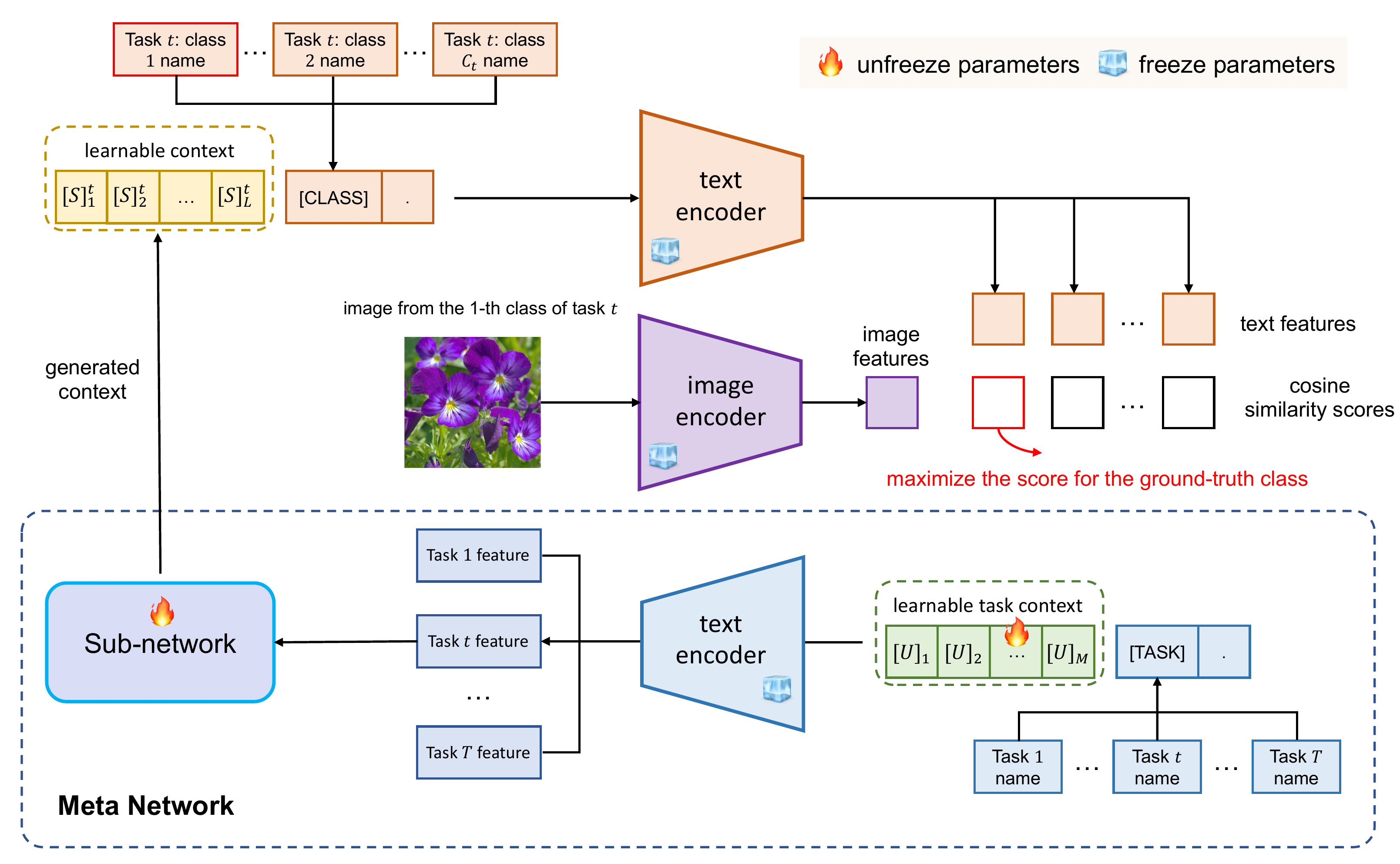}
	\caption{Illustration of the proposed multi-task prompt tuning method SoftCPT. Unlike CoOp, a meta network is introduced to produce learnable context ($[S]_1^t[S]_2^t\cdots[S]_L^t$) for each task. The meta network consists of a frozen text encoder and a learnable sub-network. The text encoder extracts task features from task names, while the sub-network transforms the task features to learnable context of class names. For model training, SoftCPT uses samples from all tasks and the loss is first computed independently for each task. The summed loss of all tasks is then used for backpropagation (ref. Eq.~\ref{eq:total_loss}). In the figure, [TASK] denotes token embeddings of task name, [CLASS] denotes token embeddings of a certain class name in a task.}
	\label{fig:network_struct}
	\vspace{-6pt}
\end{figure*}

\section{Method}
\label{sec:method}
This section first introduces the classical single-task based CoOp in Section~\ref{ssec:bg} and then presents the proposed SoftCPT to extend it for multi-task scenario in Section~\ref{ssec:softcpt}.

\subsection{Background}
\label{ssec:bg}
CoOp was proposed by Zhou et al.~\cite{CoOp}, which enables the few-shot adaption of pre-trained CLIP model for image recognition. CoOp inherits the two-stream structure from CLIP to bridge the gap between pre-training and fine-tuning. In other words, it has an image encoder denoted as $e(\cdot)$ to extract the high-level features of image and a text encoder denoted as $g(\cdot)$ to extract the text features. For zero/few-shot classification, the text features are regarded as classifier's weights. 

For few-shot classification, assume that there are $C$ classes, and the associated $C$ class name texts are available. For each class name, CoOp prepends a learnable prompt context to each class name before feeding it to text encoder. Actually, this context consists of a sequence of vectors. These vectors together with the token embeddings of the class name form the prompt template. Formally, the prompt template is represented as follows:
\begin{align}
	\boldsymbol{s}_c = [\text{S}]_1[\text{S}]_2\cdots[\text{S}]_L[\text{CLASS}]_c,
\end{align}
where each $[\text{S}]_l$ ($l\in \{1,\cdots, L\}$) is a vector with the same dimension as word embeddings of CLIP's text encoder, $[\text{CLASS}]_c$ is the token embeddings of the $c$-th ($c\in\{1,\cdots,C\}$) class name and $L$ is the number of the context tokens. By feeding $\boldsymbol{s}_c$ to the text encoder, we can obtain its text features $g(\boldsymbol{s}_c)$. Given an input image $\boldsymbol{x}$, the extracted image features can be denoted as $e(\boldsymbol{x})$. CoOp classifies this image according to the following prediction probabilities:
\begin{equation}
p(c|\boldsymbol{x})=\frac{\exp(\langle e(\boldsymbol{x}), g(\boldsymbol{s}_c) \rangle/\tau)}{\sum\nolimits_{i=1}^{C} \exp(\langle e(\boldsymbol{x}), g(\boldsymbol{s}_i) \rangle/\tau)}, c\in\{1,\cdots, C\}, 
\label{eq:pred_prob0}
\end{equation}
where $\langle\cdot,\cdot\rangle$ denotes cosine similarity and $\tau$ is a temperature parameter~\cite{CLIP}.

During model training, CoOp solely focuses on learning the context by minimizing the cross-entropy loss on the training set while maintaining all other parameters constant. Depending on the setup, we can obtain two variants of CoOp: a class-agnostic version where all classes share a common context, and a class-specific version where each class possesses its unique context. Notably, CoOp was originally designed to operate in a single-task mode, assuming that a dataset corresponds to a single classification task. However, in scenarios where multiple classification tasks are presented, a straightforward approach to extend CoOp for multi-task learning is enforcing all classes across all tasks to utilize the same context. Nevertheless, as previously mentioned, this method falls short due to its inability to effectively model task relatedness, rendering it suboptimal.

\subsection{SoftCPT}
\label{ssec:softcpt}
This work introduces SoftCPT, an extension of CoOp that effectively adapts to the multi-task scenario while facilitating the modeling of task relatedness. Fig.~\ref{fig:network_struct} illustrates the overall structure of SoftCPT. Unlike CoOp, SoftCPT incorporates a meta network, shared across all tasks, to generate task-specific contexts. This meta network enables the capture of inter-task relationships. In the following sections, we delve into the meta network's structure (Section~\ref{sssec:meta_net}), detail the prediction probability computation (Section~\ref{sssec:pred_prob}), present the multi-task training method (Section~\ref{sssec:opt}), and conduct in-depth discussions (Section~\ref{sssec:discuss}).

\subsubsection{Context Generation via Meta Network}
\label{sssec:meta_net}
The meta network serves as the cornerstone for context generation. It initiates by extracting task-specific features from the descriptive text of each task, subsequently transforming these features into the requisite context.

\textbf{Task Feature Extraction.} To extract task features, we assign a concise text description as the task name for each task, ensuring that the text effectively captures the task's essence. For instance, for the DTD dataset~\cite{DTD},  the task name ``texture classification'' is more descriptive than ``image classification''. Leveraging the strong language modeling capabilities of CLIP's text encoder, we harness it to extract text features, which serve as task features. The rationality of this method is that pre-trained language models can extract rich semantic features. Consequently, similar text descriptions yield comparable text features, translating to similar task features. Besides, similar to the motivation of CoOp, we prepend learnable task context\footnote{Note that, we use context or prompt context to denote the context combined with different class names, while task context is the context combined with different task names.} to each task name before feeding it to the text encoder, significantly boosting the plasticity of text encoder.

Specifically, assume there are $T$ tasks, let $[\text{TASK}]_t$ denote the token embeddings of the task name of the $t$-th task ($t\in \{1,\cdots,T\}$). The learnable task context consists of $M$ learnable vectors, i.e., $[\text{U}]_1[\text{U}]_2\cdots[\text{U}]_M$. The prompt template for task names can be expressed as
\begin{align}
	\boldsymbol{u}_t=[\text{U}]_1[\text{U}]_2\cdots[\text{U}]_M[\text{TASK}]_t.
	\label{eq:ta_context}
\end{align}
It is worthy nothing that, the task context in this equation does not depend on $t$. In other words, the prompt template is task-agnostic. Similar to class-specific CoOp, the above task-agnostic prompt template can also be made task-specific. For this aim, define $[\text{U}]_m^t$ as the $m$-th ($m\in\{1,\cdots, M\}$) learnable vector for task $t$, the task-specific prompt template can be expressed as
\begin{align}
	\boldsymbol{u}_t=[\text{U}]_1^t[\text{U}]_2^t\cdots[\text{U}]_M^t[\text{TASK}]_t.
	\label{eq:ts_context}
\end{align}
By feeding $\boldsymbol{u}_t$ to $g(\cdot)$, the task features for the $t$-th task is $g(\boldsymbol{u}_t)$. Note that, $g(\boldsymbol{u}_t)$ is the pooled features of the last layer of the text encoder. 

\textbf{From Task Feature to Context.} Given the vector $g(\boldsymbol{u}_t)$, we need to obtain the context for class names, which consists of $L$ vectors. To achieve this goal, we design a lightweight sub-network comprising of a linear layer and a reshaping layer. Formally, the generated context of length $L$ for the $t$-th task by the sub-network can be expressed as
\begin{align}
	[\text{S}]_1^t[\text{S}]_2^t\cdots[\text{S}]_L^t=\text{Reshape}(\boldsymbol{W}^T\cdot g(\boldsymbol{u}_t)),
	\label{eq:task_feat_to_context}
\end{align}
where $[\text{S}]_l^t$ ($l\in \{1,\cdots, L\}$) denotes the $l$-th token embedding in the context for the $t$-th task, $\boldsymbol{W}\in\mathbb{R}^{d_{txt}\times (d_{embed} L)}$ is the linear layer's transformation matrix and $\text{Reshape}(\cdot)$ converts the vector of length $d_{embed}L$ to a matrix of shape $d_{embed}\times L$. Here, $g(\boldsymbol{u}_t)$ is assumed to have a length of $d_{txt}$ and the word embedding layer's dimension of CLIP's text encoder is assumed to be $d_{embed}$.

\subsubsection{Prediction Probability Computation}
\label{sssec:pred_prob}
Based on the generated context $[\text{S}]_1^t[\text{S}]_2^t\cdots[\text{S}]_L^t$, we compute the prediction probabilities. First, the prompt template of class names for the $t$-th task can be represented as
\begin{align}
	\boldsymbol{s}_{t,c} = [\text{S}]_1^t[\text{S}]_2^t\cdots[\text{S}]_L^t[\text{CLASS}]_c,
\end{align}
where $c$ is in $\{1, \cdots, C_t\}$ with $C_t$ the number of classes of task $t$. After that, by feeding $\boldsymbol{s}_{t,c}$ to the text encoder we can obtain the corresponding text features for the $c$-th class of task $t$, i.e., $g(\boldsymbol{s}_{t,c})$. Finally, the prediction probability of an input image $\boldsymbol{x}$ belonging to class $c$ of task $t$ is
\begin{equation}
p(c|\boldsymbol{x})=\frac{\exp(\langle e(\boldsymbol{x}), g(\boldsymbol{s}_{t,c}) \rangle/\tau)}{\sum\nolimits_{i=1}^{C_t} \exp(\langle e(\boldsymbol{x}), g(\boldsymbol{s}_{t,i}) \rangle/\tau)},
\label{eq:pred_prob1}
\end{equation}
where the symbol $\tau$ carries the same meaning as it does in Eq.~\ref{eq:pred_prob0}.

\subsubsection{Multi-task Optimization}
\label{sssec:opt}
Unlike CoOp, SoftCPT is trained on the joint training set of all target tasks, each task has its own training set. Assume there are $T$ tasks, each has its own data splits. We define the dataset of task $t$ as $\mathcal{D}_t=(\mathcal{D}_t^\text{train}, \mathcal{D}_t^\text{val}, \mathcal{D}_t^\text{test})$, where $\mathcal{D}_t^\text{train}$, $\mathcal{D}_t^\text{val}$ and $\mathcal{D}_t^\text{test}$ denote the train, validation and test split, respectively. Each of these splits is a set of tuple $(\boldsymbol{x}, y)$ with $\boldsymbol{x}$ an image and $y$ the corresponding label. We further define the joint dataset $\mathcal{D}$ as $\mathcal{D}=(\mathcal{D}^\text{train}, \mathcal{D}^\text{val}, \mathcal{D}^\text{test})$, where $\mathcal{D}^\text{train}$, $\mathcal{D}^\text{val}$ and $\mathcal{D}^\text{test}$ denote the joint train, validation and test set, respectively. $\mathcal{D}^{*}$ is the union of the samples from all tasks with the same split, i.e. $\mathcal{D}^{*}=\bigcup\nolimits_{t=1}^T \mathcal{D}^{*}_t$, where $*$ is ``train'', ``val'' or ``test''. For model training, we minimize the following total loss,
\begin{equation}
\mathcal{L}=-\sum\nolimits_{t=1}^T \sum\nolimits_{(\boldsymbol{x}, y)\in \mathcal{D}_t^\text{train}} \log(p(y|\boldsymbol{x})),\label{eq:total_loss}
\end{equation}
where $p(y|\boldsymbol{x})$ is given by Eq.~\ref{eq:pred_prob1}. The parameters to be optimized include all tasks' context(s) (in Eq.~\ref{eq:ta_context} or Eq.~\ref{eq:ts_context}) and the sub-network's parameters $\boldsymbol{W}$, while all other parameters are fixed.

\subsubsection{Discussions}
\label{sssec:discuss}
\textbf{Rationality of Using the Text Encoder Twice.} It's noteworthy that SoftCPT leverages the text encoder twice: once for extracting task features and another for generating classifier weights. The text encoder's role in the meta network is crucial as it provides a prior for capturing task relatedness. Without it, learning meaningful task relationships from scratch, especially for closely related tasks, becomes challenging. Ablation experiments in Section~\ref{sec:exp} further validate this point. Additionally, we opt for the CLIP's text encoder instead of other pre-trained models to maintain simplicity and avoid incorporating additional components. However, it's worth mentioning that other pre-trained language models, such as GPT~\cite{BrownMRSKDNSSAA20} and LLaMA~\cite{LLaMA}, are also compatible with our framework.

\begin{table}[t!]
	\footnotesize
	\setlength{\tabcolsep}{2pt}
	\renewcommand{\arraystretch}{1.2}
	\centering
	\caption{Prompt context's and task's proximity in different methods. `Soft w/o TE' denotes the text encoder in SoftCPT's meta network is removed. Refer to Section~\ref{ssec:ablation} for more details.}
	\begin{tabular}{lcc}
		\toprule
		\textBF{Method}& \textBF{prompt context's proximity} & \textBF{task's proximity}\\
		\midrule
		Hard & 0 & N/A\\
		\midrule
		Soft & $\|\boldsymbol{W}^T\boldsymbol{g}_k-\boldsymbol{W}^T\boldsymbol{g}_t\|$&$\|\boldsymbol{g}_k-\boldsymbol{g}_t\|,\,\boldsymbol{g}_*=g(\boldsymbol{u}_*)$\\
		\midrule
		\multirow{2}{*}{Soft w/o TE} &\multirow{2}{*}{$\| \boldsymbol{W}^T\boldsymbol{g}_k-\boldsymbol{W}^T\boldsymbol{g}_t\|$}&$\|\boldsymbol{g}_k-\boldsymbol{g}_t\|$,\\
		&&$\boldsymbol{g}_*\, \text{is unconstrained}$\\
		\bottomrule
	\end{tabular}
	\label{tab:task_similarity}
\end{table}

\begin{table*}[t!]
	\footnotesize
	\setlength{\tabcolsep}{4pt}
	\renewcommand{\arraystretch}{1.0}
	\centering
	\caption{Detailed information of General-10, Plant-6 and RS-8. ``per-class'' is the mean per-class accuracy, ``acc'' is the top-1 accuracy and ``C'' is the number of classes for a task. ``Prompt Template" is the template used in zero-shot CLIP.}
	\begin{tabular}{lrp{5cm}rr}
		\toprule
		\textBF{Task} & \textBF{C} & \textBF{Task Name} & \textBF{Metric} & \textBF{Prompt Template}\\
		\midrule
		\textBF{General-10:}\\
		Caltech101~\cite{Caltech101} & 101 & object classification & per-class & "a photo of a \{\}."\\
		DTD~\cite{DTD} & 47 & texture classification & acc & "\{\} texture."\\
		EuroSAT~\cite{EuroSAT} & 10 & land use and land cover classification& acc & "a centered satellite photo of \{\}."\\
		FGVCAircraft~\cite{FGVCAircraft} & 102 & aircraft classification & per-class & "a photo of a \{\}, a type of aircraft."\\
		Food101~\cite{Food101} & 101 & food classification & acc & "a photo of \{\}, a type of food."\\
		Flowers102~\cite{Flowers102} & 102 & flower classification & per-class & "a photo of a \{\}, a type of flower."\\
		Oxford-Pets~\cite{OxfordPets} & 37 & pets classification & per-class & "a photo of a \{\}, a type of pet."\\
		StanfordCars~\cite{StanfordCars} & 196 & car classification & acc & "a photo of a \{\}."\\
		SUN397~\cite{SUN397} & 397 & scene classification & acc & "a photo of a \{\}."\\
		UCF101~\cite{UCF101} & 101 & action classification & acc & "a photo of a person doing \{\}."\\
		\midrule
		\textBF{Plant-6:}\\
		FruitVegetable~\cite{fruit_vegetable} & 36 & fruits and vegetables image classification & acc & "a photo of a \{\}, a type of fruit or vegetable.''\\
		KaggleFlower~\cite{kaggle_flower} & 5 & flower classification & acc & "a photo of a \{\}, a type of flower.''\\
		KaggleMushroom~\cite{kaggle_mushroom} & 9 & mushroom classification & acc & "a photo of a \{\}, a type of mushroom.''\\
		KaggleVegetable~\cite{kaggle_vegetable} & 15 & vegetable classification & acc & "a photo of a \{\}, a type of vegetable.''\\
		PlantSeedling~\cite{plant_seedling} & 12 & plant seedling classification & acc & "a photo of a \{\}, a type of seedling.''\\
		PlantVillage~\cite{plant_village} & 39 & plant leaf disease classification & acc & "a photo of a plant leaf with \{\} disease.''\\
		\midrule
		\textBF{RS-8:}\\
		AID~\cite{xia2017aid}& 30 & aerial image scene classification & acc & "an aerial photo of \{\}."\\
		RESISC45~\cite{RESISC45}& 45 & remote sensing image scene classification & acc & "a satellite photo of \{\}."\\
		OPTIMAL~\cite{wang2018scene}& 31 & Google Earth image scene classification & acc & "a satellite photo of \{\}."\\
		RSICB128~\cite{RSICB} & 45 & global-scale remote sensing image scene classification & acc & "a satellite photo of \{\}."\\
		RSSCN7~\cite{RSSCN7} & 7 & Google Earth image scene classification & acc & "a satellite photo of \{\}."\\
		NaSC-TG2~\cite{TG2} & 10 & satellite image scene classification & acc & "a satellite photo of \{\}."\\
		UCMerced~\cite{UCMerced} &21& remote sensing image scene classification with high resolution overhead image& acc & "a satellite photo of \{\}."\\
		WHURS19~\cite{Dai2011WHURS19} &19& Google Earth image scene classification & acc & "a satellite photo of \{\}."\\
		\bottomrule
	\end{tabular}
	\label{tab:data_general10_plant6_rs8}
\end{table*}

\begin{table*}[t!]
	\footnotesize
	\renewcommand{\arraystretch}{1.0}
	\setlength{\tabcolsep}{1.5pt}
	\centering
	\caption{Detailed information of Fashion-20. ``acc'' is the top-1 accuracy and ``C'' is the number of classes for a task. ``Prompt Template" is the template used in zero-shot CLIP.}
	\begin{tabular}{p{1.8cm}ccp{1.5cm}p{6cm}cr}
		\toprule
		\textBF{Task} & \textBF{C} & \textBF{\#Samples} & \textBF{Task Name} & \textBF{Class Names} & \textBF{Metric} & \textBF{Prompt Template}\\
		\midrule
		pants type & 7 & 3467 & pants type & straight-legged trousers, pencil pants, harem pants, flared trousers, wide leg pants, cargo pants, bib pants & acc & "a photo of \{\}, a type of pants.''\\
		pants length & 5 & 1614 & pants length &trousers, nine-point pants, seven-point pants, five-point pants, short pants & acc & "a photo of \{\}, a type of pants.''\\
		waist type & 3 & 564 & waist type &low waist pants, mid waist pants, high waist pants& acc&"a photo of \{\}, a type of pants.''\\
		collar type & 5 & 1838 & collar type &round collar, V-shape collar, square collar, stand collar, lapel collar& acc&"a photo of \{\}, a type of tops.''\\
		sleeve type & 5 & 921 & sleeve type &bat sleeves, puff sleeves, lantern sleeves, mandarin sleeves, flying sleeves& acc&"a photo of \{\}, a type of tops.''\\
		sleeve length & 4 & 1963 & sleeve length &sleeveless, short sleeve, mid sleeve, long sleeve& acc&"a photo of \{\}, a type of tops.''\\
		top pattern & 5 & 981 & top pattern &stripes, plaid, solid color, hand painted, broken flowers& acc&"a photo of \{\}, a type of tops.''\\
		shoe material & 4 & 1886 & shoe material &canvas shoes, leather shoes, rubber-soled shoes, plastic shoes& acc&"a photo of \{\}, a type of shoes.''\\
		shoe style & 4 & 1872 & shoe style &sneakers, boots, mules, platform shoes& acc&"a photo of \{\}, a type of shoes.''\\
		heel shape & 6 & 1326 & heel shape &square heel, horseshoe heel, slope heel, wine cup heel, muffin heel, tapered heels& acc&"a photo of \{\}, a type of shoes.''\\
		heel thickness & 2 & 354 & heel thickness &thin heel, thick heel& acc&"a photo of \{\}, a type of shoes.''\\
		heel height & 3 & 712 & heel height &low heel, medium heel, high heel& acc&"a photo of \{\}, a type of shoes.''\\
		upper height & 3 & 703 & upper height &low top shoes, medium top shoes, high top shoes& acc&"a photo of \{\}, a type of shoes.''\\
		toe cap style & 3 & 963 & toe cap style &round head, square head, pointed head& acc&"a photo of \{\}, a type of shoes.''\\
		hat style & 6 & 1791 & hat style &berets, casquette, clochehat, sailor cap, octagon hat, Chinese skullcap& acc&"a photo of \{\}, a type of hat.''\\
		socks length & 3 & 491 & socks length &stockings, mid socks, short socks& acc&"a photo of \{\}, a type of socks.''\\
		socks type & 2 & 342 & socks type &foot socks, pantyhose& acc&"a photo of \{\}, a type of socks.''\\
		skirt length & 4 & 1493 & skirt length & short skirt, knee length skirt, over knee skirt, long skirt& acc&"a photo of \{\}, a type of skirt.''\\
		number of button rows & 2 & 394 & number of button rows&single-breasted, double-breasted& acc&"a photo of \{\}, a type of tops.''\\
		underwear style & 2 & 434 & underwear style & briefs, boxer shorts& acc&"a photo of \{\}, a type of underwear.''\\
		\bottomrule
	\end{tabular}
	\label{tab:data_fashion20}
\end{table*}

\textbf{Improving Generalization Ability.} SoftCPT exploits multi-task learning, which helps to improve the generalization ability of prompts. This point can be explained from the viewpoint of gradient descent. In the following, we consider the task-specified case with a linear sub-network and obtain the following proposition.
\begin{proposition}
	For task-specified case with a linear sub-network, the new context for class names after one step update of SGD can be represented as
	\begin{equation}
	\boldsymbol{S}'_t\simeq\boldsymbol{S}_t-\eta\sum\nolimits_{k=1}^T \boldsymbol{d}_{k} \langle\boldsymbol{g}_{k},\boldsymbol{g}_t\rangle + \boldsymbol{C}_t, \label{eq:st_new}
	\end{equation}
	where $\boldsymbol{S}'_t$, $\boldsymbol{S}_t$, $\eta$, $\boldsymbol{d}_{t}$, $\boldsymbol{g}_t$ and $\boldsymbol{C}_t$ are the new context of task $t$, old context of task $t$, learning rate, gradient of loss of task $t$ with respect to $\boldsymbol{S}_t$, task features of task $t$ and a constant matrix only related to task $t$, respectively. 
\end{proposition}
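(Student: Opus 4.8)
\emph{Proof plan.} The plan is to track how the derived class-name context $\boldsymbol{S}_t$ moves after one gradient step taken on the \emph{actual} trainable quantities, which in the task-specified case with a linear sub-network are the per-task task-contexts $\boldsymbol{u}_t$ (equivalently the vectors $[\text{U}]_m^t$ of Eq.~\ref{eq:ts_context}) together with the shared matrix $\boldsymbol{W}$ of Eq.~\ref{eq:task_feat_to_context}. Writing $\boldsymbol{g}_t=g(\boldsymbol{u}_t)$ and, up to the fixed reshaping, $\boldsymbol{S}_t=\boldsymbol{W}^T\boldsymbol{g}_t$, the structural fact I would exploit first is that the per-task loss $\mathcal{L}_t$ in Eq.~\ref{eq:total_loss} depends on the parameters only through $\boldsymbol{W}$ and $\boldsymbol{u}_t$ (the frozen encoder appears on the classifier side but carries no trainable weights). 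Hence $\partial\mathcal{L}/\partial\boldsymbol{u}_t=\partial\mathcal{L}_t/\partial\boldsymbol{u}_t$, while $\partial\mathcal{L}/\partial\boldsymbol{W}=\sum_{k=1}^T\partial\mathcal{L}_k/\partial\boldsymbol{W}$ is a sum over \emph{all} tasks coupled through the shared $\boldsymbol{W}$; this sum is exactly where the cross-task term of Eq.~\ref{eq:st_new} will originate.

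Next I would compute $\partial\mathcal{L}_k/\partial\boldsymbol{W}$ by the chain rule through $\boldsymbol{S}_k=\boldsymbol{W}^T\boldsymbol{g}_k$: since $\boldsymbol{S}_k$ is linear in $\boldsymbol{W}$, a one-line index calculation gives the outer product $\partial\mathcal{L}_k/\partial\boldsymbol{W}=\boldsymbol{g}_k\boldsymbol{d}_k^T$ with $\boldsymbol{d}_k:=\partial\mathcal{L}_k/\partial\boldsymbol{S}_k$ (the gradient $\boldsymbol{d}_k$ need not be made explicit — it already subsumes back-propagation through the frozen text encoder applied to $\boldsymbol{s}_{k,c}$). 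Therefore the SGD updates are $\boldsymbol{W}'=\boldsymbol{W}-\eta\sum_{k}\boldsymbol{g}_k\boldsymbol{d}_k^T$ and $\boldsymbol{u}_t'=\boldsymbol{u}_t-\eta\,\partial\mathcal{L}_t/\partial\boldsymbol{u}_t$. I would then substitute both into $\boldsymbol{S}_t'=(\boldsymbol{W}')^T g(\boldsymbol{u}_t')$. Because $g$ is nonlinear, I linearize the encoder around $\boldsymbol{u}_t$: $g(\boldsymbol{u}_t')=\boldsymbol{g}_t+\boldsymbol{\delta}_t+O(\eta^2)$ with $\boldsymbol{\delta}_t=-\eta\,J_t\,\partial\mathcal{L}_t/\partial\boldsymbol{u}_t$ and $J_t$ the Jacobian of $g$ at $\boldsymbol{u}_t$. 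Expanding $(\boldsymbol{W}')^T(\boldsymbol{g}_t+\boldsymbol{\delta}_t)$ and discarding all $O(\eta^2)$ contributions leaves three groups: the leading term $\boldsymbol{W}^T\boldsymbol{g}_t=\boldsymbol{S}_t$; the cross-task term $-\eta\sum_{k=1}^T\boldsymbol{d}_k(\boldsymbol{g}_k^T\boldsymbol{g}_t)=-\eta\sum_{k=1}^T\boldsymbol{d}_k\langle\boldsymbol{g}_k,\boldsymbol{g}_t\rangle$; and the remainder $\boldsymbol{W}^T\boldsymbol{\delta}_t$, which involves only $\boldsymbol{W}$ and task $t$'s own loss and task-context. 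Setting $\boldsymbol{C}_t:=\boldsymbol{W}^T\boldsymbol{\delta}_t$ reproduces Eq.~\ref{eq:st_new} exactly.

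The main obstacle — and the reason the statement is an approximation ($\simeq$) rather than an equality — is controlling the encoder linearization: one must argue that the neglected pieces (the $\eta^2$ cross terms $-\eta\,\boldsymbol{d}_k\langle\boldsymbol{g}_k,\boldsymbol{\delta}_t\rangle$ and the second-order Taylor remainder of $g$) are genuinely $O(\eta^2)$, which needs $g$ smooth near $\boldsymbol{u}_t$ together with boundedness of the relevant gradients and Hessian; for small $\eta$ this is routine but should be stated. A secondary, purely bookkeeping issue is keeping shapes consistent across the reshaping layer so that $\boldsymbol{S}_t$, $\boldsymbol{d}_k\langle\boldsymbol{g}_k,\boldsymbol{g}_t\rangle$ and $\boldsymbol{C}_t$ are all the same $d_{embed}\times L$ object; this is handled by noting $\text{Reshape}(\cdot)$ is a fixed linear isomorphism and may be suppressed throughout (equivalently, work with the $d_{embed}L$-vector form and reshape at the end). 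With these two points in place the displayed identity follows, and the reading recorded in Table~\ref{tab:task_similarity} — task $t$'s update aggregates every task $k$'s gradient $\boldsymbol{d}_k$ weighted by the task-feature proximity $\langle\boldsymbol{g}_k,\boldsymbol{g}_t\rangle$, so closely related task names share updates more strongly — is immediate.
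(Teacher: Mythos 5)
Your proposal is correct and follows essentially the same route as the paper's proof: the outer-product gradient $\partial\mathcal{L}_k/\partial\boldsymbol{W}=\boldsymbol{g}_k\boldsymbol{d}_k^T$, the per-task update of the task context, the first-order linearization of the encoder (your $J_t$ and $\boldsymbol{\delta}_t$ are the paper's $\boldsymbol{A}_t$ and $-\eta\boldsymbol{A}_t\boldsymbol{m}_t$, so $\boldsymbol{C}_t=\boldsymbol{W}^T\boldsymbol{\delta}_t=-\eta\boldsymbol{W}^T\boldsymbol{A}_t\boldsymbol{m}_t$ matches exactly), and the discarding of $O(\eta^2)$ terms. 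Your explicit attention to the smoothness conditions justifying the Taylor step and to the reshaping isomorphism is slightly more careful than the paper's; the only detail you omit is the paper's closing remark that the identity becomes exact when the task context is absent or not learned.
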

\begin{proof}
	Let us denote the task features for the $t$-th task as $\boldsymbol{g}_t$, which is generated by transforming the task context of the $t$-th task $[\text{U}]_1^t[\text{U}]_2^t\cdots[\text{U}]_M^t \triangleq \boldsymbol{U}_t$ with a vector-valued function $\phi(\cdot)$, i.e., $\boldsymbol{g}_t=\phi(\boldsymbol{U}_t)$. Note that, task names are implicitly included in $\phi$. With a linear sub-network, the generated context for class names can be represented as $\boldsymbol{W}^T\boldsymbol{g}_t\triangleq \boldsymbol{S}_t$, where $\boldsymbol{W}$ is a transform matrix. Without loss of generality, we assume that $\boldsymbol{g}_t$ is $L_2$-normalized and ignore the reshaping operator as both normalization and reshaping can be absorbed into the other parts of loss computation\footnote{Under this assumption, the relation between $\boldsymbol{S}_t$ and $[\text{S}]_1^t[\text{S}]_2^t\cdots[\text{S}]_L^t$ is $[\text{S}]_1^t[\text{S}]_2^t\cdots[\text{S}]_L^t=\text{Reshape}(\boldsymbol{S}_t)$.}. The total loss function can be represented as
	\begin{equation}
	\mathcal{L}=\sum\nolimits_{t=1}^{T} \sum\nolimits_{(\boldsymbol{x}, y)\in \mathcal{D}^\text{train}_t} loss(\boldsymbol{x}, y; \{\boldsymbol{\omega}_{t,c}\}),
	\end{equation}
	where $\boldsymbol{\omega}_{t,c}=g(\boldsymbol{s}_{t,c})$ is the classifier weight vector of the $c$-th class of task $t$, and $loss()$ denotes the loss function defined on one sample $(\boldsymbol{x}, y)$ sampling from the train set $\mathcal{D}^\text{train}_t$ of task $t$. $T$ is the total number of tasks. 
	
	The gradient of $\mathcal{L}$ with respect to $\boldsymbol{W}$ is
	\begin{align}
		\frac{\partial \mathcal{L}}{\partial \boldsymbol{W}}=\sum\nolimits_{t=1}^T \boldsymbol{g}_t \boldsymbol{d}_t^T,\;
		\boldsymbol{d}_t=\sum\nolimits_{(\boldsymbol{x}, y)\in \mathcal{D}^\text{train}_t} \frac{\partial loss}{\partial \boldsymbol{S}_t}.\label{eq:LwrtQ}
	\end{align}
	The gradient of $\mathcal{L}$ with respect to $\boldsymbol{U}_t$ is
	\begin{align}
		\frac{\partial \mathcal{L}}{\partial \boldsymbol{U}_t}=\sum\nolimits_{(\boldsymbol{x}, y)\in \mathcal{D}^\text{train}_t} \frac{\partial loss}{\partial \boldsymbol{U}_t} \triangleq \boldsymbol{m}_t,
	\end{align}
	Note that, given $\boldsymbol{W}$, this gradient is only dependent on data from the $t$-th task.
	
	The one step update equation of SGD in terms of $\boldsymbol{W}$ and $\boldsymbol{U}_t$ can be represented as
	\begin{align}
		\boldsymbol{W}'&=\boldsymbol{W}-\eta \sum\nolimits_{t=1}^T \boldsymbol{g}_t\boldsymbol{d}_t^T, \label{eq:Q_new}\\
		\boldsymbol{U}'_t&=\boldsymbol{U}_t-\eta \boldsymbol{m}_t,\label{eq:ut_update}
	\end{align}
	where $\eta$ is learning rate. By substituting Eq.~\ref{eq:ut_update} into $\boldsymbol{g}'_t=\phi(\boldsymbol{U}'_t)$, the new task features of task $t$ are represented as
	\begin{align}
		\boldsymbol{g}'_t&=\phi\left(\boldsymbol{U}_t-\eta \boldsymbol{m}_t\right)\approx \boldsymbol{g}_t - \eta \boldsymbol{A}_t \boldsymbol{m}_t.\label{eq:gt_new}
	\end{align}
	The approximation in Eq.~\ref{eq:gt_new} is based on the first-order Taylor expansion and $\boldsymbol{A}_t$ denotes the first-order gradient matrix of $\boldsymbol{g}_t$ with respect to $\boldsymbol{U}_t$. By substituting Eq.~\ref{eq:Q_new} and Eq.~\ref{eq:gt_new} into $\boldsymbol{S}'_t=\boldsymbol{W}'^T\boldsymbol{g}'_t$, we obtain
	\begin{align}
		\hspace{-14pt}
		\boldsymbol{S}'_t&\approx \left(\boldsymbol{W}^T-\eta \sum\nolimits_{k=1}^T \boldsymbol{d}_k\boldsymbol{g}_k^T\right)\left(\boldsymbol{g}_t -\eta \boldsymbol{A}_t \boldsymbol{m}_t\right)\nonumber\\
		&=\boldsymbol{S}_t-\eta \sum\limits_{k=1}^T \boldsymbol{d}_k\boldsymbol{g}_k^T\boldsymbol{g}_t-\eta\boldsymbol{W}^T\boldsymbol{A}_t \boldsymbol{m}_t+\eta^2 \sum\limits_{k=1}^T \boldsymbol{d}_k\boldsymbol{g}_k^T\boldsymbol{A}_t\boldsymbol{m}_t\nonumber\\
		&\approx \boldsymbol{S}_t-\eta\boldsymbol{W}^T\boldsymbol{A}_t\boldsymbol{m}_t-\eta\sum\nolimits_{k=1}^T \boldsymbol{d}_k\boldsymbol{g}_k^T\boldsymbol{g}_t.\label{eq:st_new1}
	\end{align}
	The second approximation in Eq.~\ref{eq:st_new1} holds as $\eta^2$ is much smaller than $\eta$. When the task context $\boldsymbol{U}_t$ is not learned or of length zero, Eq.~\ref{eq:st_new1} can be simplified into
	\begin{equation}
	\boldsymbol{S}'_t=\boldsymbol{S}_t-\eta\sum\nolimits_{k=1}^T \boldsymbol{d}_k\boldsymbol{g}_k^T\boldsymbol{g}_t.\label{eq:st_new2}
	\end{equation}
	Moreover, the equation is accurate. The proof is completed. 
\end{proof}
From Eq.~\ref{eq:st_new}, the learned context for class names of a certain task is not only determined by its own data but also by data from the other tasks. Actually, tasks with more similar task features will contribute more to this task. By multi-task training, a task can borrow information from related tasks to regularize its context, which is expected to be helpful for better generalization.

\textbf{Modeling Task Relationship.} In Table~\ref{tab:task_similarity}, we compare the prompt context's and task's proximity to facilitate more in-depth understanding of the proposed method. For two different tasks $k$ and $t$, the learned contexts of our method (denoted as `Soft' in Table~\ref{tab:task_similarity}) are $\text{Reshape}(\boldsymbol{W}^T\boldsymbol{g}_k)$ and $\text{Reshape}(\boldsymbol{W}^T\boldsymbol{g}_t)$, respectively. We use the $L_2$ distance to denote the distance between two contexts, i.e.,  $\|\text{Reshape}(\boldsymbol{W}^T\boldsymbol{g}_k)-\text{Reshape}(\boldsymbol{W}^T\boldsymbol{g}_t)\|=\|\boldsymbol{W}^T\boldsymbol{g}_t-\boldsymbol{W}^T\boldsymbol{g}_t\|$. Similarly, the distance between two contexts learned by our method without using text encoder (denoted as `Soft w/o TE' in Table~\ref{tab:task_similarity}) is also $\|\boldsymbol{W}^T\boldsymbol{g}_t-\boldsymbol{W}^T\boldsymbol{g}_t\|$. As for hard prompt sharing (denoted as `Hard' in Table~\ref{tab:task_similarity}), the prompt contexts for all tasks are identical. As such, the distance between any two contexts is 0. In the last column of the table, we use the $L_2$ distance to measure the proximity between two task features $\boldsymbol{g}_k$ and $\boldsymbol{g}_t$ ($k\neq t$), i.e., $\|\boldsymbol{g}_k-\boldsymbol{g}_t\|$. The difference between `Soft' and `Soft w/o TE' is that the former enforces the task feature to be in the output space of the text encoder ($\boldsymbol{g}_t=g(\boldsymbol{u}_t)$) while the latter does not impose any constraint on task features. We have two observations about SoftCPT.

First, SoftCPT imposes effective soft constraint between contexts. According to~\cite{VirmauxS18}, any MLPs with 1-Lipschitz activation functions are Lipschitz continuous. In other words, there exists a constant $\kappa$ such that $\|\boldsymbol{W}^T\boldsymbol{g}_k-\boldsymbol{W}^T\boldsymbol{g}_t\|\leq \kappa\|\boldsymbol{g}_k-\boldsymbol{g}_t\|$. This implies that the more similar the two tasks are, the more similar the prompt context will be. This can be explained as a soft constraint between prompt context of different tasks. As for `Soft w/o TE' (ref. Table~\ref{tab:task_similarity}), there is no constraint on task features, thus, the distance $\|\boldsymbol{g}_k-\boldsymbol{g}_t\|$ can be arbitrarily large, which will make the above soft constraint less effective. For `Soft', the task features are outputs of text encoders, thus, the distance between two task features is bounded, making the above soft constraint more effective.

Second, SoftCPT can model task relatedness. As pre-trained language model can effectively extract semantic features of text, task names with higher similarity yield closer task features, reflected in smaller distances between their respective feature vectors, i.e. $\|\boldsymbol{g}_k-\boldsymbol{g}_t\|$. Consequently, the generated contexts tend to cluster closely together. Conversely, when tasks exhibit greater dissimilarity, the upper bound on the distance between their contexts increases, enabling the two learned contexts move away from each other. Overall, the modeling of SoftCPT can effectively capture and reflect the task relatedness.

\section{Experiment}
\label{sec:exp}
We conduct extensive experiments on four datasets. In Section~\ref{ssec:datasets}, the datasets are detailed. In Section~\ref{ssec:compared_method}, we detail the various methods used for comparison. In Section~\ref{ssec:imp_detail}, the implementation details of different methods are presented. After that, Section~\ref{ssec:ablation} conducts ablation study. Lastly, in Section~\ref{ssec:main_results}, the comparison between our method and existing methods is presented.

\subsection{Datasets}
\label{ssec:datasets}
To more comprehensively verify the effectiveness of the proposed method, four multi-task few-shot datasets are constructed, ranging from generalized to specialized domain.

\textbf{General-10} is a generalized dataset built on ten publicly available classification datasets, including both coarse-grained and fine-grained tasks: Caltech101~\cite{Caltech101}, DTD~\cite{DTD}, EuroSAT~\cite{EuroSAT}, FGVCAircraft~\cite{FGVCAircraft}, Food101~\cite{Food101}, Flowers102~\cite{Flowers102}, Oxford-Pets~\cite{OxfordPets}, SUN397~\cite{SUN397}, StanfordCars~\cite{StanfordCars}, and UCF101~\cite{UCF101}. These datasets are also adopted in CoOp.  

\textbf{Plant-6} is a specialized dataset built on six public plant-related datasets: FruitVegetable~\cite{fruit_vegetable}, KaggleFlower~\cite{kaggle_flower}, KaggleMushroom~\cite{kaggle_mushroom}, PlantSeedling~\cite{plant_seedling}, KaggleVegetable~\cite{kaggle_vegetable}, and PlantVillage~\cite{plant_village}. 

\textbf{RS-8} is a specialized dataset built on eight public satellite/aerial image classification datasets: AID~\cite{xia2017aid}, RESISC45~\cite{RESISC45},  OPTIMAL~\cite{wang2018scene}, RSICB128~\cite{RSICB}, RSSCN7~\cite{RSSCN7}, NaSC-TG2~\cite{TG2}, UCMerced~\cite{UCMerced}, and WHURS19~\cite{Dai2011WHURS19}. 

\textbf{Fashion-20} is a specialized dataset for fashion classification (a key technique for product data governance on E-commerce platform), which is collected by us and has about 24K images in 20 tasks. All the images are obtained by searching on web using pre-defined keywords. Before human labeling, data cleaning is performed, e.g., removing similar or too small images. Example images of Fashion-20 are shown in Fig.~\ref{fig:fashion20_example_images}. 

All datasets are split according to the method in CoOp. More details of the datasets are given in Table~\ref{tab:data_general10_plant6_rs8} and Table~\ref{tab:data_fashion20}.
\begin{figure}[t!]
	\centering
	\includegraphics[width=0.95\columnwidth]{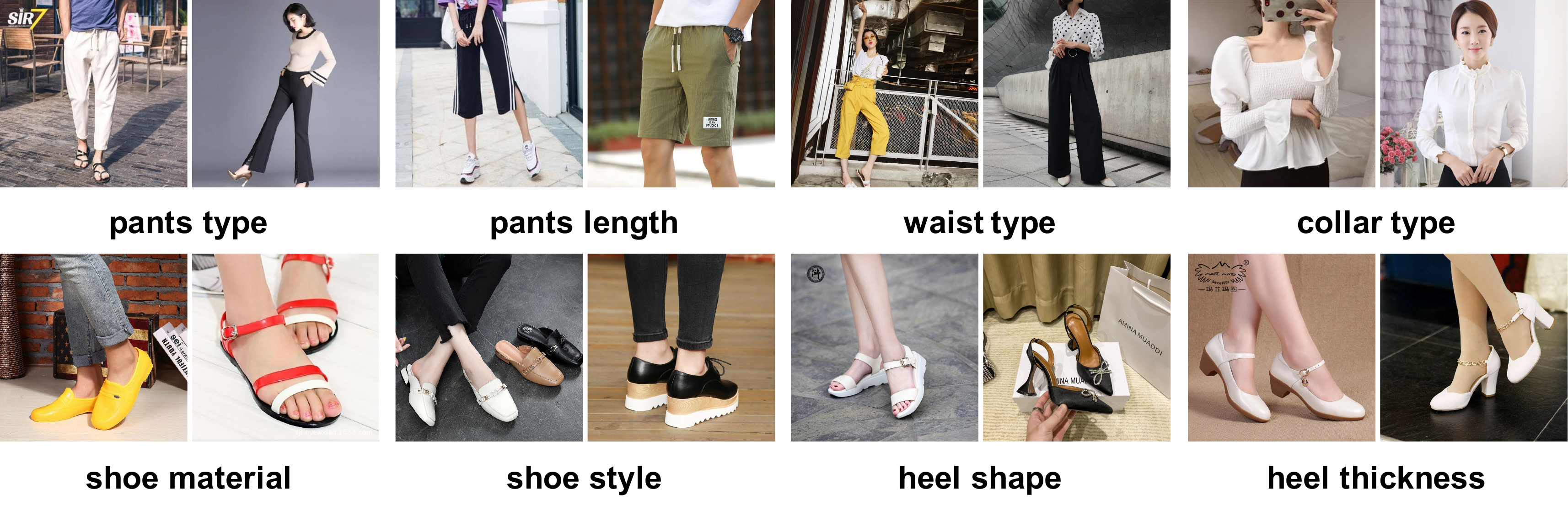}
	\caption{Some example images from the Fashion-20 dataset. Note that only 8 out of the 20 tasks are being displayed.}
	\label{fig:fashion20_example_images}
	\vspace{-5pt}
\end{figure}

\begin{figure}[t!]
	\centering
	\includegraphics[width=0.95\columnwidth]{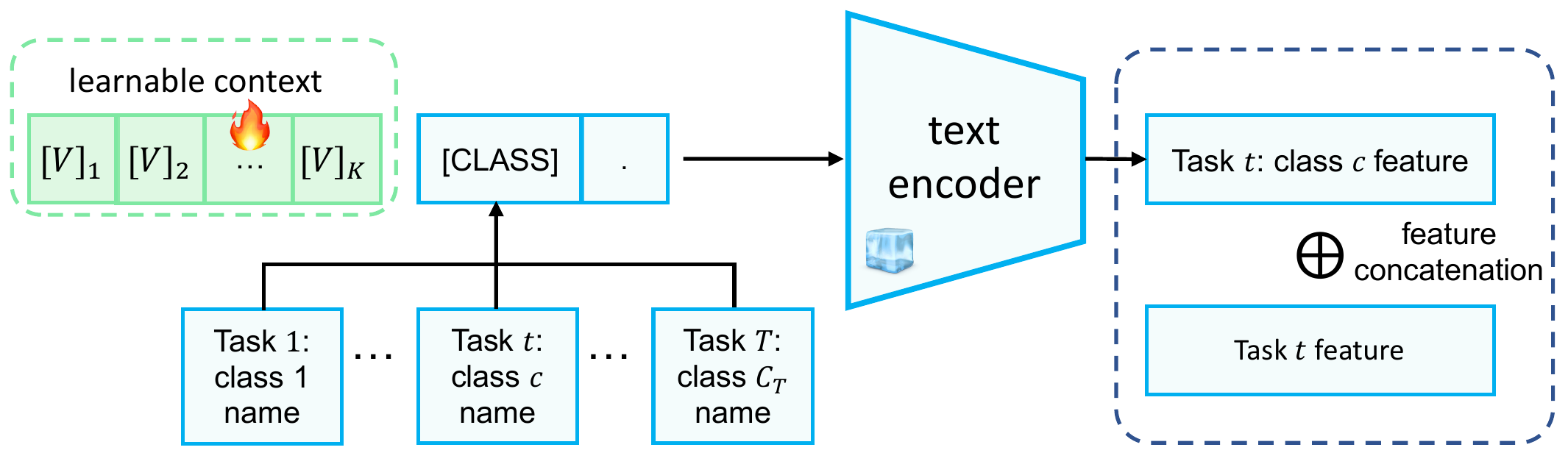}
	\caption{Illustration of adding class features to task features. The context $[V]_1[V]_2\cdots[V]_K$ of length $K$ is learned. For the $t$-th task, there are $C_t$ classes.}
	\label{fig:add_clss_feat}
\end{figure}

\subsection{Compared Methods}
\label{ssec:compared_method}
For clarity, we list the involved methods here: 1) \textbf{LP-CLIP} denotes the linear probe CLIP reported in CLIP~\cite{CLIP}, which is believed to be a strong baseline; 2) \textbf{CoOp-CA} denotes the class-agnostic CoOp applied to each task separately; 3) \textbf{CoOp-CS} denotes the class-specific CoOp applied to each task separately; 4) \textbf{CoOp-MT} denotes a shared prompt context for all tasks and all classes is learned in CoOp in a multi-task manner; 5) \textbf{ZS-CLIP} denotes zero-shot CLIP~\cite{CLIP}; 6) \textbf{CoCoOp}~\cite{zhou2022cocoop}; 7) \textbf{ProGrad}~\cite{ProGrad} is an optimized method of CoOp with good generalization ability; 8) \textbf{KgCoOp}~\cite{KgCoOp} is another improved method of CoOp; 9) \textbf{PLOT}~\cite{PLOT} is a state-of-the-art prompt tuning method based on optimal transport; 10) \textbf{ProtoNet}~\cite{ProtoNet} is a classical few-shot learning method.

As for our method, except the task-agnostic and task-specific SoftCPT mentioned in Section~\ref{sssec:meta_net}, we can also concatenate class name's features to task features before sending it to the sub-network. Fig.~\ref{fig:add_clss_feat} visualizes the process, in which the class features are extracted by CLIP's text encoder with a learnable context $[{V}]_1[{V}]_2\cdots[{V}]_K$ with $K$ the context length. Note that, this context can be class-agnostic or class-specific. Combined with the previous two cases, there are six possible cases for SoftCPT in total. Table~\ref{tab:config} lists all the configurations of SoftCPT. Note that the values in brackets are the number of different contexts ($[V]_1\cdots[V]_{K}$) or different task contexts ($[U]_1\cdots[U]_{M}$) to be learned. When $[{V}]_1[{V}]_2\cdots[{V}]_K$ is class-agnostic, only a single context of length $K$ is learned. While it is class-specific, $C$ different contexts need to be learned given there are $C$ classes in total.

\subsection{Implementation Details}
\label{ssec:imp_detail}
For all methods except ProtoNet, CLIP is used as the pre-trained vision-language model and ResNet-50 is the default image encoder. For SoftCPT, the task names in Table~\ref{tab:data_general10_plant6_rs8} and Table~\ref{tab:data_fashion20} are used. For the initialization of task context, we use the simplest random method because task-specified initialization is unnecessary as reported in a recent work~\cite{Waywardness}. For model optimization, SGD with an initial learning rate of 0.002 and cosine decay scheduler are used. The batch size is 32 and 50 epochs are trained on all datasets.

The experimental setup of classical few-shot methods like ProtoNet is different to ours. To make a valid comparison, we use the off-shelf dataset miniImageNet (100 classes) as the large train set for ProtoNet, and use the train set in CoOp for the few-shot reference set for it. We train ProtoNet on miniImageNet with 400 epochs and 8-way $k$-shot ($k= 1, 2, 4, 8, 16$) episode.

For model evaluation, the test set performances are reported for 1, 2, 4, 8, 16 shots. For few-shot classification, to reduce the variance, the scores over three trials with different seeds are averaged like CoOp. For General-10, the same evaluation metrics to CLIP are used. For all tasks of Plant-6, RS-8 and Fashion-20, top-1 accuracy is adopted. Besides linear probe and zero-shot recognition, test set metrics of the last checkpoint are reported, thus no validation is involved.

\begin{table}[thb!]
	\setlength{\tabcolsep}{7pt}
	\centering
	\footnotesize
	\caption{Six configurations of SoftCPT. $C$ is the total number of classes in all tasks. $[{V}]_1[{V}]_2\cdots[{V}]_K$ is the context introduced in Fig.~\ref{fig:add_clss_feat}. $[{U}]_1[{U}]_2\cdots[{U}]_M$ denotes the task context in Eq.~\ref{eq:ta_context} or Eq.~\ref{eq:ts_context}. CA: class-agnostic, CS: class-specified, TA: task-agnostic, TS: task-specified.}
	\begin{tabular}{ccc}
		\toprule
		$[{V}]_1[{V}]_2\cdots[{V}]_K$ & $[{U}]_1[{U}]_2\cdots[{U}]_M$ & Name \\
		\midrule
		class-agnostic ($1$) & task-agnostic ($1$) & SoftCPT-CATA\\
		class-specified ($C$) & task-agnostic ($1$) & SoftCPT-CSTA \\
		class-agnostic ($1$) & task-specified ($T$) & SoftCPT-CATS\\
		class-specified ($C$) & task-specified ($T$) & SoftCPT-CSTS\\
		\midrule
		N/A & task-agnostic ($1$) & SoftCPT-NATA\\
		N/A & task-specified ($T$) & SoftCPT-NATS\\
		\bottomrule
	\end{tabular}
	\label{tab:config}
\end{table}

\begin{table}[t!]
	\footnotesize
	\setlength{\tabcolsep}{5pt}
	\centering
	\caption{Average scores (\%) on there datasets under six configurations listed in Table~\ref{tab:config}. The results imply that adding class features to task features does not bring significant benefits.}
	\begin{tabular}{cccc}
		\toprule
		\textBF{Config} & \textBF{General-10} & \textBF{Plant-6} & \textBF{Fashion-20} \\
		\midrule
		SoftCPT-CATA & \pmscore{63.71}{0.69} & \pmscore{66.87}{1.33} & \pmscore{58.51}{1.39}\\
		SoftCPT-CSTA & \pmscore{63.11}{0.63} & {\boldpmscore{67.60}{0.91}} & \pmscore{57.81}{1.66}\\
		SoftCPT-CATS & \pmscore{64.40}{0.57} & \pmscore{67.04}{0.80} & \pmscore{59.38}{1.18}\\
		SoftCPT-CSTS & \pmscore{62.70}{0.72} & \pmscore{67.08}{1.77} & \pmscore{57.55}{1.16}\\
		\midrule
		SoftCPT-NATA & {\pmscore{67.16}{0.38}} & \pmscore{67.22}{1.11} & {\boldpmscore{62.14}{0.81}}\\
		SoftCPT-NATS & {\boldpmscore{67.22}{0.35}} & {\pmscore{67.47}{0.71}} & {\pmscore{61.95}{0.62}}\\
		\bottomrule
	\end{tabular}
	\label{tab:diff_configs}
\end{table}

\begin{table}[t!]
	\footnotesize
	\setlength{\tabcolsep}{13pt}
	\centering
	\caption{Average results (\%) of SoftCPT-NATA with different combinations of train and test sets. $\mathcal{D}^{\text{train/test}}_\text{G}$ and $\mathcal{D}^{\text{train/test}}_\text{P}$ denote a split of General-10 and Plant-6, respectively. ``$\mathcal{D}^\text{train}_\text{G}$ or $\mathcal{D}^\text{train}_\text{P}$'' denotes two models are trained on General-10 and Plant-6 independently.}
	\begin{tabular}{ccc}
		\toprule
		$\boldsymbol{\mathcal{D}}^\text{train}$ & $\boldsymbol{\mathcal{D}}^\text{test}$ & \textBF{Score} \\
		\midrule
		$\mathcal{D}^\text{train}_\text{G}$ & $\mathcal{D}^\text{test}_\text{G}$ & {\boldpmscore{67.16}{0.38}}\\
		$\mathcal{D}^\text{train}_\text{G}\cup\mathcal{D}^\text{train}_\text{P}$ & $\mathcal{D}^\text{test}_\text{G}$ & \pmscore{67.15}{0.34}\\
		\midrule
		$\mathcal{D}^\text{train}_\text{P}$ & $\mathcal{D}^\text{test}_\text{P}$ & \pmscore{67.22}{1.11}\\
		$\mathcal{D}^\text{train}_\text{G}\cup\mathcal{D}^\text{train}_\text{P}$ & $\mathcal{D}^\text{test}_\text{P}$ &{\boldpmscore{68.41}{0.95}}\\
		\midrule
		$\mathcal{D}^\text{train}_\text{G}$ or $\mathcal{D}^\text{train}_\text{P}$ & $\mathcal{D}^\text{test}_\text{G}\cup\mathcal{D}^\text{test}_\text{P}$ & \pmscore{67.18}{0.33}\\
		$\mathcal{D}^\text{train}_\text{G}\cup\mathcal{D}^\text{train}_\text{P}$ & $\mathcal{D}^\text{test}_\text{G}\cup\mathcal{D}^\text{test}_\text{P}$ & {\boldpmscore{67.62}{0.39}}\\
		\bottomrule
	\end{tabular}
	\label{tab:merge_more_tasks}
\end{table}

\begin{table}[t!]
	\footnotesize
	\setlength{\tabcolsep}{2.5pt}
	\centering
	\caption{Mean scores (\%) with two kinds of sub-network structures on Plant-6: Linear and MLP. $r$ denotes the reduction ratio of the first linear layer's dimension in MLP.}
	\begin{tabular}{lcccc}
		\toprule
		& \multirow{2}{*}{\textBF{Linear}} & \multicolumn{3}{c}{\textbf{MLP}}\\
		& & $r=1$ & $r=2$ & $r=4$\\
		\midrule
		Score & {\boldpmscore{67.22}{1.11}} & \pmscore{67.15}{1.03} &  \pmscore{66.65}{1.43} & \pmscore{66.06}{1.75}\\
		\bottomrule
	\end{tabular}
	\label{tab:diff_subnetwork}
\end{table}

\subsection{Ablation Study}
\label{ssec:ablation}
In this part, we investigate the effectiveness of the main components in our method.

\textbf{Is Adding Class Feature to Task Feature Necessary?}
We conduct experiments of SoftCPT with different configurations listed in Table~\ref{tab:config}. While learning the context $[{V}]_1\cdots[{V}]_{K}$, more gradients should be stored, thus more GPU memory is required. On General-10, more than 40G GPU memory is required for SoftCPT-CSTA, which is not affordable on most GPU cards. To reduce memory, 10\% of all class names are randomly sampled for loss computation. While on the other datasets, no sampling is used as there are not too many classes. The results are reported in Table~\ref{tab:diff_configs}. Clearly, without adding the class feature, SoftCPT already achieves good performances on all datasets. Considering the high computational cost, SoftCPT-CATA, SoftCPT-CSTA, SoftCPT-CATS and SoftCPT-CSTS are not recommended.

\textbf{Effect of Merging More Tasks.}
We merge General-10 and Plant-6 into a new dataset and conduct experiments to study if combining more tasks can bring extra benefits. The experimental results are listed in Table~\ref{tab:merge_more_tasks}. By excluding the influence of using task names as extra inputs, multi-task learning does contribute to improve overall performance.

\begin{table}[t!]
	\footnotesize
	\setlength{\tabcolsep}{11pt}
	\centering
	\caption{Comparison to prompt transferring. Rows 3-5 correspond to three different methods of transferring prompts between different tasks. }
	\begin{tabular}{lcc}
		\toprule
		\textBF{Method} & \textBF{General-10} (\%) & \textBF{Plant-6 (\%)} \\
		\midrule
		CoOp-CA & \pmscore{66.39}{0.52}& \pmscore{62.59}{2.33}\\
		Oracle & \pmscore{66.46}{0.44}& \pmscore{63.39}{1.63} \\
		EnsFeat & \pmscore{53.52}{0.40}& \pmscore{50.38}{0.66} \\
		EnsPred & \pmscore{54.65}{0.31}& \pmscore{51.27}{1.32}\\
		\midrule
		SoftCPT-NATA & {\boldpmscore{67.12}{0.39}}& {\boldpmscore{67.68}{1.18}}\\
		SoftCPT-NATS & {\boldpmscore{67.04}{0.23}}& {\boldpmscore{67.13}{1.15}}\\
		\bottomrule
	\end{tabular}
	\label{tab:result_transfer}
\end{table}

\begin{table}[t!]
	\footnotesize
	\setlength{\tabcolsep}{6pt}
	\centering
	\caption{Average scores (\%) over shots in the base-to-new generalization setting. `H' denotes the harmonic mean between the accuracies on base and new set.}
	\begin{tabular}{m{1.6cm}lcccc}
		\toprule
		\textBF{Data} & \textBF{Method} & \textBF{Base} & \textBF{New} & \textBF{H} \\
		\midrule
		\multirow{4}{*}{\parbox{1.6cm}{General-10}} & ZS-CLIP & 65.04 & {\textBF{69.62}}& 67.25\\
		& CoOp-CA & 71.40 & 60.15 & 65.29 \\
		& CoCoOp & 70.32 & 65.20 & {\textBF{67.66}}\\
		& SoftCPT-NATA & {\textBF{72.27}} & 62.29& 66.91\\
		\midrule
		\multirow{4}{*}{\parbox{1.6cm}{Plant-6}} & ZS-CLIP & 61.09 & {\textBF{60.64}} & 60.86\\
		& CoOp-CA & 71.80 & 58.25 & 64.32\\
		& CoCoOp & 70.03 & 58.81 & 63.93\\
		& SoftCPT-NATA & {\textBF{76.36}} & 59.40 & {\textBF{66.82}}\\
		\bottomrule
	\end{tabular}
	\label{tab:class_generalization}
\end{table}

\begin{figure*}[!t]
	\centering
	\hspace{10pt}
	\begin{subfigure}[t]{0.41\linewidth}
			\includegraphics[height=5.0cm]{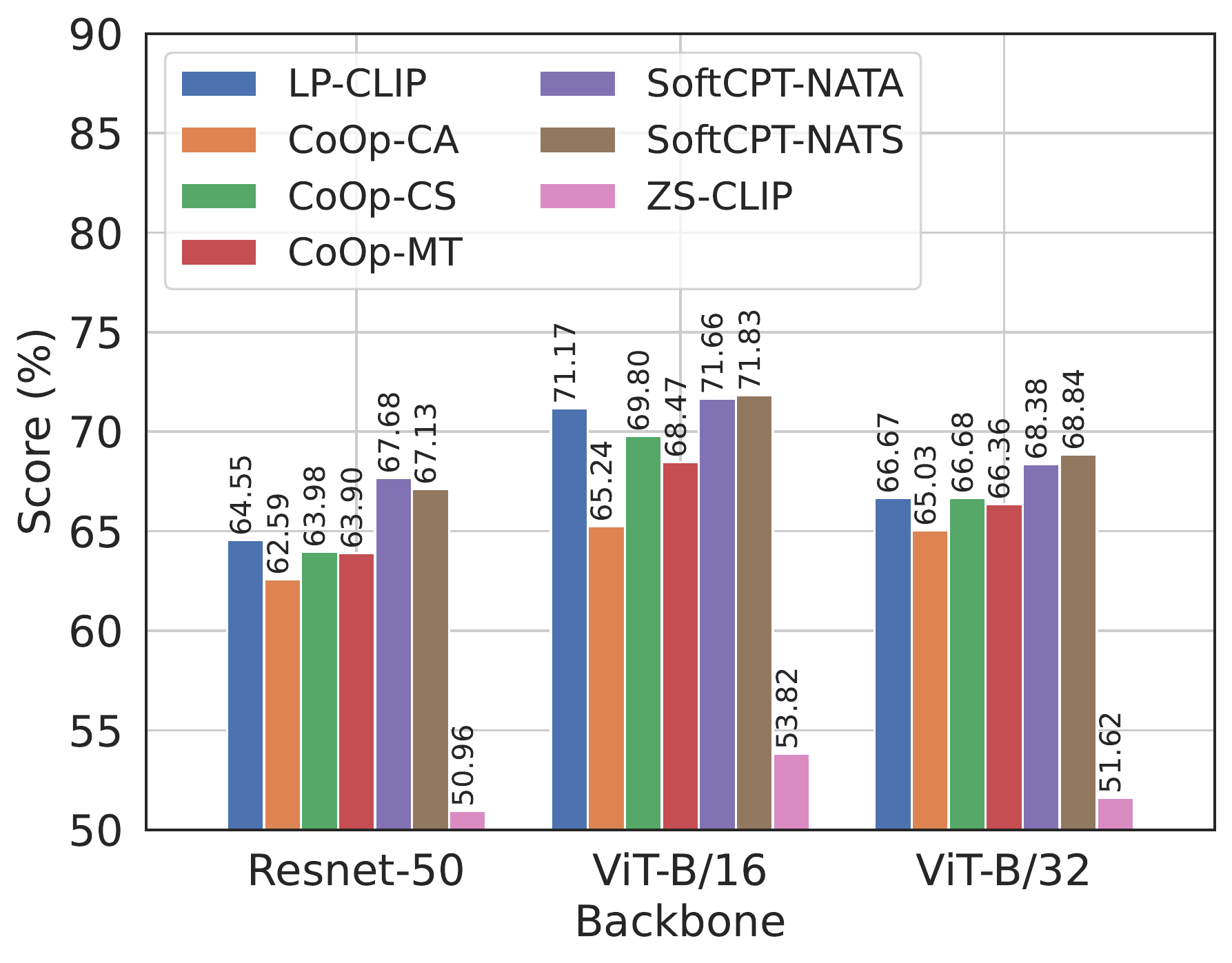}%
			\caption{}
		    \label{fig:diff_backbones_and_prompt_len_a}
	\end{subfigure}
	\begin{subfigure}[t]{0.41\linewidth}
		\includegraphics[height=5.0cm]{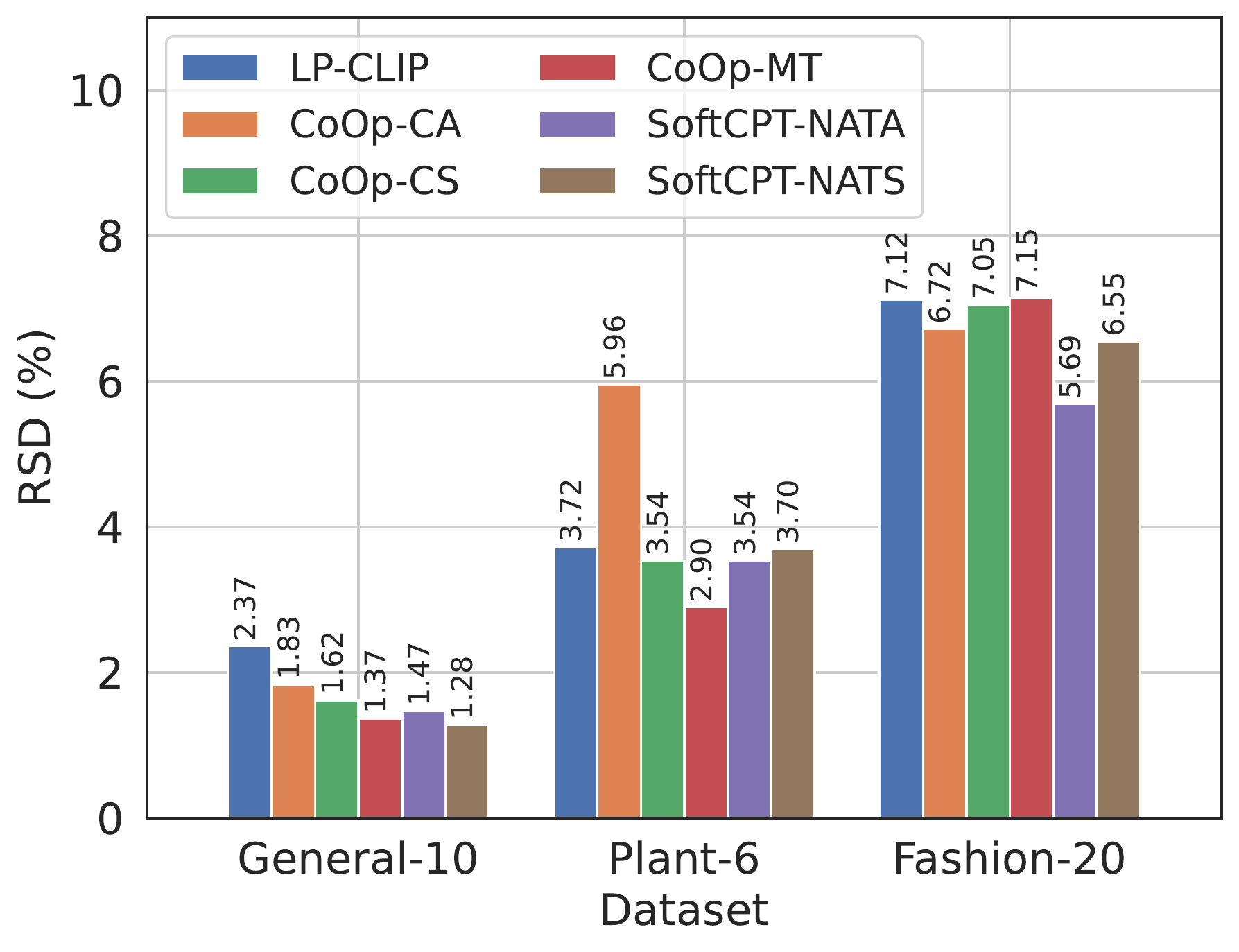}%
		\caption{}
		\label{fig:diff_backbones_and_prompt_len_b}
	\end{subfigure}
	\\
	\begin{subfigure}[t]{0.2\linewidth}
			\includegraphics[height=4.5cm,width=3.3cm]{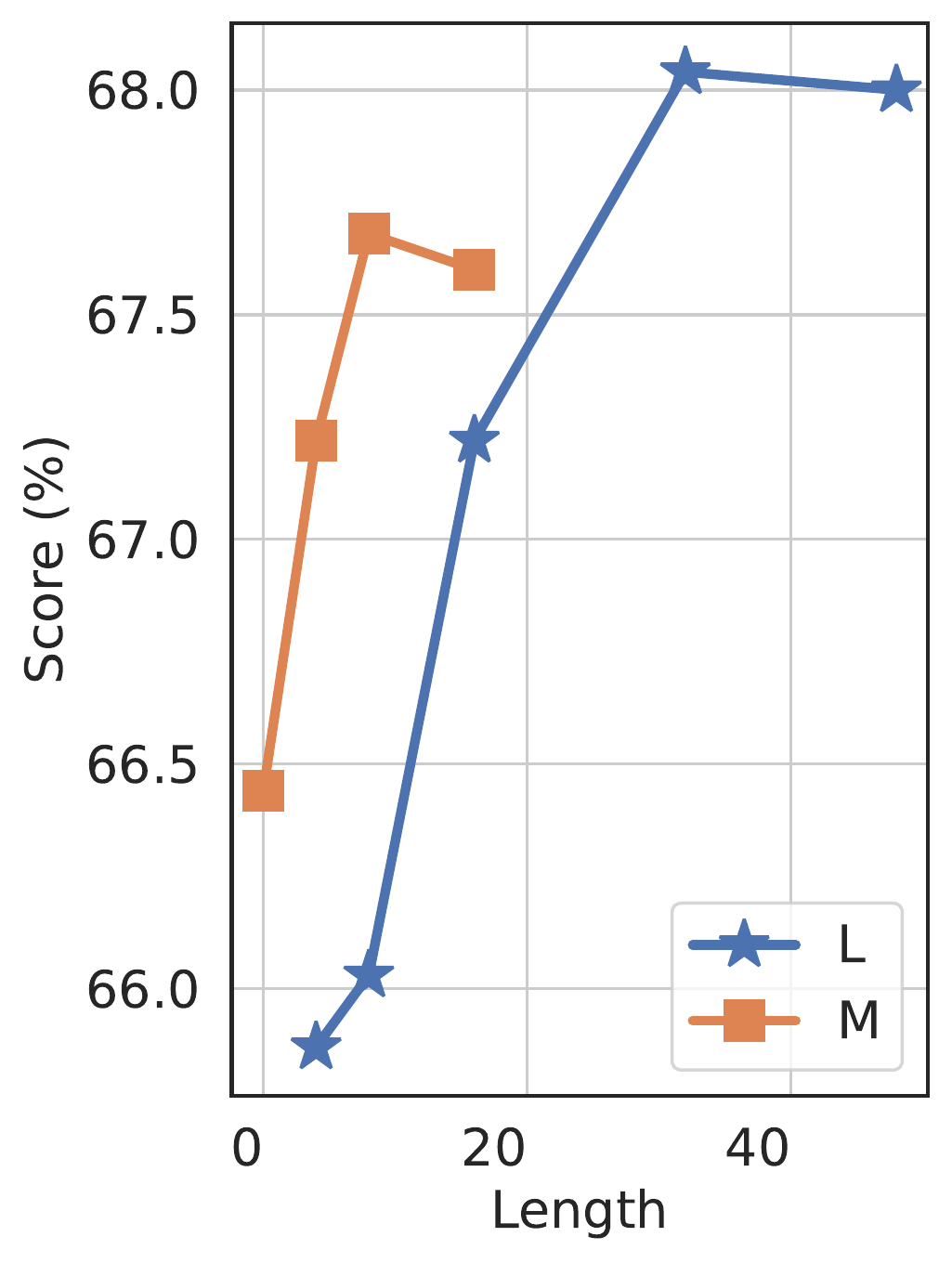}%
			\caption{}
		\label{fig:diff_backbones_and_prompt_len_c}
	\end{subfigure}
	\begin{subfigure}[t]{0.2\linewidth}
			\includegraphics[height=4.5cm,width=3.3cm]{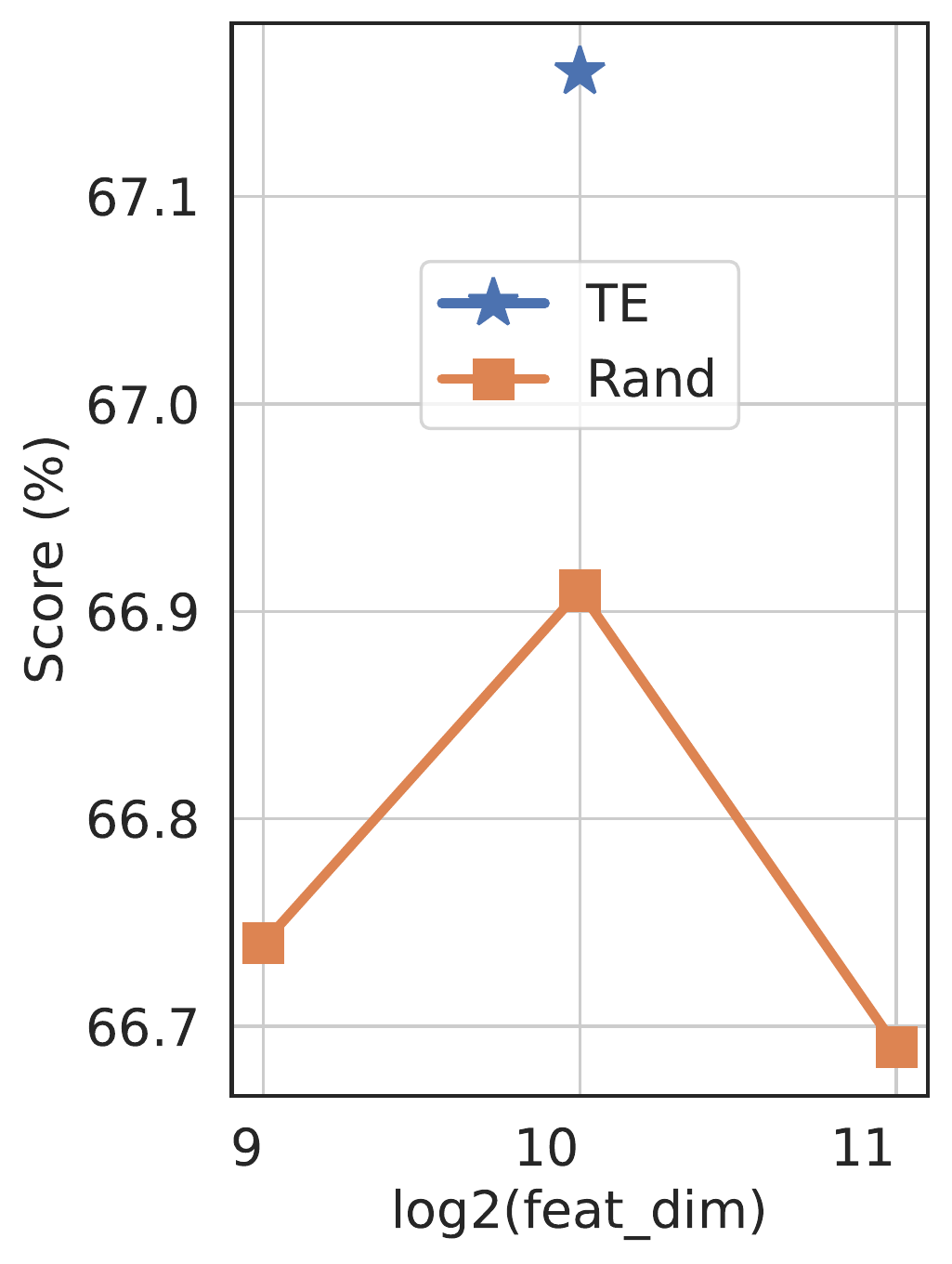}%
			\caption{}
		\label{fig:diff_backbones_and_prompt_len_d}
	\end{subfigure}
	\begin{subfigure}[t]{0.2\linewidth}
			\includegraphics[height=4.5cm,width=3.3cm]{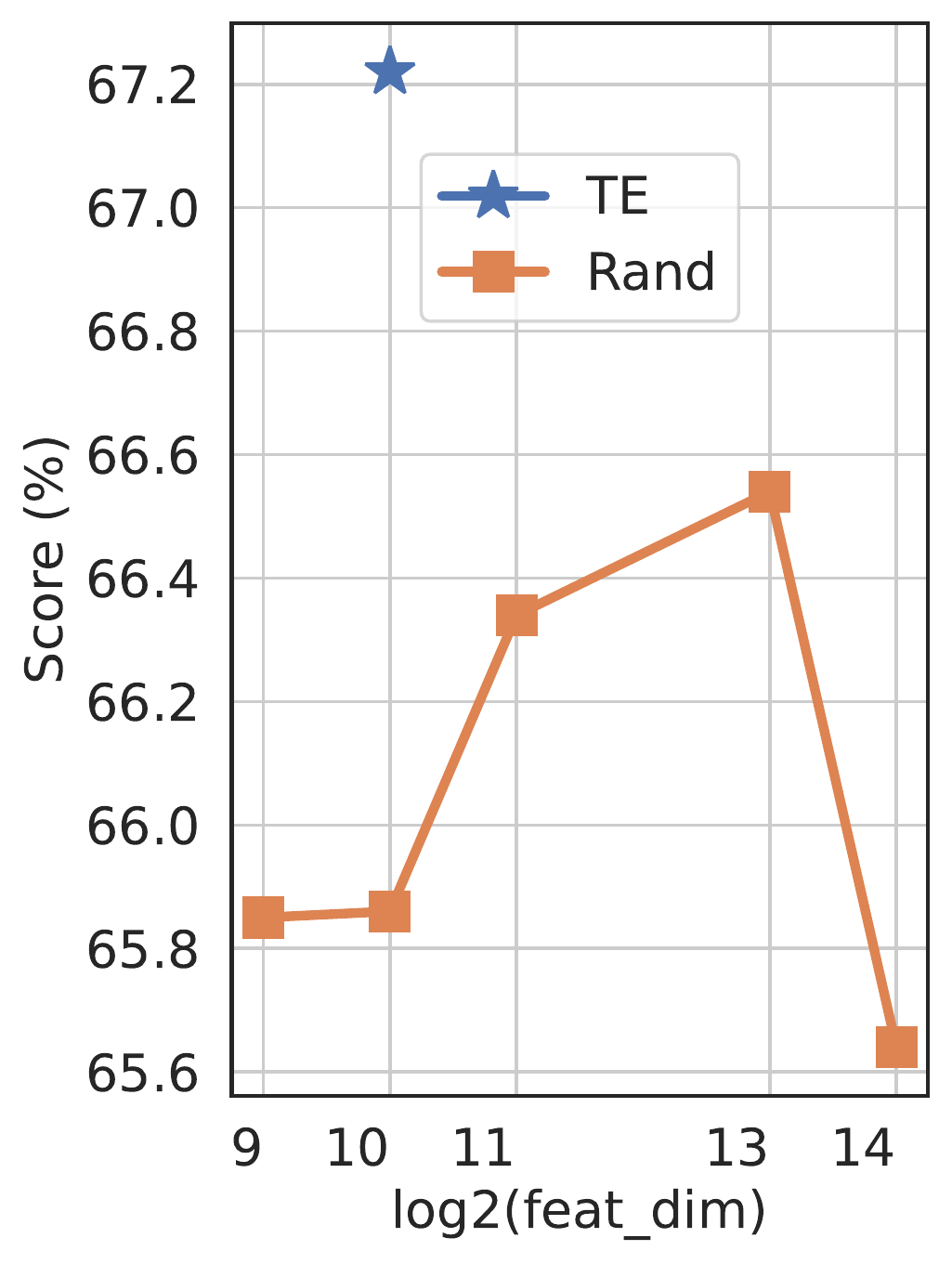}%
			\caption{}
		\label{fig:diff_backbones_and_prompt_len_e}
	\end{subfigure}
	\begin{subfigure}[t]{0.2\linewidth}
			\includegraphics[height=4.5cm,width=3.3cm]{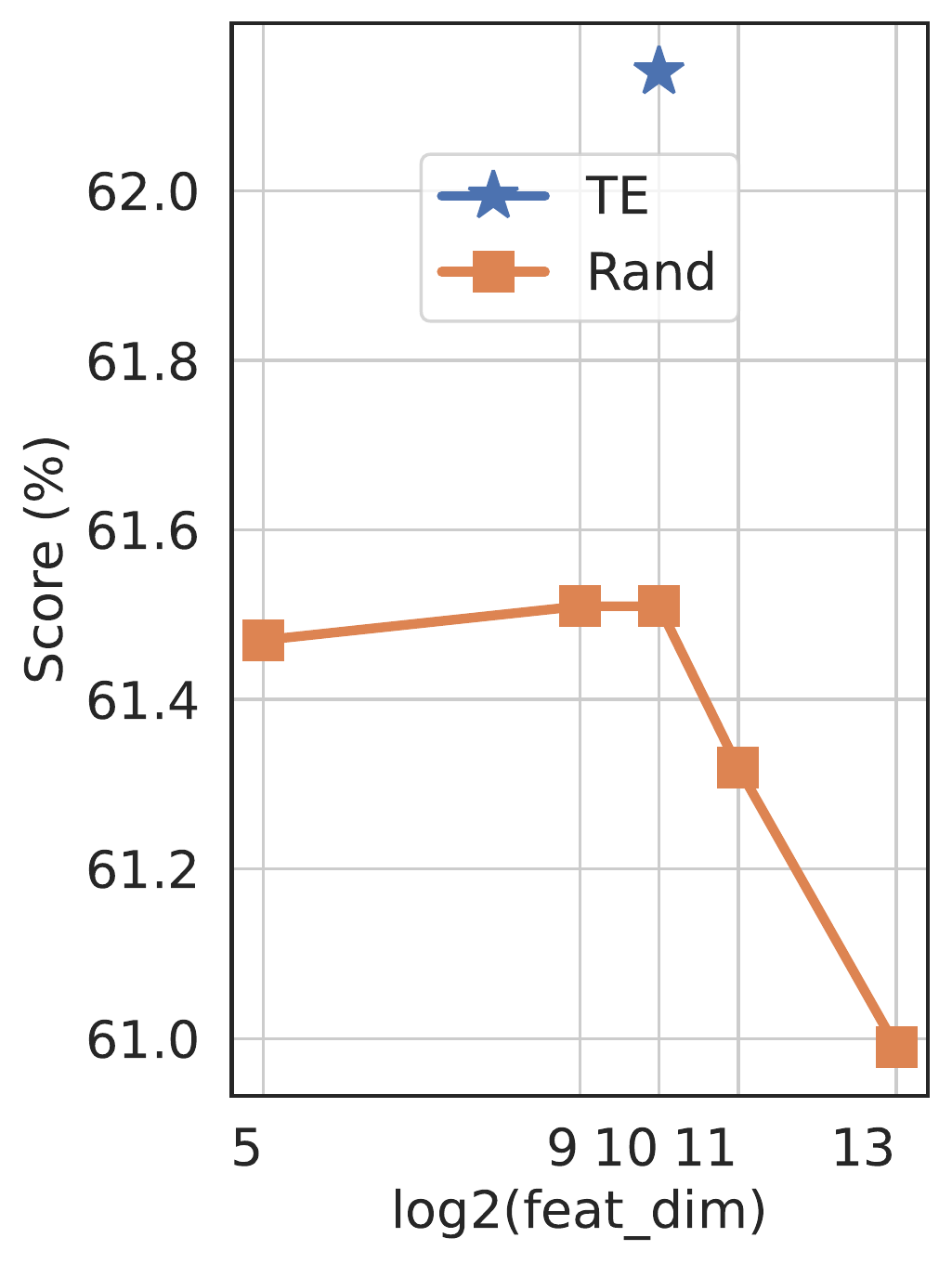}%
			\caption{}
		\label{fig:diff_backbones_and_prompt_len_f}
	\end{subfigure}
	\caption{(a) Results on Plant-6 with different backbones, (b) Relative Standard Deviation (RSD) on three datasets, (c) impact of varying prompt length on Plant-6, (d)-(f) comparison between task features learned with text encoder (`TE') and without text encoder from scratch (`Rand') on General-10, Plant-6 and Fashion-20, respectively. `feat\_dim' is the task feature's dimension.}
	\label{fig:diff_backbones_and_prompt_len}
	\vspace{-5pt}
\end{figure*}

\textbf{Different Sub-networks.} The effect of using different sub-networks for SoftCPT-NATA is studied. Two structures are tried: linear and MLP. For MLP, the structure is ``Linear-BN-ReLU-Linear''. The first linear layer reduces the input dimension by a ratio of $r$. The results are listed in Table~\ref{tab:diff_subnetwork}, which indicate that more complex sub-network does not work better than the simplest linear structure. One possible reason is that larger and deeper models are hard to fit in the few-shot setting.

\textbf{Prompt Transferring.} For CoOp, we check the transferability from the prompt context learned on one task to another. Transferring the prompt context is also a way to exploit the relation between tasks. We compare it to our method to see if it works better than multi-task prompts. Besides, if there exists the transferability, the test set score after transferring can be seen as a golden indicator for evaluating the similarity/transferability between tasks. The General-10 and Plant-6 datasets are adopted here due to their small task number. We train CoOp-CA on all $T$ (10 or 6) tasks and obtain $T$ prompt contexts. Three different transfer methods are tried: 1) \textbf{Oracle} computes $T$ test scores using all $T$ prompt contexts as initialization for each task, and returns the highest score as final test score for this task; 2) \textbf{EnsFeat} conducts a mean ensembling of $T$ classifier weights computed by using all prompt contexts as initialization and then uses the ensembled classifier for prediction; 3) \textbf{EnsPred} is similar to \textbf{EnsFeat} but the mean ensembling of prediction probabilities is used. We do not further fine-tune the prompts as this needs extra iterations compared to SoftCPT.

The results are listed in Table~\ref{tab:result_transfer}, from which the following conclusions can be drawn: 1) the prompts of CoOp are transferable; 2) transferring prompts is able to improve the performance slightly (by comparing CoOp-CA and Oracle); 3) the proposed multi-task prompt learning is superior to the simple prompt transferring mentioned above; 4) the pairwise transferring score (ref. Fig.~\ref{fig:corr_map}(a)) could be viewed as a good indicator in evaluating transferability between tasks due to the intuitive consistency to true task relatedness.

\textbf{Generalization From Base to New Classes.} We here study the generalizability across classes following the same setting in CoCoOp~\cite{zhou2022cocoop}. From the results in Table~\ref{tab:class_generalization}, we can see that SoftCPT outperforms CoOp on both base and new classes, demonstrating the generalizability of multi-task prompt tuning. Besides, SoftCPT outperforms CoCoOp significantly on Plant-6. We conjecture that when the distribution of image features deviates from the pre-trained distribution a lot, CoCoOp is hard to generate prompts of desired generalizability.

\textbf{Different Visual Backbones.} Fig.~\ref{fig:diff_backbones_and_prompt_len}(a) shows the results with different image encoders. SoftCPT acquires the best results with various backbones. Meanwhile, with ViT-B/16~\cite{ViT} and ViT-B/32 as the backbone, task-specific contexts for task name are slightly better than task-agnostic ones.

\textbf{Variance Reduction.} We compare the Relative Standard Deviation (RSD) of different methods. Due to the normalization by mean value, RSD can eliminate the effect of varying hardness of different tasks. Here, RSD is defined as $RSD=\frac{1}{5}\sum\nolimits_{i\in\{1,2,4,8,16\}} {STD_i}/{Score_i}$, where $STD_i$ and $Score_i$ are the mean over per-task standard deviations and mean over per-task mean scores for the $i$-shot setting. The results on different datasets are shown in Fig.~\ref{fig:diff_backbones_and_prompt_len}(b). We can see that, compared to other methods, SoftCPT acquires lower RSDs, which imply that it has more stable performance. We conjecture that such a stability is coming from the joint constraint on parameters from multiple tasks.

\textbf{Prompt Length.} The curves with varying prompt length $L$ and $M$ for SoftCPT-NATA on Plant-6 are shown in Fig.~\ref{fig:diff_backbones_and_prompt_len}(c). With increasing lengths, the score increases gradually at first and drops slightly after reaching the peak. For Plant-6, setting $L$ as 16 and $M$ as 8 can reach a good trade-off between speed and accuracy.

\textbf{Learning Task Features without Text Encoder.} In the proposed meta network, the task features are extracted by the text encoder. By removing the text encoder from meta network and using a learnable vector for each task, the task features can be learned from scratch. The results with and without the text encoder on three datasets are shown in Fig.~\ref{fig:diff_backbones_and_prompt_len}(d)-(f). We can observe that task features learned with text encoder are better, especially on datasets with tightly related tasks. This implies that the text encoder is important for learning the task relatedness for scarce training data. We conjecture the underlying reason is that the text encoder provides a strong prior of task relatedness.

\begin{table*}[t!]
	\footnotesize
	\setlength{\tabcolsep}{4pt}
	\renewcommand{\arraystretch}{1}
	\centering
	\caption{Comparison of state-of-the-art methods. For each dataset, parameter count and training time are computed on all its tasks. Four shots are used while measuring the training time.}
	\begin{tabular}{lcccccccc}
		\toprule
		\textBF{Method} & \#Params & Train Time/min& \textBF{1-shot} & \textBF{2-shot} & \textBF{4-shot} & \textBF{8-shot} & \textBF{16-shot} & \textBF{Avg.}\\
		
		\midrule
		\multicolumn{4}{l}{\textBF{General-10 (10 tasks, 1191 classes)}}\\
		ZS-CLIP~\cite{CLIP} &--&--&--&--&--&--&--&\pmscore{58.29}{0.00}\\
		ProtoNet~\cite{ProtoNet} &--&--& \pmscore{10.14}{0.44} & \pmscore{14.27}{0.28} & \pmscore{19.06}{0.15} & \pmscore{24.48}{0.17} & \pmscore{28.04}{0.19} & \pmscore{19.20}{0.25}\\
		LP-CLIP~\cite{CLIP} &--&--&\pmscore{37.31}{0.48}&\pmscore{49.05}{0.34}&\pmscore{59.15}{0.76}&\pmscore{66.53}{0.41}&\pmscore{72.77}{0.07}&\pmscore{56.96}{0.41} \\
		CoOp-CA~\cite{CoOp}&81,920 &49.53&{\pmscore{59.51}{0.81}}&\pmscore{61.83}{0.72}&\pmscore{66.22}{0.58}&\pmscore{70.39}{0.34}&\pmscore{73.98}{0.17}&\pmscore{66.39}{0.52}\\
		CoOp-CS~\cite{CoOp}&9,756,672&50.53&\pmscore{45.50}{0.08}&\pmscore{53.98}{0.55}&\pmscore{62.00}{0.27}&\pmscore{68.44}{0.21}&\pmscore{73.58}{0.04}&\pmscore{60.70}{0.23}\\
		CoOp-MT&8,192&66.72&\pmscore{58.35}{0.73}&\pmscore{60.86}{0.13}&\pmscore{63.04}{0.43}&\pmscore{66.06}{0.48}&\pmscore{68.83}{0.51}&\pmscore{63.43}{0.46}\\
		CoCoOp~\cite{zhou2022cocoop} &1,009,280&250.57& \pmscore{59.28}{0.31} & \pmscore{62.11}{0.50} & \pmscore{64.23}{0.56} & \pmscore{65.30}{2.22} & \pmscore{68.75}{0.20} & \pmscore{63.93}{0.76} \\
		ProGrad~\cite{ProGrad} &81,920&65.18&\underline{\pmscore{60.99}{0.31}} & \pmscore{62.95}{0.94} & \pmscore{66.67}{0.78} & \pmscore{70.77}{0.40} & \pmscore{73.69}{0.13} & \pmscore{67.01}{0.51}\\
		KgCoOp~\cite{KgCoOp} &20,480&41.53&\boldpmscore{62.73}{0.67} & {\boldpmscore{63.80}{0.13}} & \pmscore{65.40}{0.24} & \pmscore{68.64}{0.31} & \pmscore{70.82}{0.17} & \pmscore{66.28}{0.30}\\
		PLOT~\cite{PLOT} & 163,840 &60.36&{\pmscore{60.68}{0.14}} & \pmscore{63.11}{0.64} & \pmscore{67.13}{0.51} & \pmscore{70.79}{0.34} & \pmscore{74.20}{0.18}&\boldpmscore{67.18}{0.36}\\
		SoftCPT*&8,396,800&56.72&\pmscore{58.66}{0.77}&\pmscore{62.40}{0.77}&{\boldpmscore{67.79}{0.58}}&{\boldpmscore{71.46}{0.21}}&\underline{\pmscore{74.22}{0.10}}&\pmscore{66.91}{0.49}\\
		SoftCPT-NATS&8,421,376&59.03&\pmscore{59.23}{0.45}&{\pmscore{62.70}{0.21}}&\underline{\pmscore{67.67}{0.29}}&\pmscore{71.10}{0.06}&{\boldpmscore{74.51}{0.15}}&{\pmscore{67.04}{0.23}}\\
		\rowcolor{tabhighlight}
		SoftCPT-NATA &8,392,704&59.62& {\pmscore{59.47}{0.76}} & \underline{\pmscore{63.18}{0.39}} & {\pmscore{67.65}{0.53}} & \underline{\pmscore{71.15}{0.07}} & \pmscore{74.17}{0.21} & \underline{\pmscore{67.12}{0.39}}\\
		
		\midrule
		\multicolumn{4}{l}{\textBF{Plant-6 (6 tasks, 116 classes)}}\\
		ZS-CLIP~\cite{CLIP}&--&--&--&--&--&--&--&\pmscore{50.96}{0.00}\\
		ProtoNet~\cite{ProtoNet}&--&--& \pmscore{21.33}{0.58} & \pmscore{28.62}{0.60}& \pmscore{37.42}{0.62}&\pmscore{44.72}{0.28}&\pmscore{49.67}{0.17}&\pmscore{36.35}{0.45}\\
		LP-CLIP~\cite{CLIP}&--&--&\pmscore{44.54}{1.39}&\pmscore{56.45}{0.57}&\pmscore{66.91}{0.87}&\pmscore{74.30}{0.63}&{\boldpmscore{80.55}{0.21}}&\pmscore{64.55}{0.73}\\
		CoOp-CA~\cite{CoOp}&49,152&13.98&\pmscore{50.39}{4.45}&\pmscore{56.40}{1.52}&\pmscore{62.09}{2.24}&\pmscore{68.81}{2.13}&\pmscore{75.27}{1.29}&\pmscore{62.59}{2.33}\\
		CoOp-CS~\cite{CoOp}&950,272&12.53&\pmscore{43.15}{1.80}&\pmscore{55.78}{0.41}&\pmscore{66.54}{1.12}&\underline{\pmscore{74.71}{1.19}}&\underline{\pmscore{79.70}{0.18}}&\pmscore{63.98}{0.94}\\
		CoOp-MT&8,192&7.85&\pmscore{53.14}{1.11}&\pmscore{59.08}{0.76}&\pmscore{63.37}{0.85}&\pmscore{69.95}{0.51}&\pmscore{73.94}{0.61}&\pmscore{63.90}{0.77}\\
		CoCoOp~\cite{zhou2022cocoop}&605,568&25.42& {\pmscore{54.08}{2.04}} & \pmscore{57.55}{0.63} & \pmscore{60.42}{0.71} & \pmscore{65.12}{1.40} & \pmscore{69.45}{1.99} & \pmscore{61.33}{1.35} \\
		ProGrad~\cite{ProGrad}&49,152&9.08& \pmscore{52.56}{1.70} & \pmscore{58.14}{0.99} & \pmscore{60.74}{2.34} & \pmscore{67.81}{2.95} & \pmscore{73.04}{1.62} & \pmscore{62.46}{1.92}\\
		KgCoOp~\cite{KgCoOp}&12,288 &8.05&\boldpmscore{56.49}{0.56} & \pmscore{59.13}{0.84} & \pmscore{60.25}{1.00} & \pmscore{66.42}{0.70} & \pmscore{69.45}{0.50} & \pmscore{62.35}{0.72}\\
		PLOT~\cite{PLOT}& 196,608&8.57&\underline{\pmscore{55.82}{1.42}} & \pmscore{59.85}{0.88} & \pmscore{64.11}{1.44} & \pmscore{72.82}{0.44} & \pmscore{78.09}{0.36} & \pmscore{66.14}{0.91}\\
		SoftCPT*&8,396,800&5.50&\pmscore{53.99}{1.38}&\pmscore{58.82}{4.27}&\pmscore{66.65}{0.46}&\pmscore{72.19}{0.87}&\pmscore{77.67}{0.24}&\pmscore{65.86}{1.44}\\
		SoftCPT-NATS&8,421,376&7.50&\pmscore{52.28}{2.50}&{\boldpmscore{61.64}{1.42}}&\underline{\pmscore{68.28}{1.07}}&\pmscore{74.34}{0.26}&\pmscore{79.11}{0.49}&\underline{\pmscore{67.13}{1.15}}\\
		\rowcolor{tabhighlight}
		SoftCPT-NATA &8,392,704&5.83& {\pmscore{54.94}{2.33}} & \underline{\pmscore{61.58}{1.57}} & {\boldpmscore{68.72}{1.53}} & {\boldpmscore{74.75}{0.22}} & \pmscore{78.42}{0.27} & {\boldpmscore{67.68}{1.18}}\\
		
		\midrule
		\multicolumn{4}{l}{\textBF{RS-8 (8 tasks, 208 classes)}}\\
		ZS-CLIP~\cite{CLIP}&--&--&--&--&--&--&--&\pmscore{47.91}{0.00}\\
		ProtoNet~\cite{ProtoNet}&--&--& \pmscore{32.75}{0.49} & \pmscore{42.26}{0.32} & \pmscore{49.68}{0.22} & \pmscore{57.32}{0.68} & \pmscore{62.24}{0.23} & \pmscore{48.85}{0.39}\\
		LP-CLIP~\cite{CLIP}&--&--& \pmscore{54.04}{0.77} & \pmscore{64.92}{2.19} & \pmscore{76.14}{0.89} & \pmscore{82.52}{0.73} & \boldpmscore{87.39}{0.69} & \pmscore{73.00}{1.05}\\
		CoOp-CA~\cite{CoOp}&65,536&9.13& \pmscore{59.39}{1.68} & \pmscore{66.55}{0.63} & \pmscore{74.18}{0.51} & \pmscore{80.82}{0.25} & \pmscore{85.22}{0.26} & \pmscore{73.23}{0.67}\\
		CoOp-CS~\cite{CoOp}&1,703,936&9.48& \pmscore{52.09}{2.05} & \pmscore{65.45}{1.17} & \pmscore{74.74}{0.14} & \pmscore{82.19}{0.32} & \underline{\pmscore{87.05}{0.25}} & \pmscore{72.30}{0.79}\\
		CoOp-MT &8,192&7.27&\pmscore{62.44}{0.24} & \pmscore{70.39}{0.88} & \pmscore{74.74}{1.23} & \pmscore{79.41}{0.63} & \pmscore{83.62}{0.39} & \pmscore{74.12}{0.67}\\
		CoCoOp~\cite{zhou2022cocoop}&807,424&20.35&\pmscore{55.41}{0.74} & \pmscore{58.22}{2.59} & \pmscore{66.92}{0.61} & \pmscore{72.12}{0.86} & \pmscore{76.97}{0.03}&\pmscore{65.93}{0.97}\\
		ProGrad~\cite{ProGrad}&65,536&10.77& \pmscore{58.62}{0.76} & \pmscore{64.70}{1.47} & \pmscore{71.38}{2.12} & \pmscore{77.79}{1.22} & \pmscore{82.86}{0.29}&\pmscore{71.07}{1.17}\\
		KgCoOp~\cite{KgCoOp}&16,384&9.45&\pmscore{59.95}{0.59} & \pmscore{60.66}{0.36} & \pmscore{66.92}{0.15} & \pmscore{73.67}{0.49} & \pmscore{78.93}{0.07} & \pmscore{68.03}{0.33}\\
		PLOT~\cite{PLOT}& 262,144 &11.15& \underline{\pmscore{62.56}{0.43}} & \pmscore{68.16}{0.38} & \pmscore{76.27}{0.12} & \boldpmscore{83.07}{0.27} & \pmscore{87.02}{0.25} & \pmscore{75.42}{0.29}\\
		SoftCPT*&8,396,800&9.95&\pmscore{60.43}{0.38} & \pmscore{69.33}{0.80} & \pmscore{76.45}{0.67} & \pmscore{82.13}{0.34} & \pmscore{86.08}{0.20}&\pmscore{74.88}{0.48}\\
		SoftCPT-NATS &8,421,376&10.10& \pmscore{62.26}{1.28} & \underline{\pmscore{70.52}{0.46}}& \underline{\pmscore{77.77}{0.63}} & \underline{\pmscore{83.01}{0.62}} & \pmscore{86.99}{0.06}&\underline{\pmscore{76.11}{0.61}}\\
		\rowcolor{tabhighlight}
		SoftCPT-NATA &8,392,704&10.05&\boldpmscore{64.44}{0.43} & \boldpmscore{72.13}{1.02} & \boldpmscore{78.16}{0.28} & \pmscore{82.97}{0.88} &\pmscore{86.61}{0.17}&\boldpmscore{76.86}{0.56}\\
		
		\midrule
		\multicolumn{4}{l}{\textBF{Fashion-20 (20 tasks, 78 classes)}}\\
		ZS-CLIP~\cite{CLIP}&--&--&--&--&--&--&--&\pmscore{45.49}{0.00}\\
		ProtoNet~\cite{ProtoNet}&--&--& \pmscore{35.58}{0.20} & \pmscore{36.71}{1.54} & \pmscore{40.41}{0.39}& \pmscore{45.70}{0.23} & \pmscore{49.57}{0.33} & \pmscore{41.60}{0.54}\\
		LP-CLIP~\cite{CLIP}&--&--&\pmscore{48.30}{2.44}&\pmscore{54.50}{0.43}&\pmscore{61.02}{0.55}&\pmscore{65.71}{0.83}&\pmscore{71.00}{0.78}&\pmscore{60.11}{1.01}\\
		CoOp-CA~\cite{CoOp}&163,840&24.43&\pmscore{50.50}{0.63}&\pmscore{55.57}{0.40}&\pmscore{60.00}{0.56}&\pmscore{64.55}{1.66}&\pmscore{69.12}{2.32}&\pmscore{59.95}{1.11}\\
		CoOp-CS~\cite{CoOp}&638,976&23.17&\pmscore{46.49}{1.63}&\pmscore{50.16}{1.32}&\pmscore{57.04}{0.67}&\pmscore{63.28}{0.86}&\pmscore{68.85}{0.58}&\pmscore{57.16}{1.01}\\
		CoOp-MT&8,192&3.52&\pmscore{49.60}{1.99}&{\pmscore{57.01}{2.35}}&\pmscore{59.30}{0.49}&\pmscore{66.15}{1.11}&\pmscore{70.69}{0.89}&\pmscore{60.55}{1.37}\\
		CoCoOp~\cite{zhou2022cocoop}&2,018,560&11.17&\pmscore{50.67}{2.15} & \pmscore{55.44}{2.36} & \pmscore{57.53}{1.84} & \pmscore{64.42}{0.49} & \pmscore{68.69}{0.75} & \pmscore{59.35}{1.52} \\
		ProGrad~\cite{ProGrad}&163,840&20.02&\pmscore{52.38}{0.86} & \pmscore{56.17}{0.53} & {\pmscore{62.11}{0.41}}&\pmscore{65.30}{1.52}&\pmscore{68.01}{1.56}&\pmscore{60.79}{0.98}\\
		KgCoOp~\cite{KgCoOp}& 40,960 &18.95&\underline{\pmscore{54.65}{0.74}} & \boldpmscore{59.26}{1.26}&\boldpmscore{62.35}{0.69}&\pmscore{64.90}{0.71}&\pmscore{68.70}{0.37}&\underline{\pmscore{61.97}{0.75}}\\
		PLOT~\cite{PLOT}& 655,360&18.13&\pmscore{52.46}{0.81} & \underline{\pmscore{57.25}{0.77}} & \pmscore{61.75}{1.88} & \underline{\pmscore{67.46}{0.60}} & \pmscore{70.66}{0.54}&{\pmscore{61.92}{0.92}}\\
		SoftCPT*&8,396,800&6.23&{\pmscore{51.41}{0.36}}&{\pmscore{56.08}{1.54}}&{\pmscore{61.88}{0.68}}&\pmscore{66.82}{0.27}&\pmscore{71.38}{0.82}&{\pmscore{61.51}{0.73}}\\
		SoftCPT-NATS&8,421,376&6.47&\pmscore{49.49}{1.65}&\pmscore{55.60}{1.78}&\pmscore{60.29}{1.21}&{\boldpmscore{67.47}{0.59}}&\underline{\pmscore{71.57}{0.16}}&\pmscore{60.88}{1.08}\\
		\rowcolor{tabhighlight}
		SoftCPT-NATA&8,392,704&8.00&{\boldpmscore{55.90}{0.98}}&{\pmscore{56.07}{1.17}}&\underline{\pmscore{62.26}{0.26}}&{\pmscore{67.34}{0.38}}&{\boldpmscore{72.18}{0.19}}& {\boldpmscore{62.75}{0.60}}\\
		\bottomrule
	\end{tabular}
	\label{tab:main_numerical_results}
\end{table*}

\subsection{Main Results}
\label{ssec:main_results}
The main results of different methods on four datasets are listed in Table~\ref{tab:main_numerical_results}. For SoftCPT, we report the results of \textbf{SoftCPT-NATA} and \textbf{SoftCPT-NATS} as they could acquire desirable performance with lower computational cost. As a comparison, we also report a variant of SoftCPT, i.e., \textbf{SoftCPT*}. It is the method that learns task features from scratch without using the text encoder in the meta network. We have the following observations from the results.

\textbf{CoOp-CA vs CoOp-CS.} It is evident that CoOp-CA surpasses CoOp-CS in terms of average accuracy on all datasets. Given that CoOp-CA employs fewer parameters, it exhibits greater data efficiency. This is corroborated by its higher scores with fewer training samples. Nevertheless, as the amount of training data increases, the performance gap between the two methods narrows.

\begin{figure*}[t!]
	\centering
	\begin{subfigure}[t]{0.24\linewidth}
		\centering
		\includegraphics[width=1.6in]{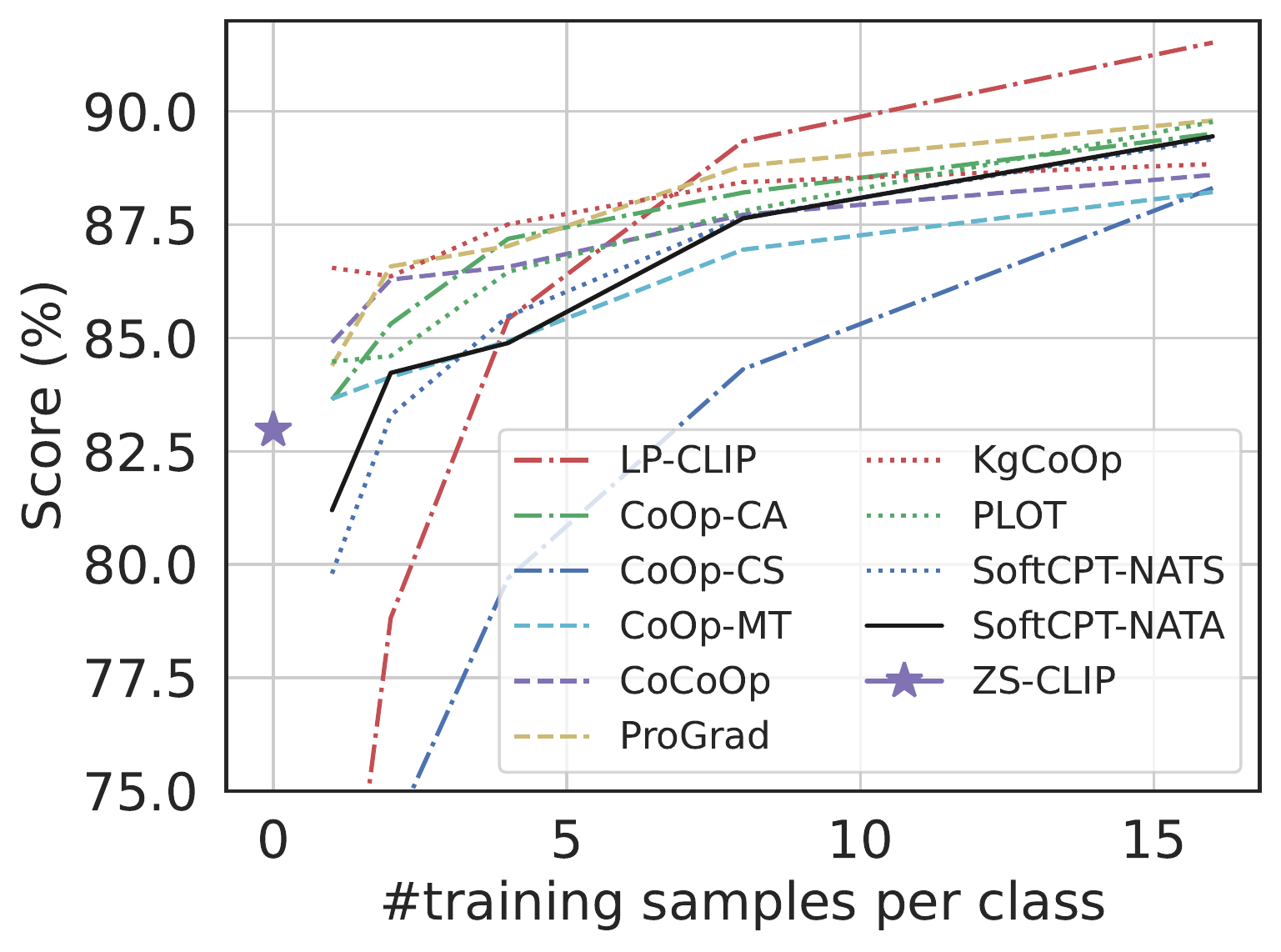}
		\caption{Caltech101}
	\end{subfigure}
	\begin{subfigure}[t]{0.24\linewidth}
		\centering
		\includegraphics[width=1.6in]{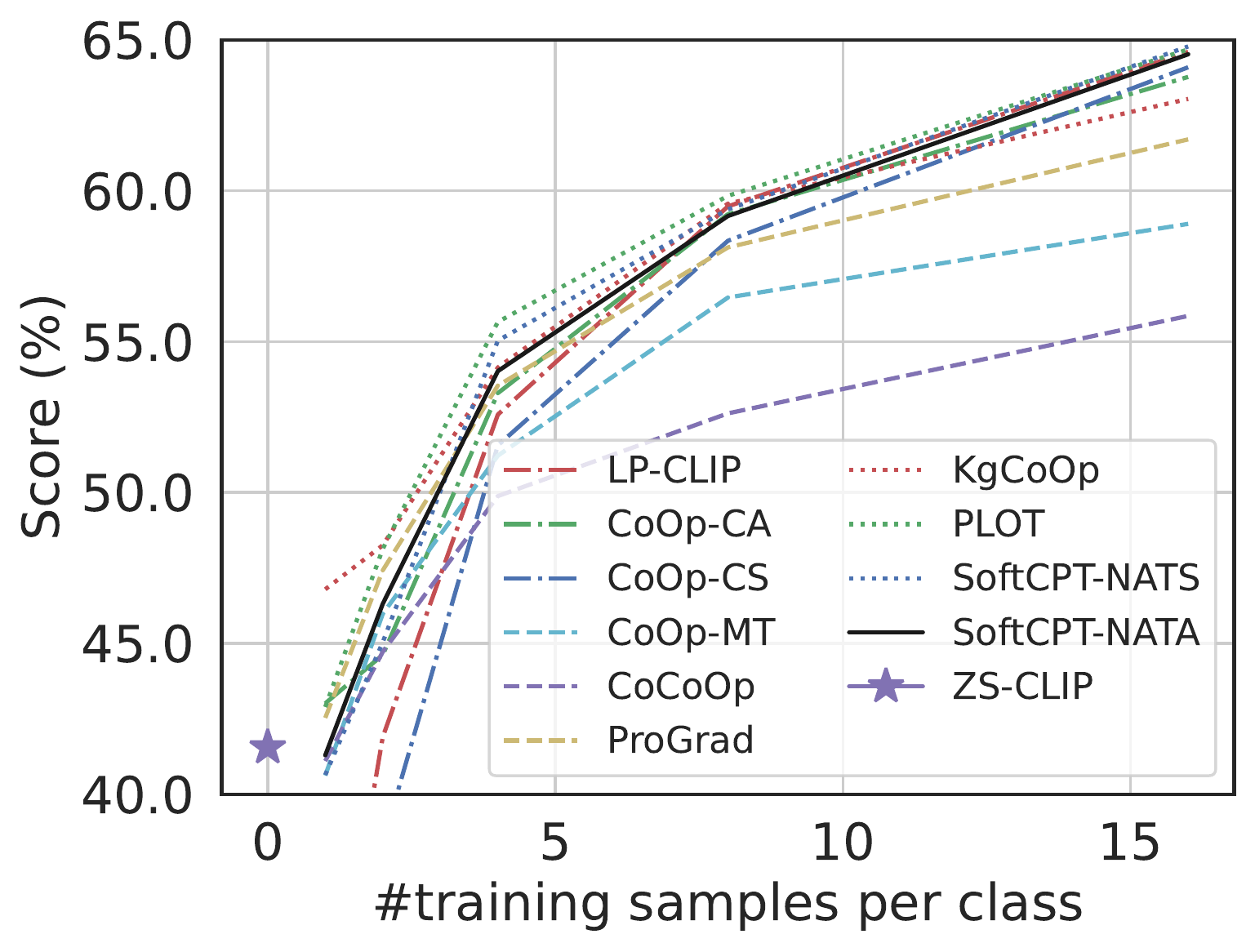}
		\caption{DTD}
	\end{subfigure}
	\begin{subfigure}[t]{0.24\linewidth}
		\centering
		\includegraphics[width=1.6in]{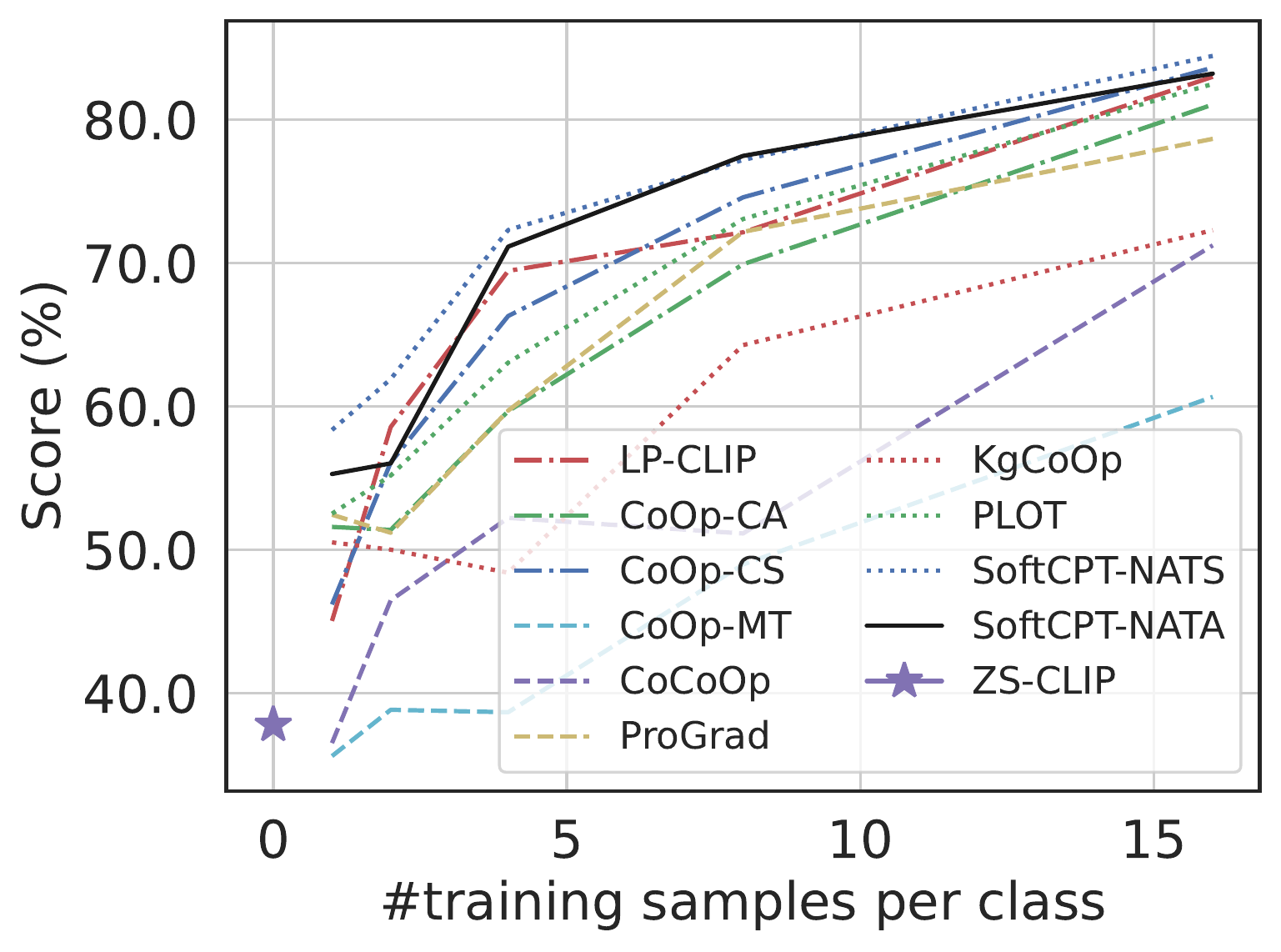}
		\caption{EuroSAT}
	\end{subfigure}
	\begin{subfigure}[t]{0.24\linewidth}
		\centering
		\includegraphics[width=1.6in]{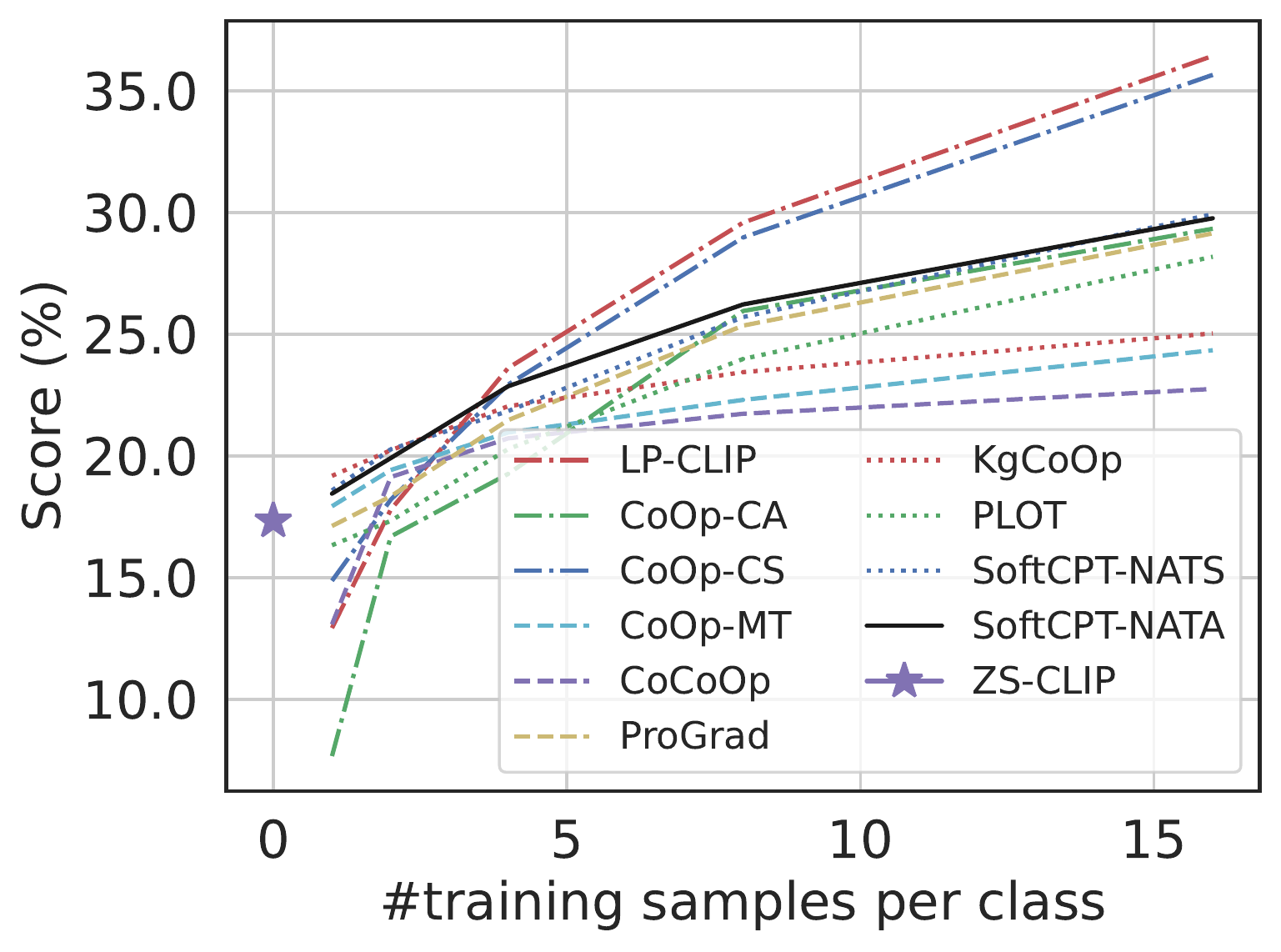}
		\caption{FGVCAircraft}
	\end{subfigure}
	\\
	
	\begin{subfigure}[t]{0.24\linewidth}
		\centering
		\includegraphics[width=1.6in]{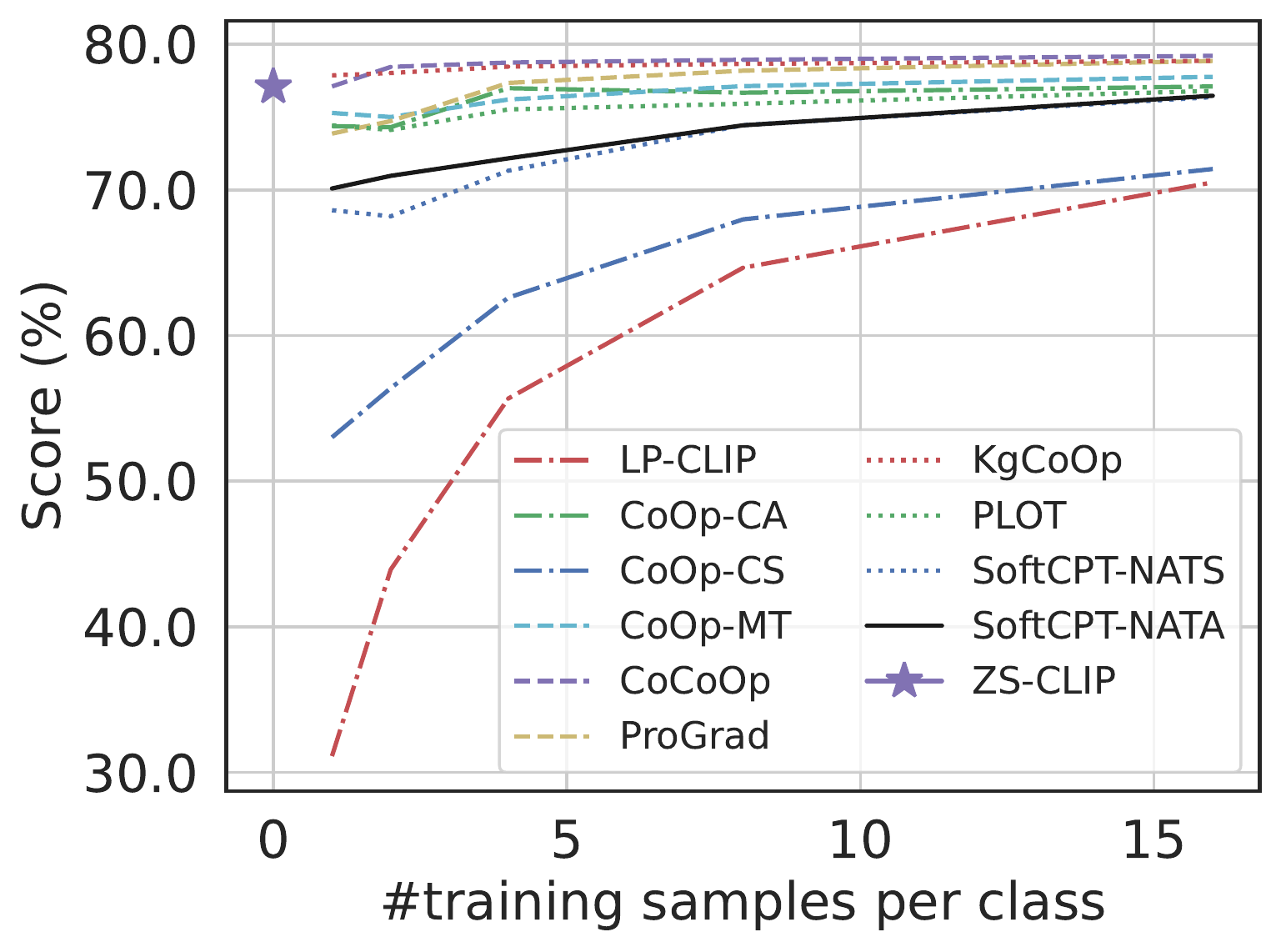}
		\caption{Food101}
	\end{subfigure}
	\begin{subfigure}[t]{0.24\linewidth}
		\centering
		\includegraphics[width=1.6in]{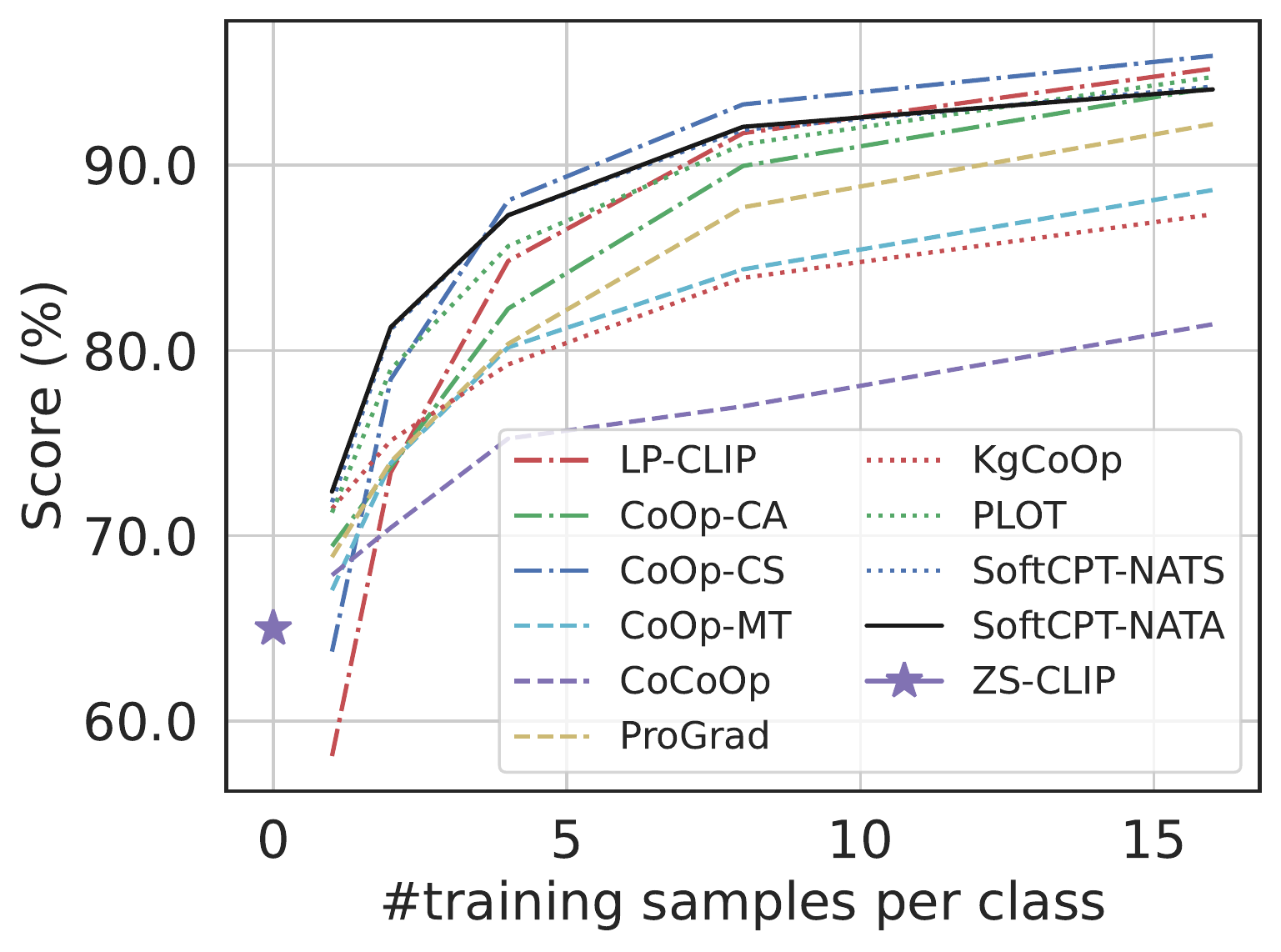}
		\caption{Flowers102}
	\end{subfigure}
	\begin{subfigure}[t]{0.24\linewidth}
		\centering
		\includegraphics[width=1.6in]{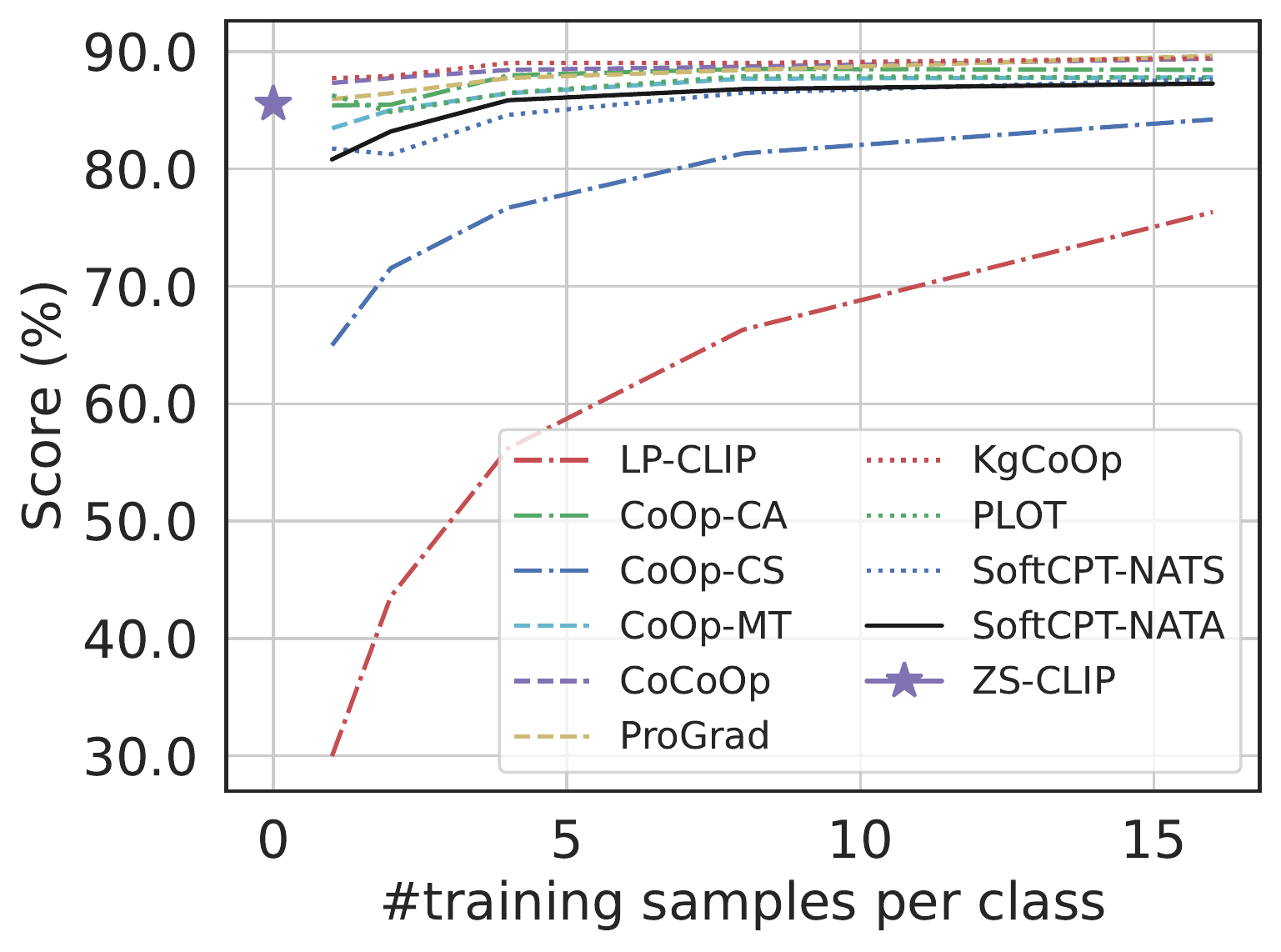}
		\caption{Oxford-Pets}
	\end{subfigure}
	\begin{subfigure}[t]{0.24\linewidth}
		\centering
		\includegraphics[width=1.6in]{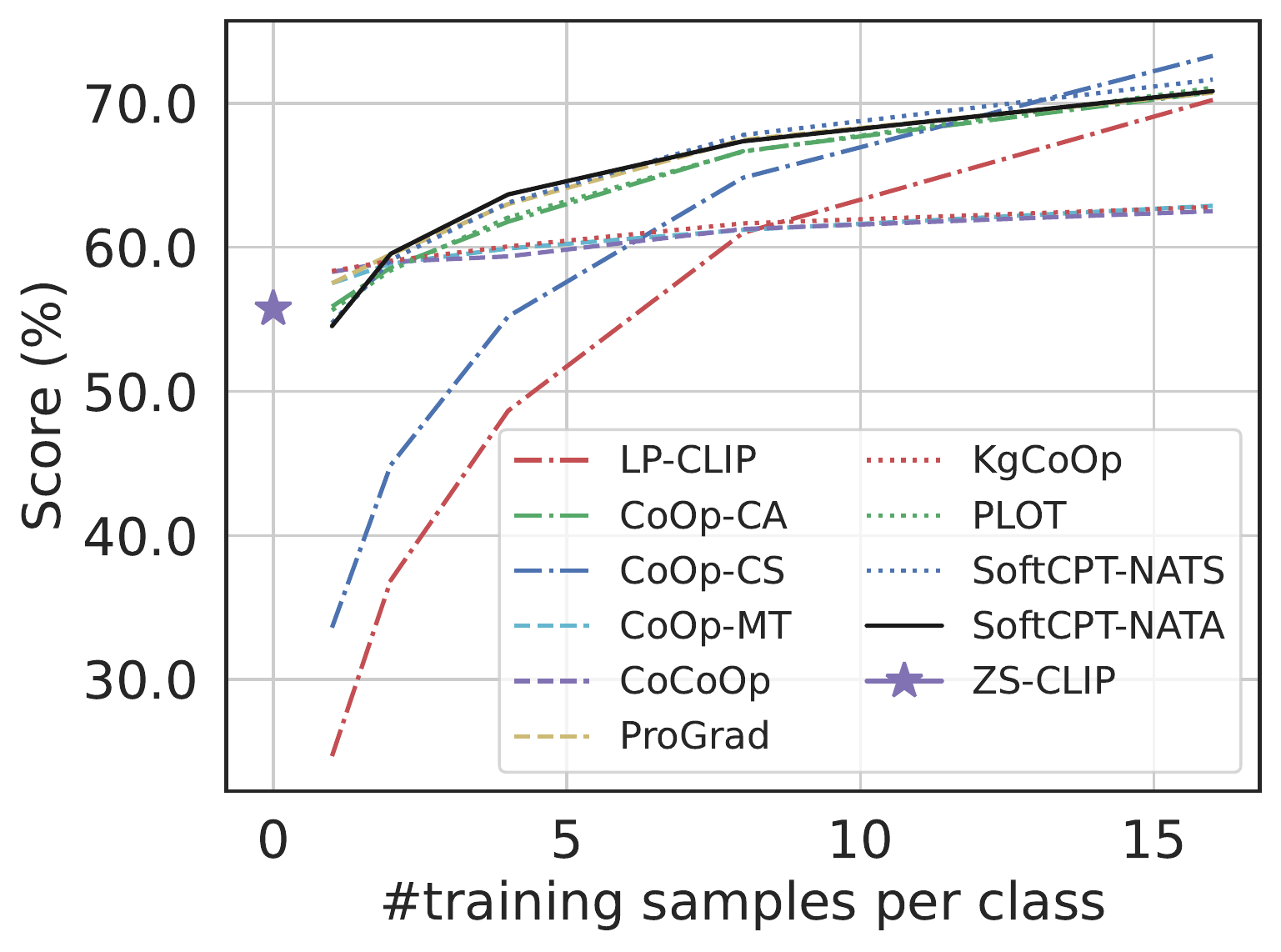}
		\caption{StanfordCars}
	\end{subfigure}
	\\
	
	\begin{subfigure}[t]{0.24\linewidth}
		\centering
		\includegraphics[width=1.6in]{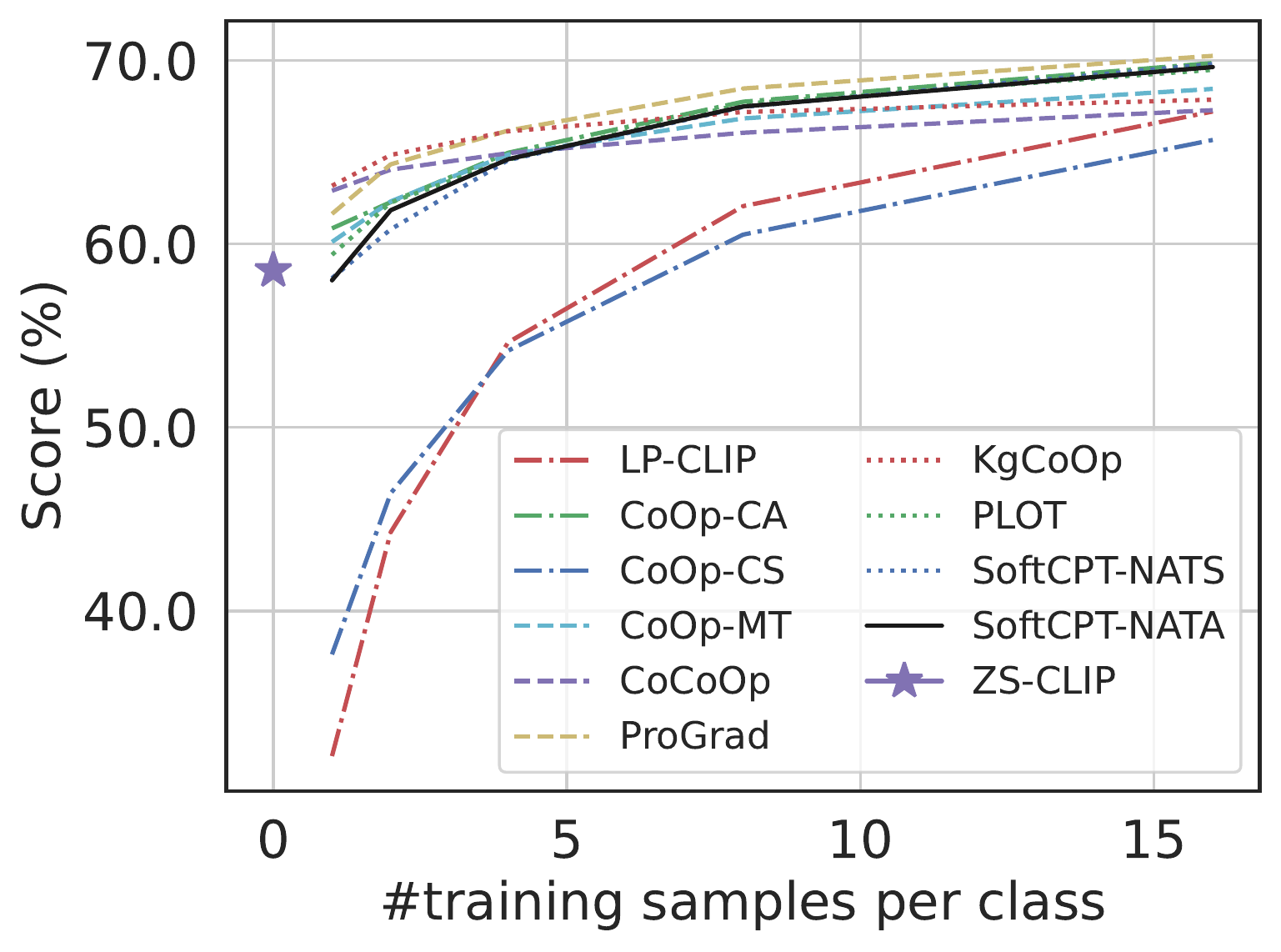}
		\caption{SUN397}
	\end{subfigure}
	\begin{subfigure}[t]{0.24\linewidth}
		\centering
		\includegraphics[width=1.6in]{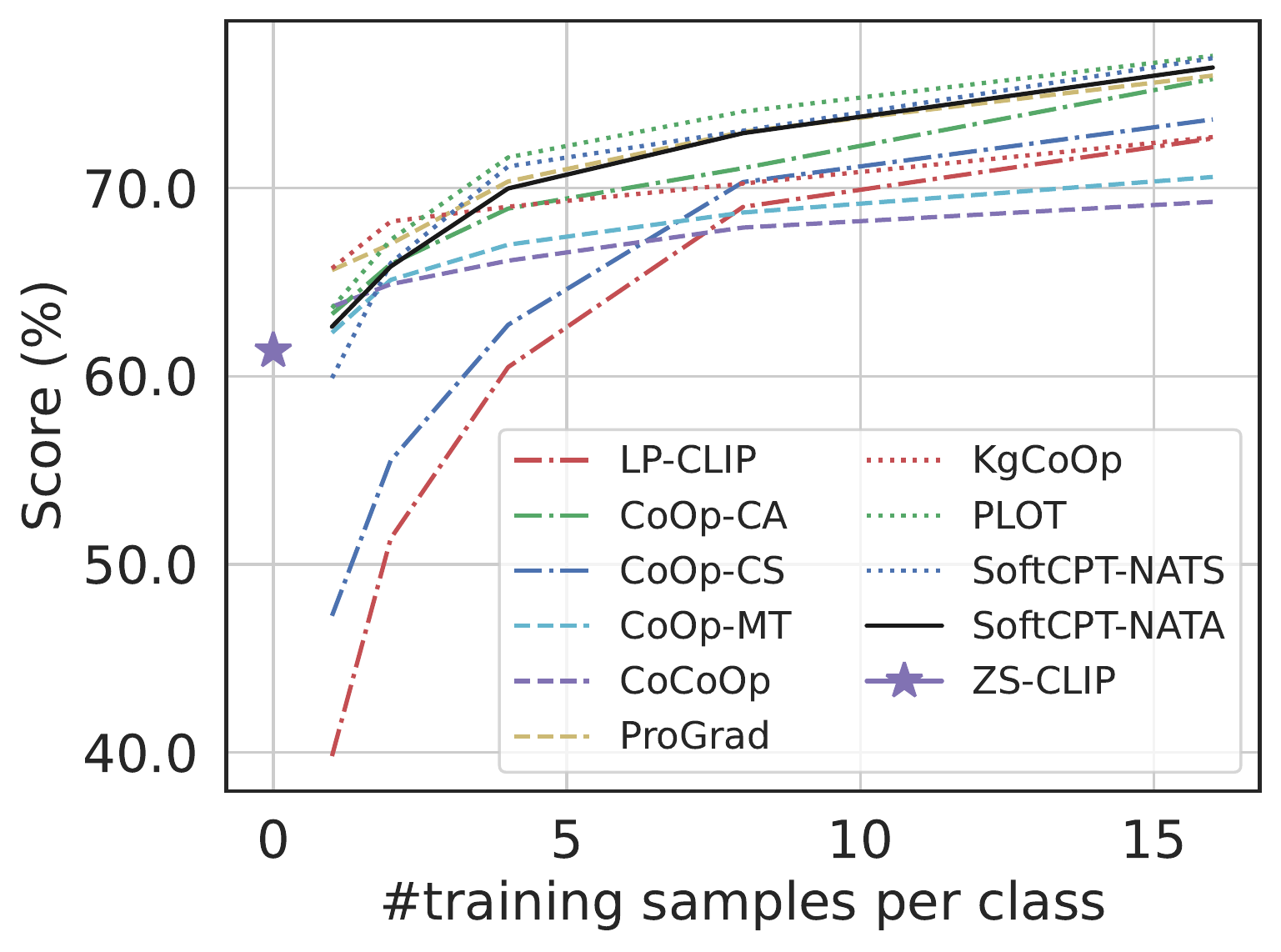}
		\caption{UCF101}
	\end{subfigure}
	\begin{subfigure}[t]{0.24\linewidth}
		\centering
		\includegraphics[width=1.6in]{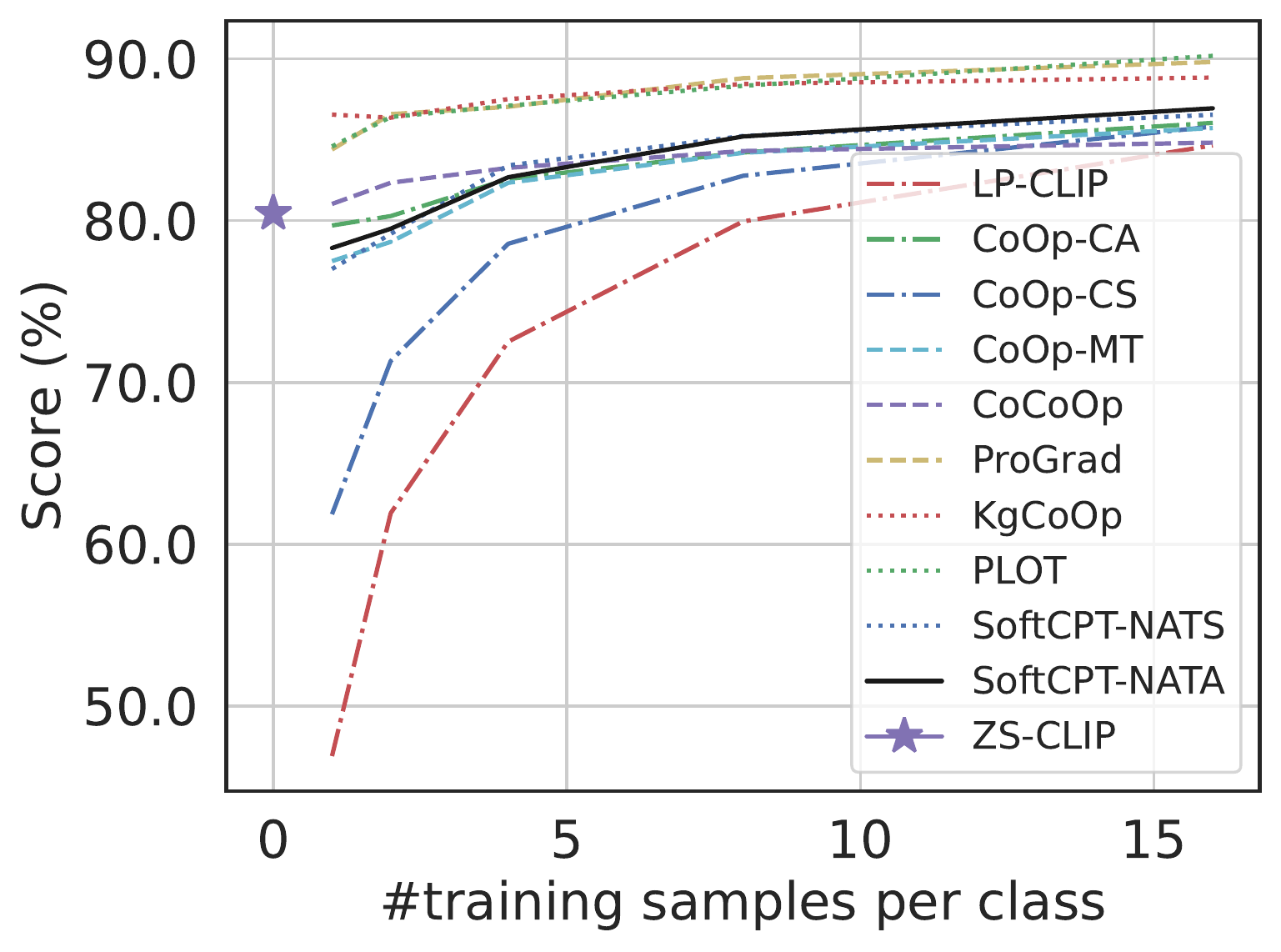}
		\caption{FruitVegetable}
	\end{subfigure}
	\begin{subfigure}[t]{0.24\linewidth}
		\centering
		\includegraphics[width=1.6in]{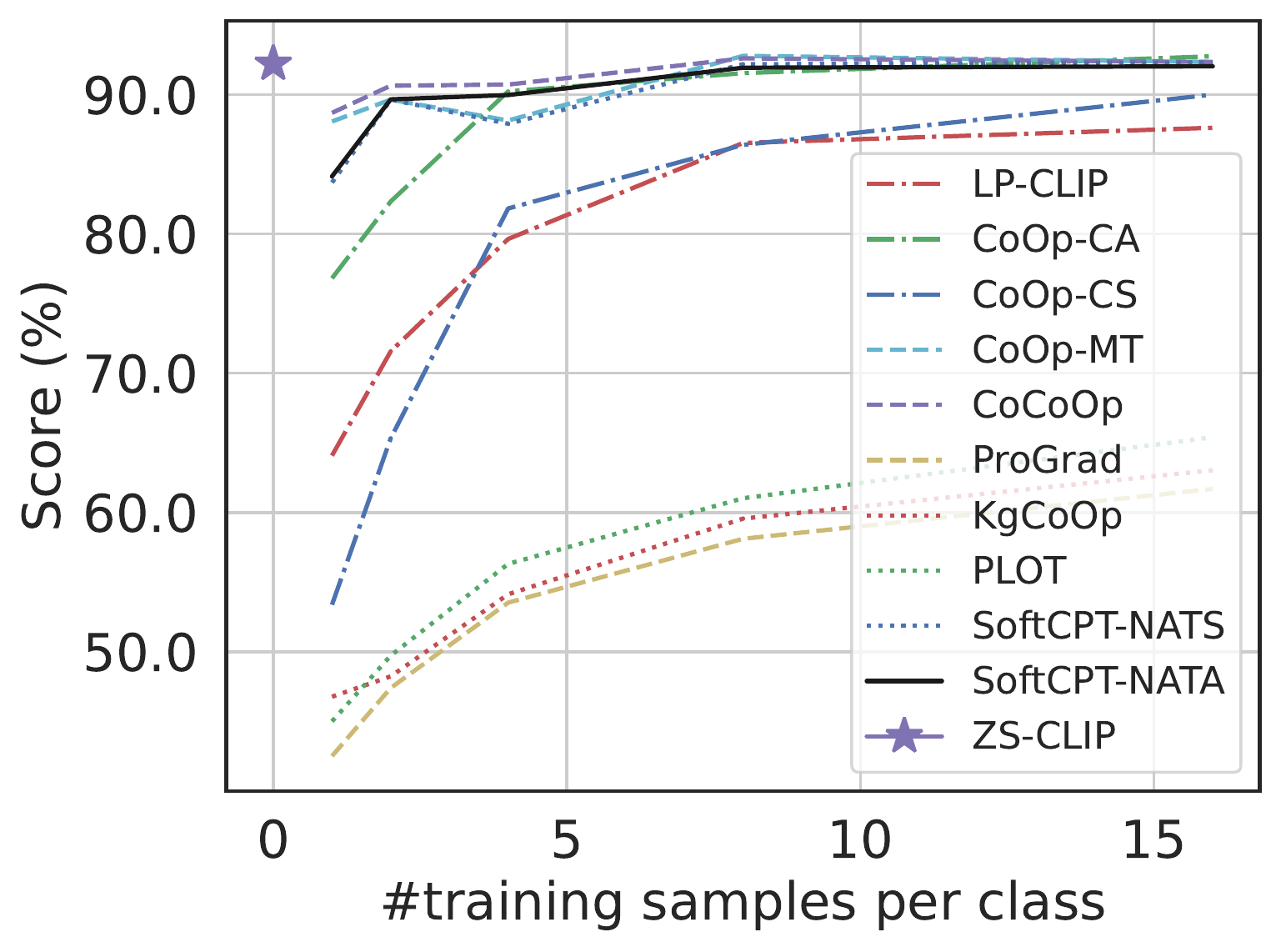}
		\caption{KaggleFlower}
	\end{subfigure}
	\\
	
	\begin{subfigure}[t]{0.24\linewidth}
		\centering
		\includegraphics[width=1.6in]{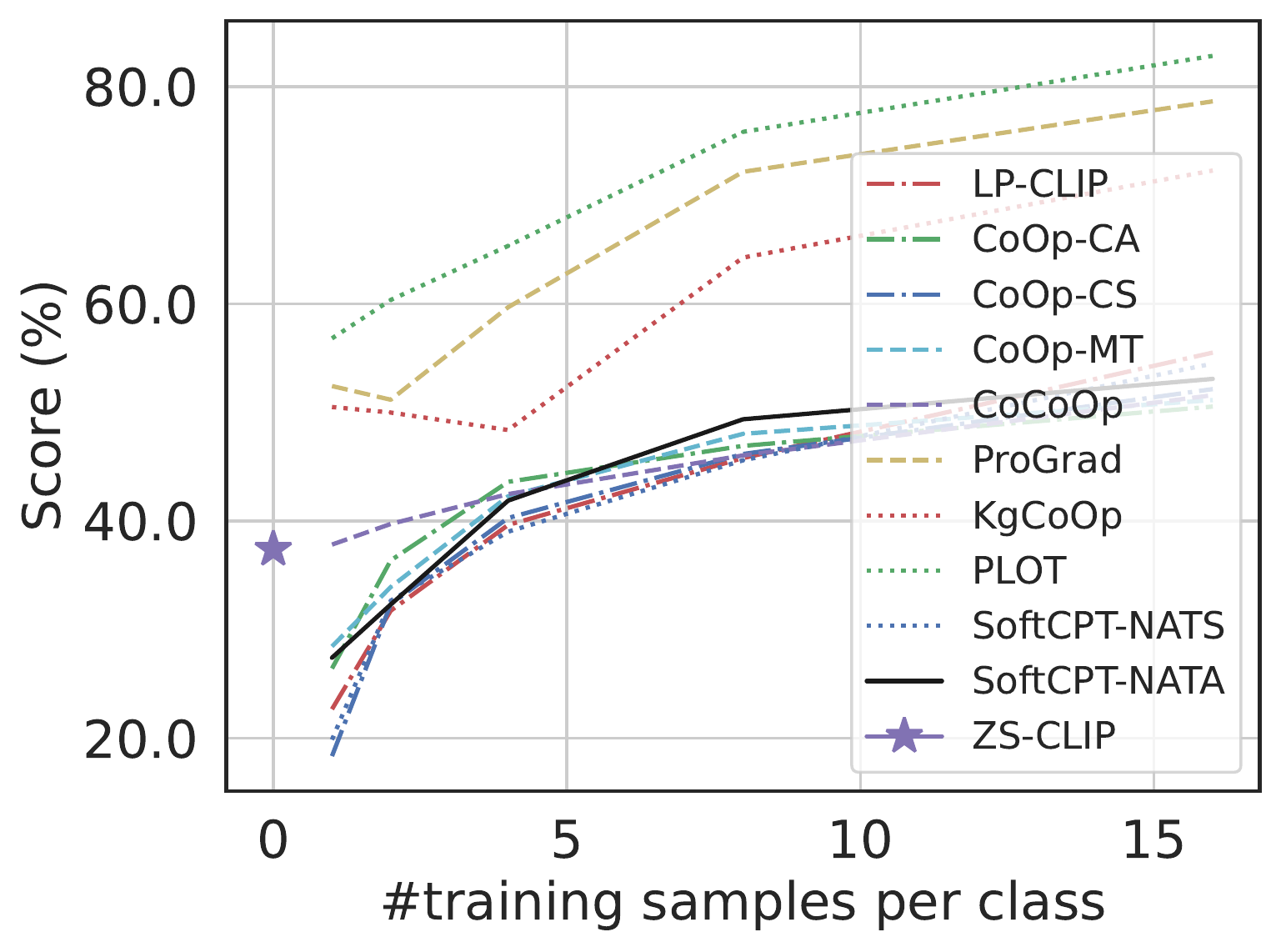}
		\caption{KaggleMushroom}
	\end{subfigure}
	\begin{subfigure}[t]{0.24\linewidth}
		\centering
		\includegraphics[width=1.6in]{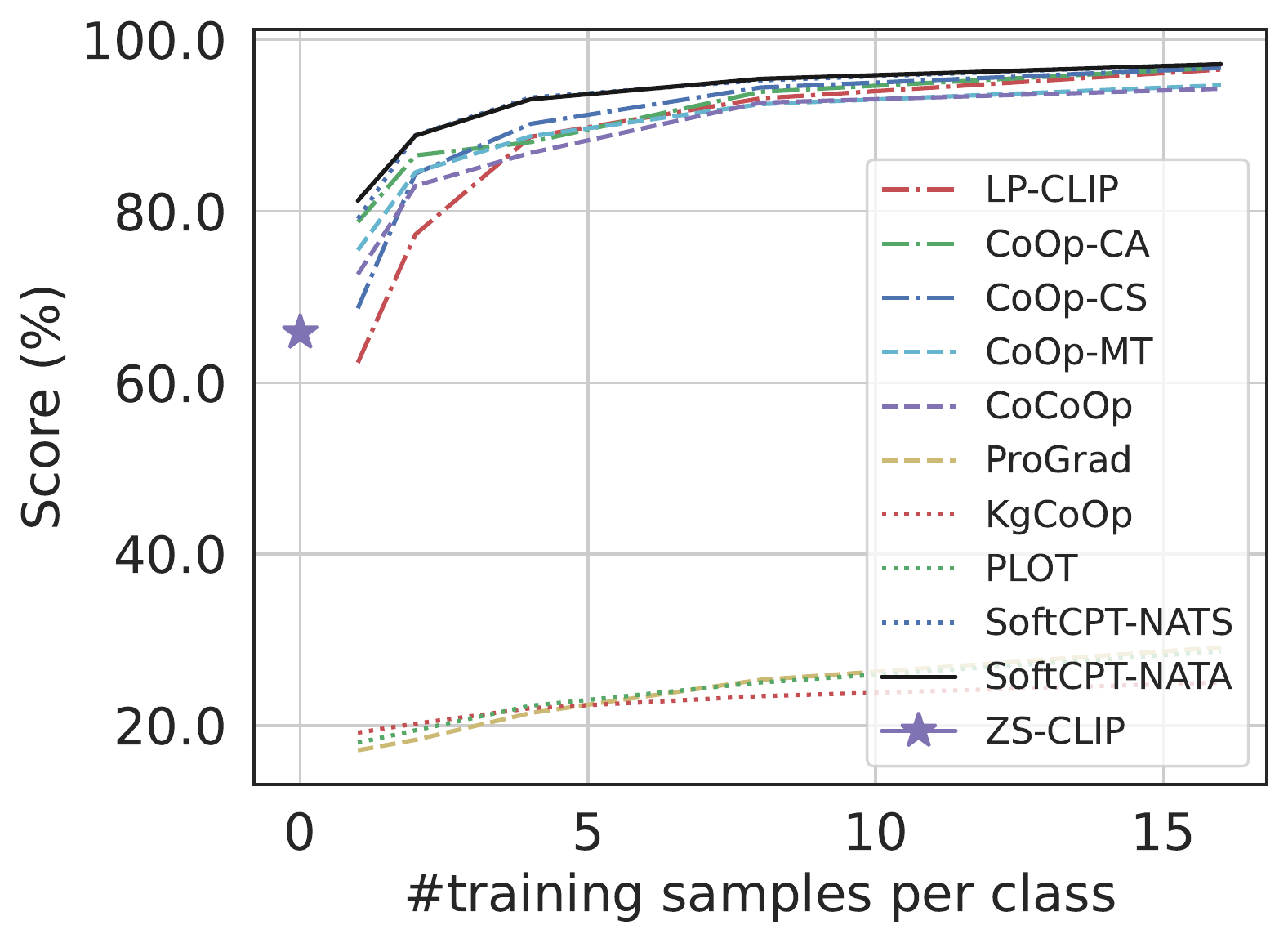}
		\caption{KaggleVegetable}
	\end{subfigure}
	\begin{subfigure}[t]{0.24\linewidth}
		\centering
		\includegraphics[width=1.6in]{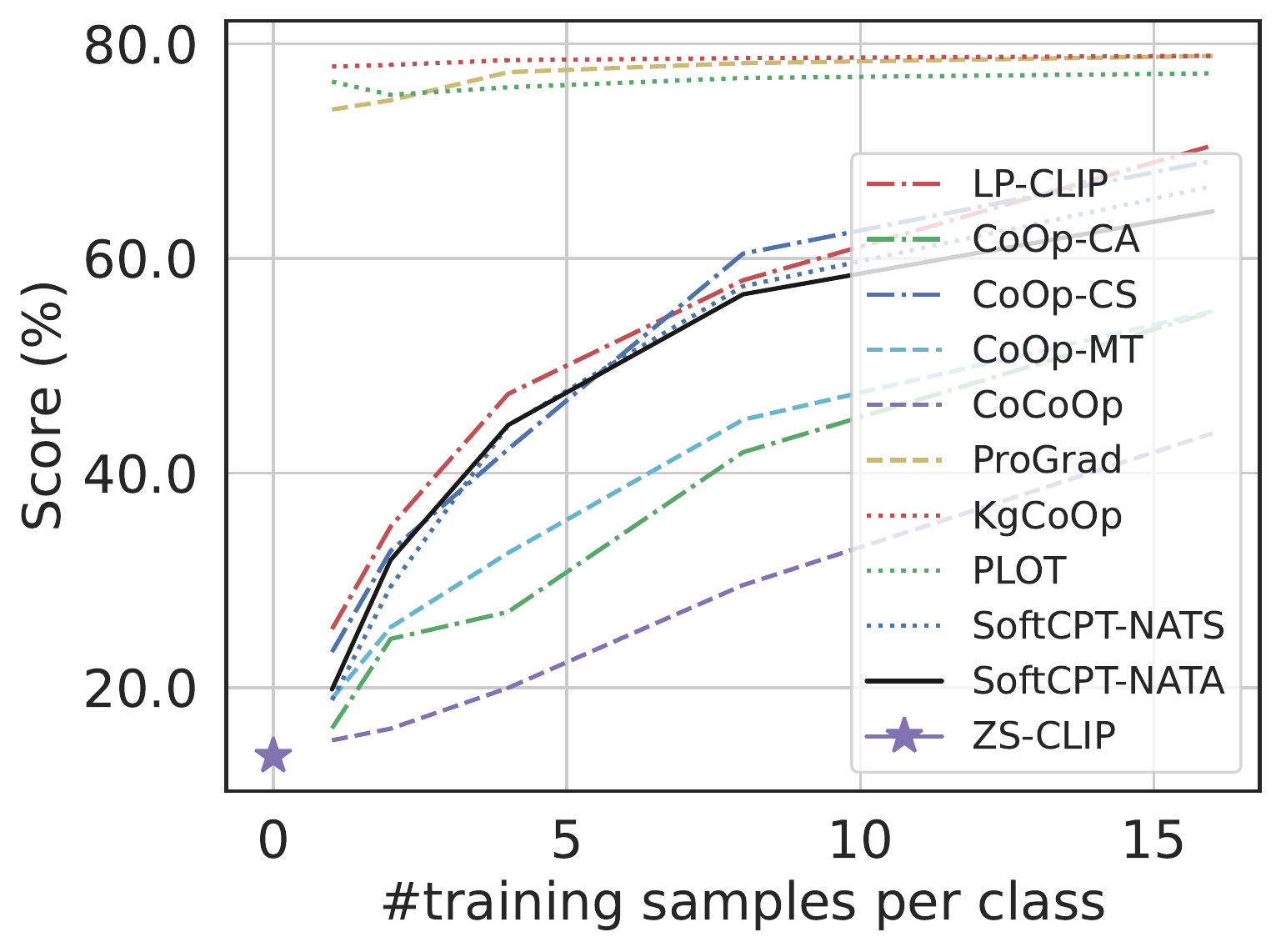}
		\caption{PlantSeedling}
	\end{subfigure}
	\begin{subfigure}[t]{0.24\linewidth}
		\centering
		\includegraphics[width=1.6in]{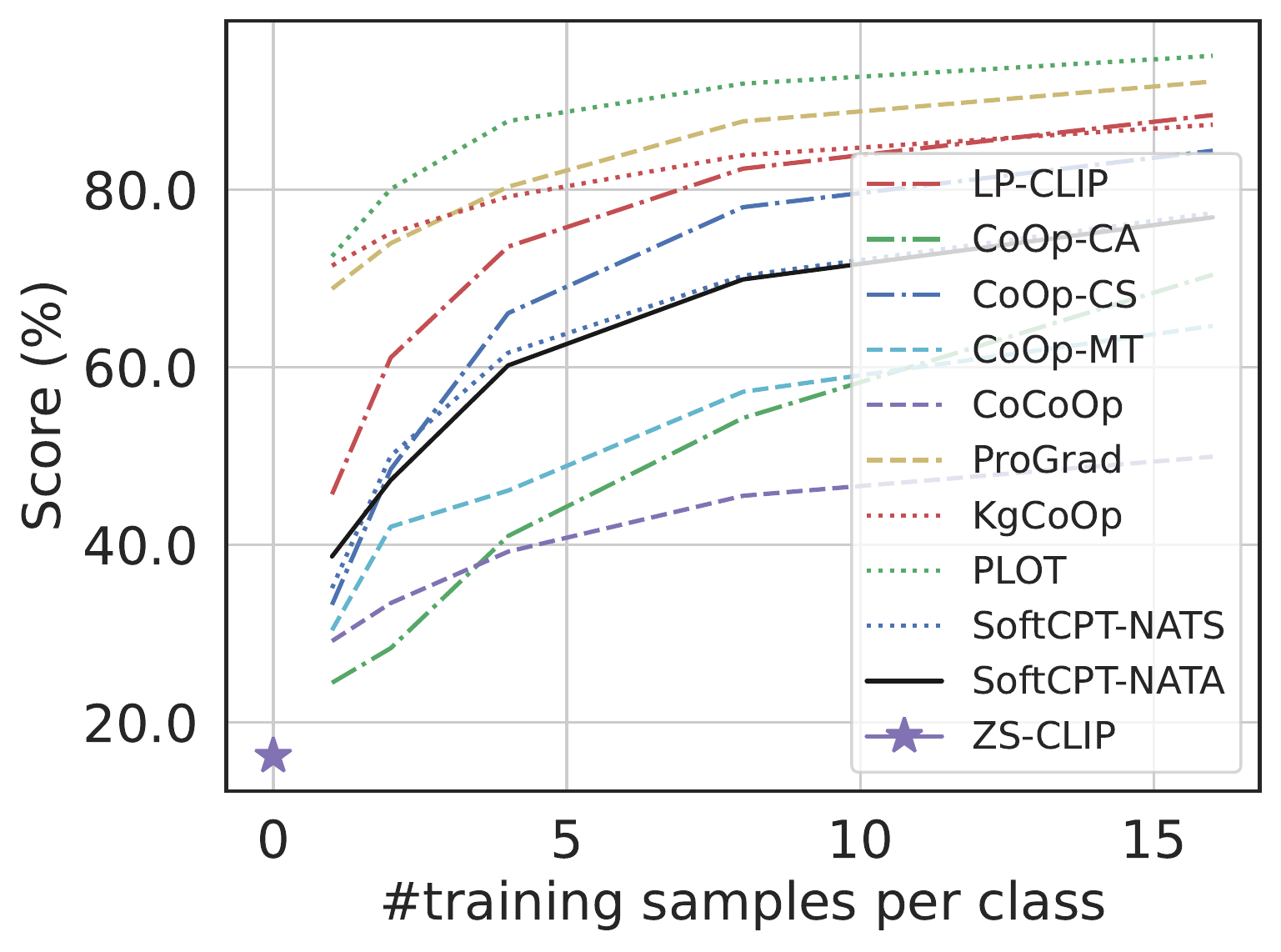}
		\caption{PlantVillage}
	\end{subfigure}
	
	\caption{Per-task results on General-10 (a)-(j) and Plant-6 (k)-(p).}
	\label{fig:mtcv1_per_task_results}
\end{figure*}

\begin{figure*}[t!]
	\begin{subfigure}[t]{0.24\linewidth}
		\centering
		\includegraphics[width=1.6in]{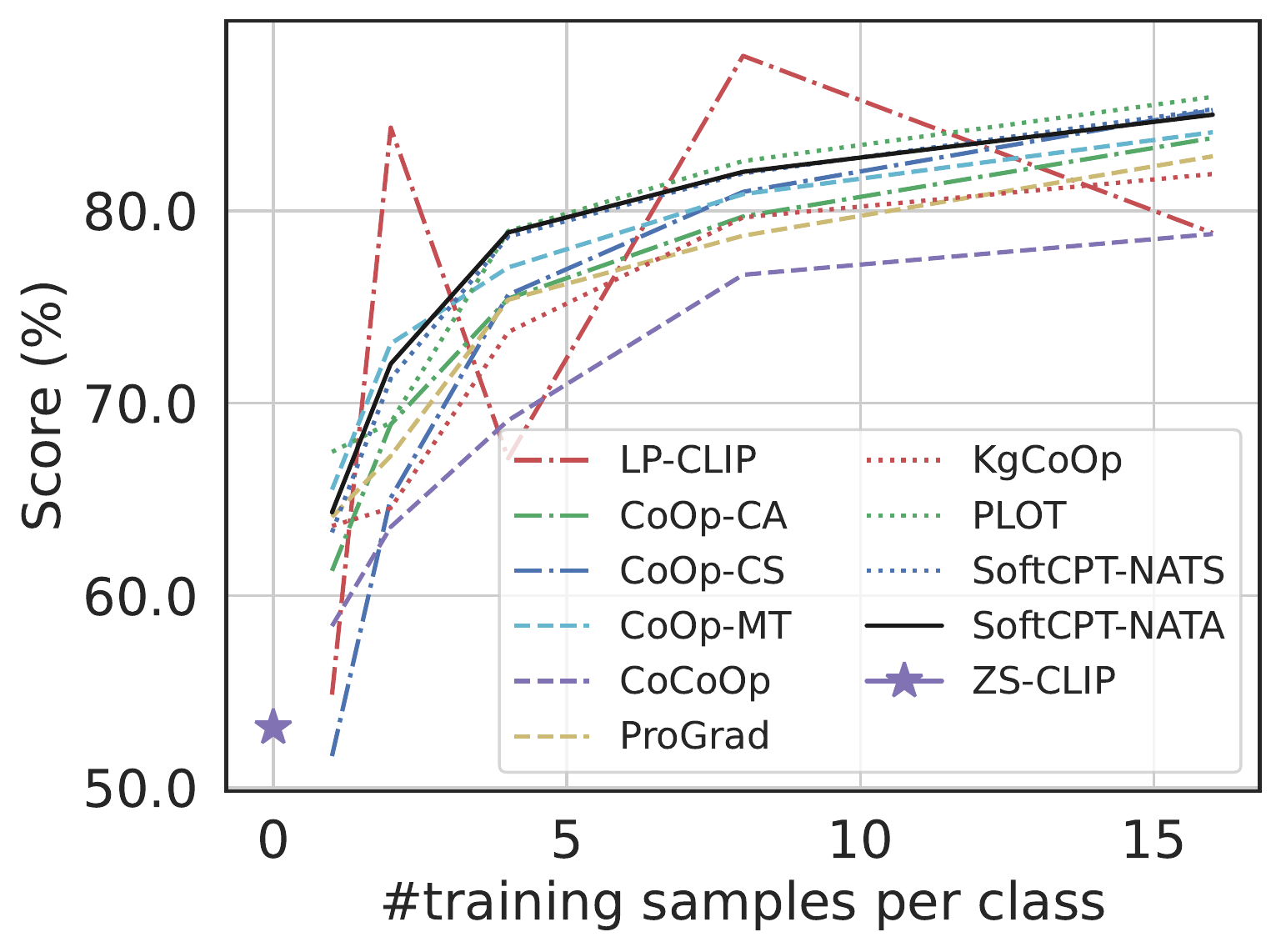}
		\caption{AID}
	\end{subfigure}
	\begin{subfigure}[t]{0.24\linewidth}
		\centering
		\includegraphics[width=1.6in]{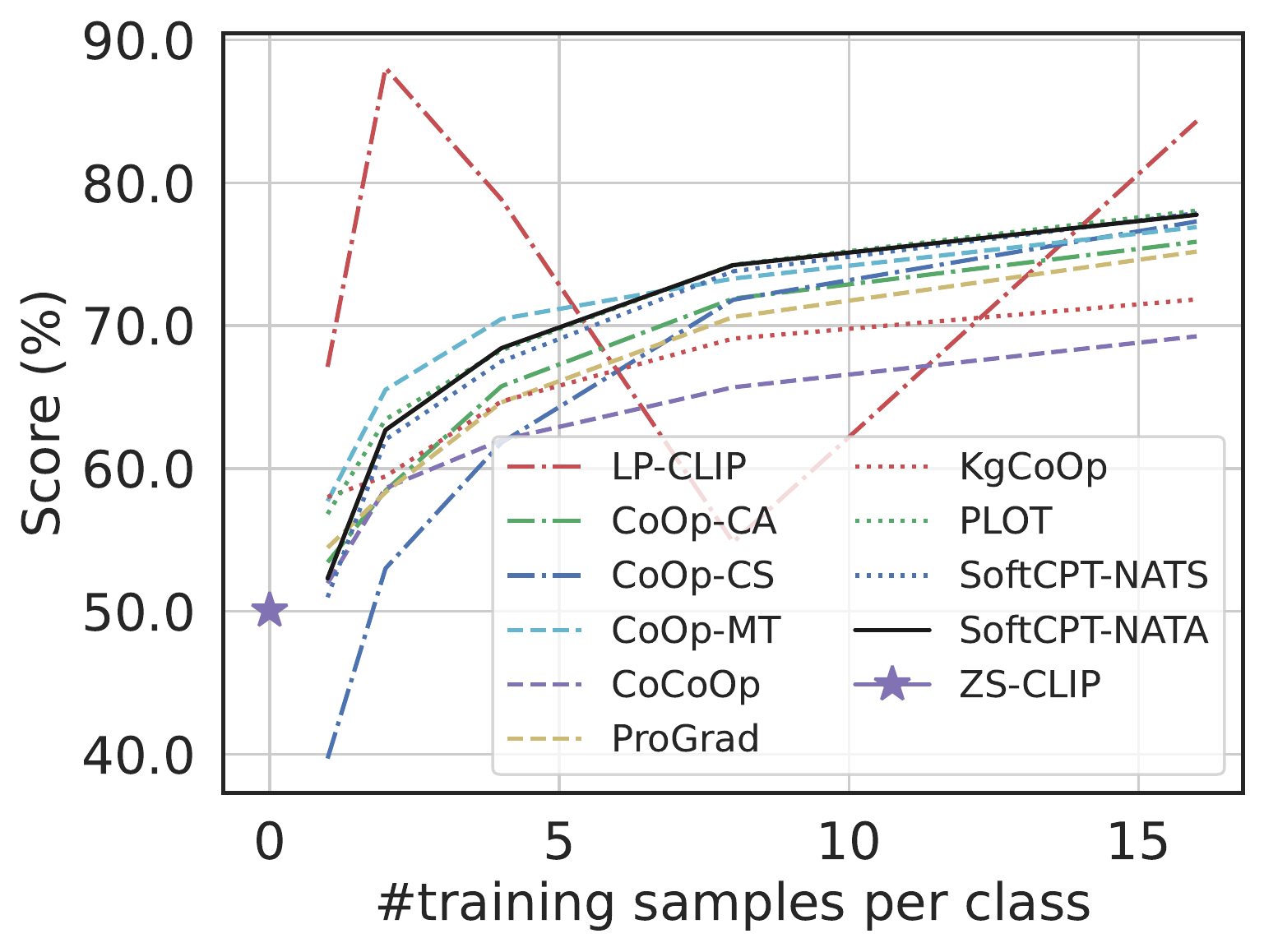}
		\caption{RESISC45}
	\end{subfigure}
	\begin{subfigure}[t]{0.24\linewidth}
		\centering
		\includegraphics[width=1.6in]{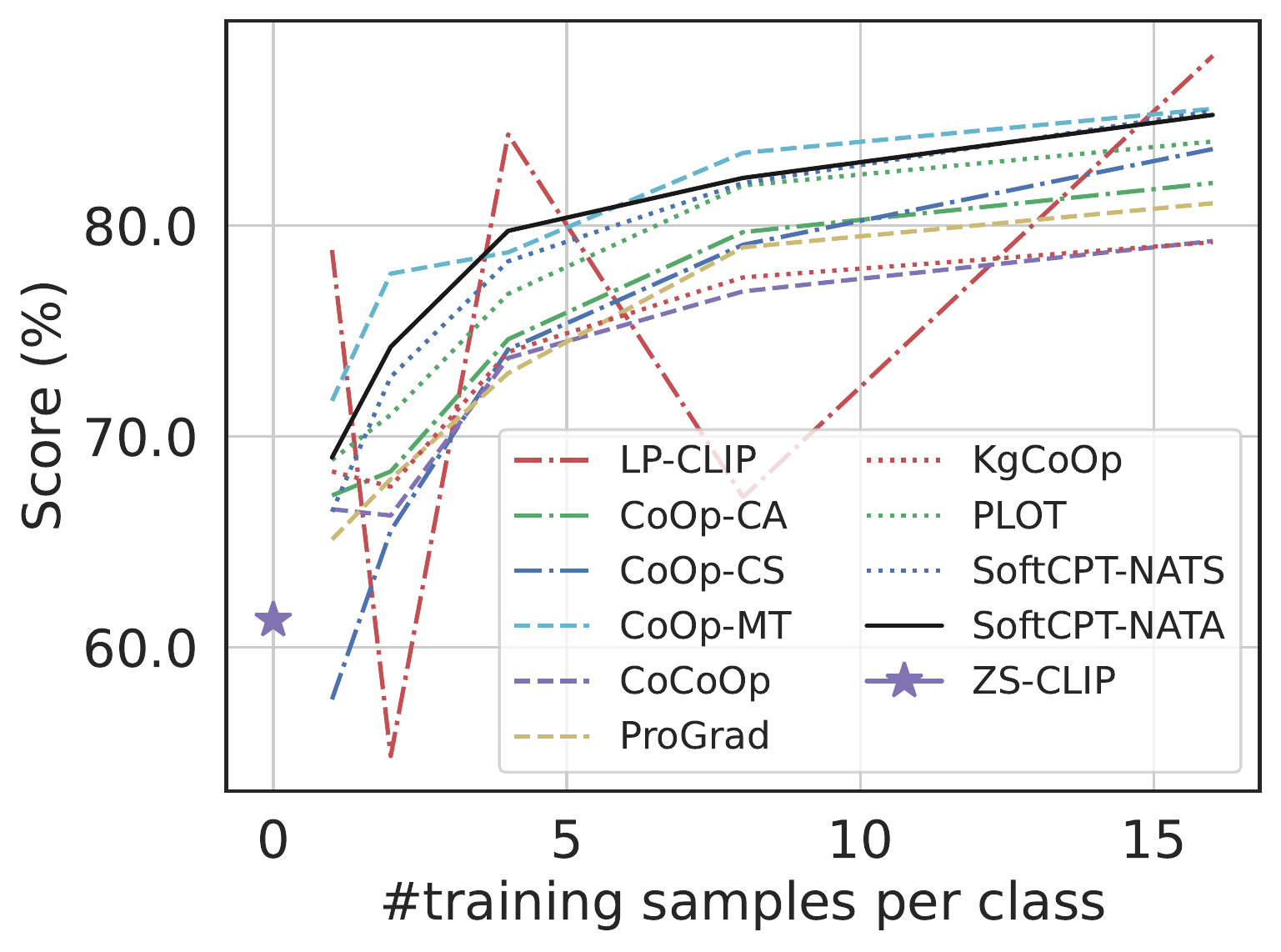}
		\caption{OPTIMAL}
	\end{subfigure}
	\begin{subfigure}[t]{0.24\linewidth}
		\centering
		\includegraphics[width=1.6in]{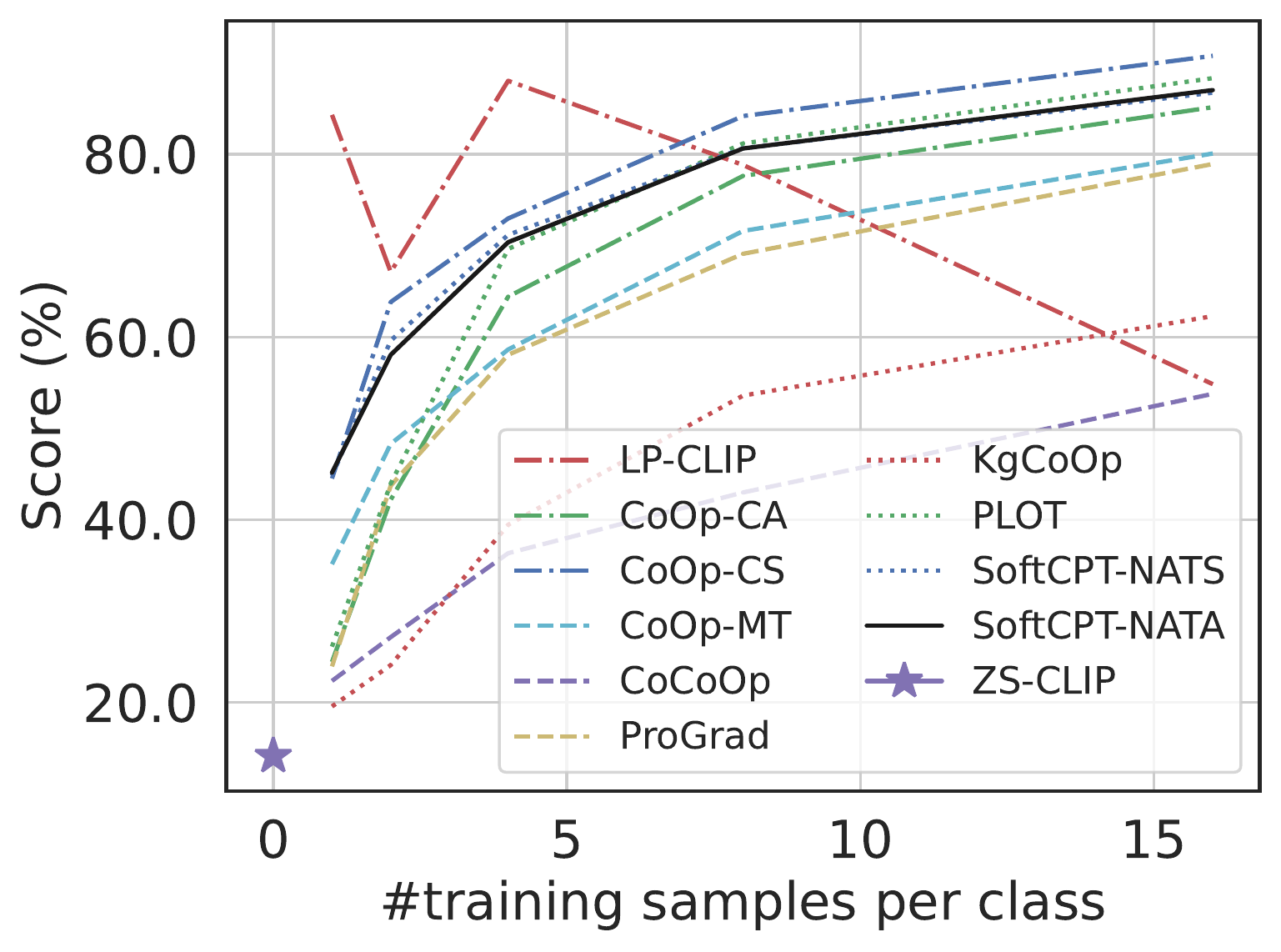}
		\caption{RSICB128}
	\end{subfigure}
	\\
	
	\begin{subfigure}[t]{0.24\linewidth}
		\centering
		\includegraphics[width=1.6in]{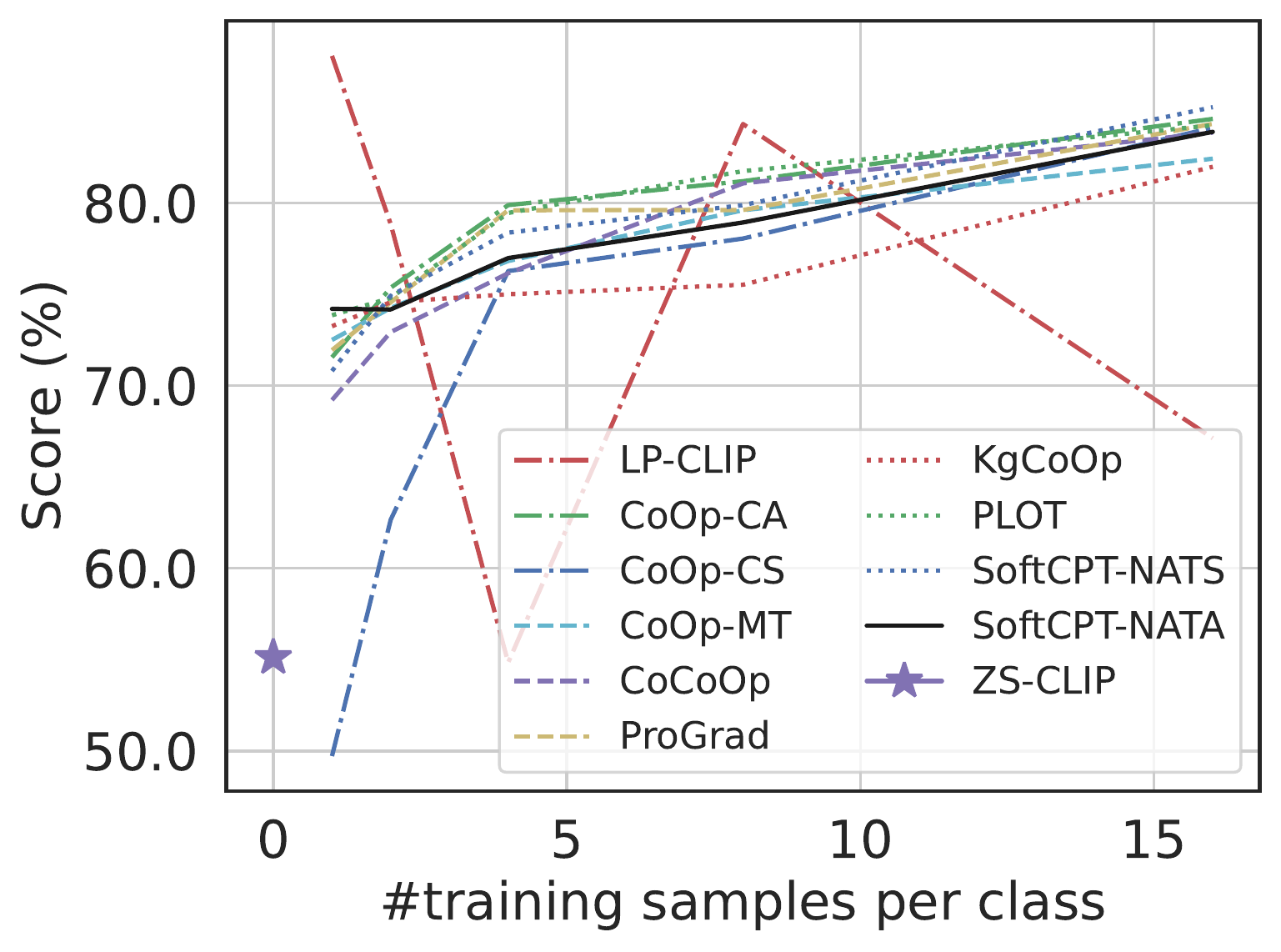}
		\caption{RSSCN7}
	\end{subfigure}
	\begin{subfigure}[t]{0.24\linewidth}
		\centering
		\includegraphics[width=1.6in]{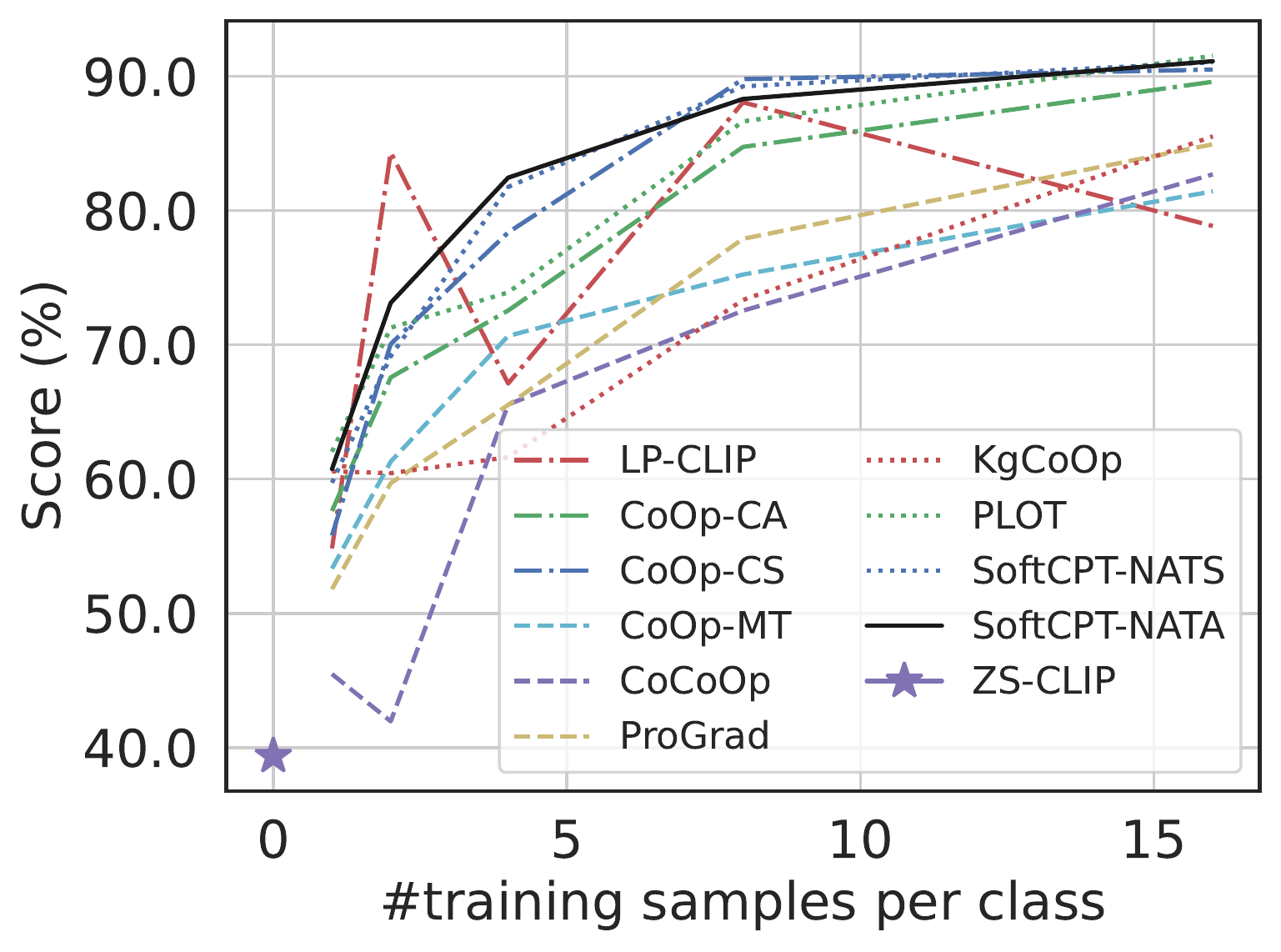}
		\caption{NaSC-TG2}
	\end{subfigure}
	\begin{subfigure}[t]{0.24\linewidth}
		\centering
		\includegraphics[width=1.6in]{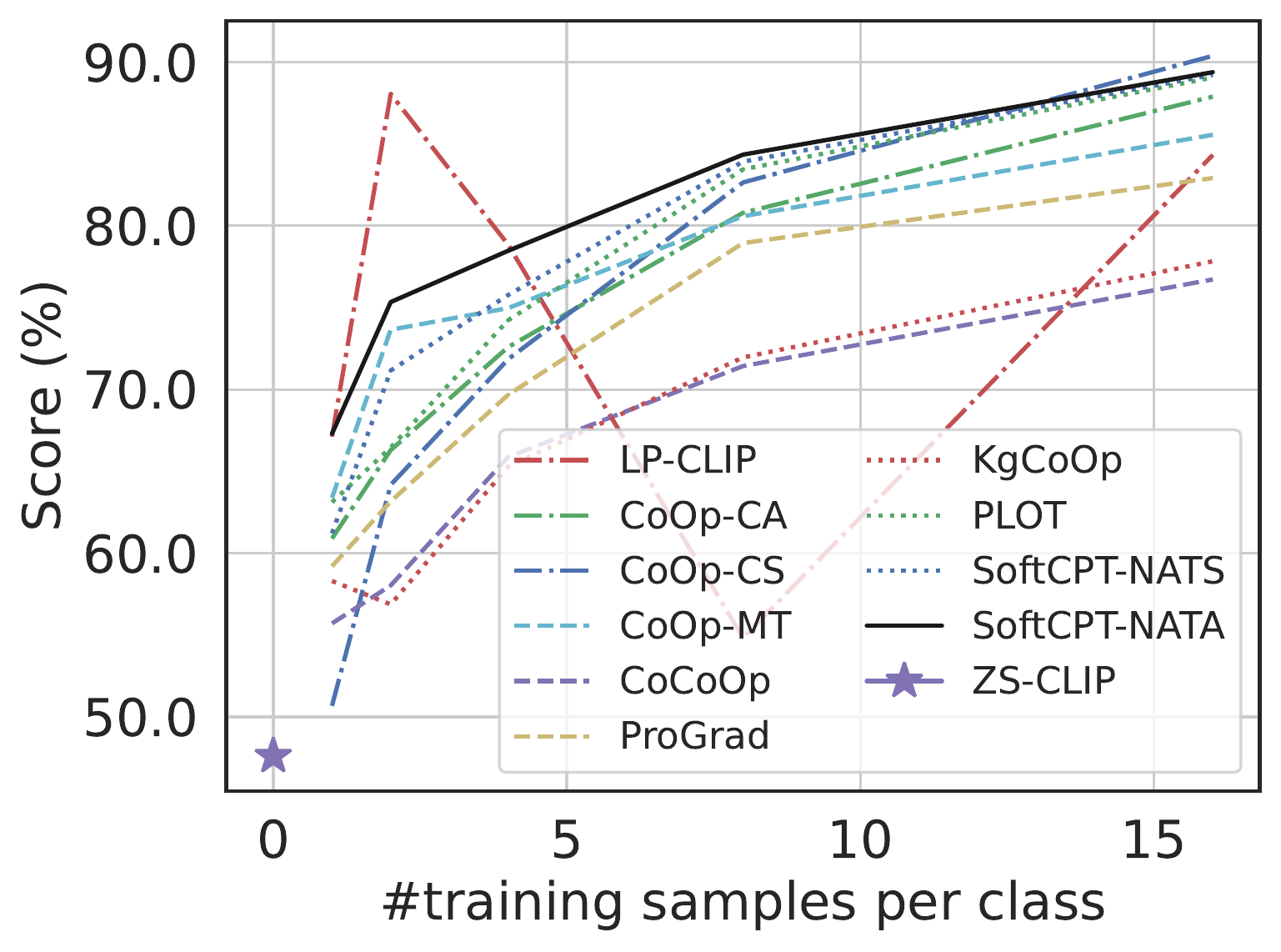}
		\caption{UCMerced}
	\end{subfigure}
	\begin{subfigure}[t]{0.24\linewidth}
		\centering
		\includegraphics[width=1.6in]{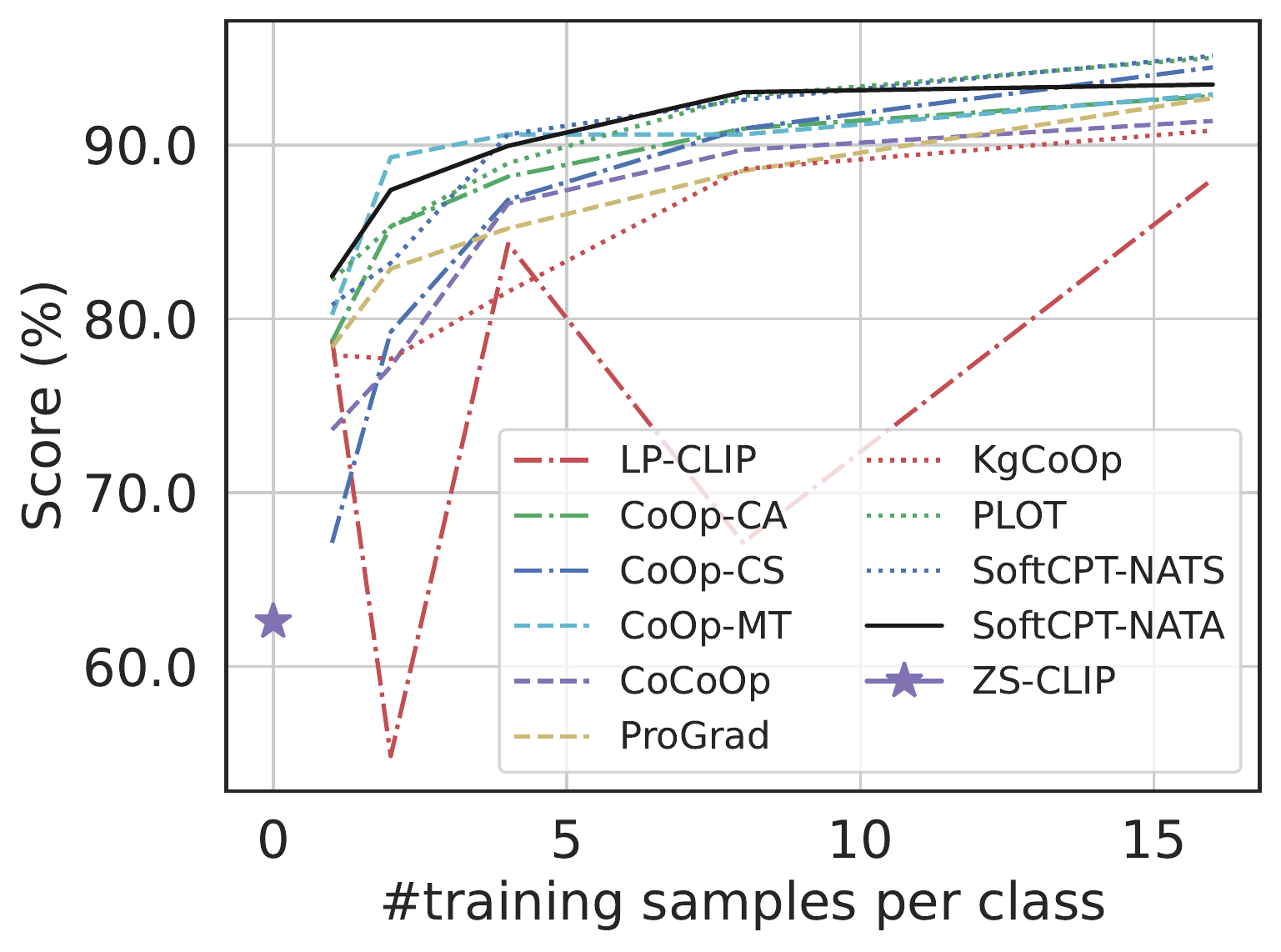}
		\caption{WHURS19}
	\end{subfigure}
	
	\caption{Per-task results on RS-8.}
	\label{fig:rsv1_per_task_results}
\end{figure*}

\textbf{SoftCPT-NATA vs CoOp-CA.} Compared to CoOp-CA, SoftCPT-NATA improves the average scores by \textbf{0.73\%}, \textbf{5.09\%}, \textbf{3.63\%} and \textbf{2.80\%} on General-10, Plant-6, RS-8 and Fashion-20, respectively. While comparing the scores with different shots, SoftCPT-NATA wins in almost all cases. This demonstrates the effectiveness of the proposed multi-task prompt tuning method. Note that, the improvement on General-10 is relatively small, which could be attributed to the loose relation between different tasks in this dataset. For example, the images in DTD~\cite{DTD} are texture related while the images in Food101~\cite{Food101} are food related, thereby exhibiting significant differences.

\textbf{SoftCPT-NATA vs CoOp-MT.} It can be seen that the simple hard sharing of prompt contexts (CoOp-MT) is exceeded by the proposed soft sharing (SoftCPT) in most cases. The underlying reason is that hard sharing is too restrict to scale with the increasing complexity of more tasks. Meanwhile, it is prone to introduce the so-called negative transfer~\cite{MT_DNN_Survey} for tasks that have weak relations to others. We can also observe that the performance gap between SoftCPT-NATA and CoOp-MT increases with more training samples. This implies that the limited representation ability of CoOp-MT is a distinct problem when the training data is adequate.

\textbf{SoftCPT-NATA vs SoftCPT*}. We can also find that SoftCPT-NATA surpasses SoftCPT* in most cases. This further proves the importance of introducing pre-trained language model for providing the prior information of task relatedness.

\textbf{Comparison to State-of-the-art Methods.} CoCoOp, ProGrad, KgCoOp, and PLOT are recent advancements that build upon the foundation of CoOp. When comparing SoftCPT-NATA to CoCoOp, our method demonstrates impressive average accuracy improvements of 3.19\%, 6.35\%, 10.93\%, and 3.40\% on the General-10, Plant-6, RS-8, and Fashion-20 datasets, respectively. Similarly, against ProGrad, SoftCPT-NATA achieves accuracy boosts of 0.11\%, 5.22\%, 5.79\%, and 1.96\% on the corresponding datasets. In the case of PLOT, while SoftCPT-NATA slightly lags behind on the General-10 dataset, this is understandable given the limited benefit from multi-task learning due to the weak relationship among the tasks. Nevertheless, on the three specialized datasets, SoftCPT-NATA surpasses PLOT in terms of mean accuracy, thereby validating the effectiveness of multi-task learning in these contexts. It's worth noting that PLOT incorporates multiple contexts through optimal transport, making it a strong baseline. Moreover, since SoftCPT and PLOT explore distinct optimization directions, there is potential for integrating these two techniques to further enhance performance.

\textbf{Linear Probe vs Prompt Tuning.} It is noteworthy that the simple linear probe (LP-CLIP) exhibits impressive performance on three specialized datasets, even surpassing CoOp, thus highlighting its robustness in resisting domain shifts. Nevertheless, it's crucial to acknowledge that linear probe utilizes the validation set for hyper-parameter tuning, whereas prompt tuning methods refrain from using it. Furthermore, despite its strong performance, linear probe is significantly outweighed by more contemporary prompt tuning techniques, including PLOT and our proposed method. In summary, prompt tuning methods can yield comparable or even superior results to linear probe, demonstrating the significance and value of prompt tuning.

\begin{figure*}[t!]
	\begin{subfigure}[t]{0.24\linewidth}
		\centering
		\includegraphics[width=1.6in]{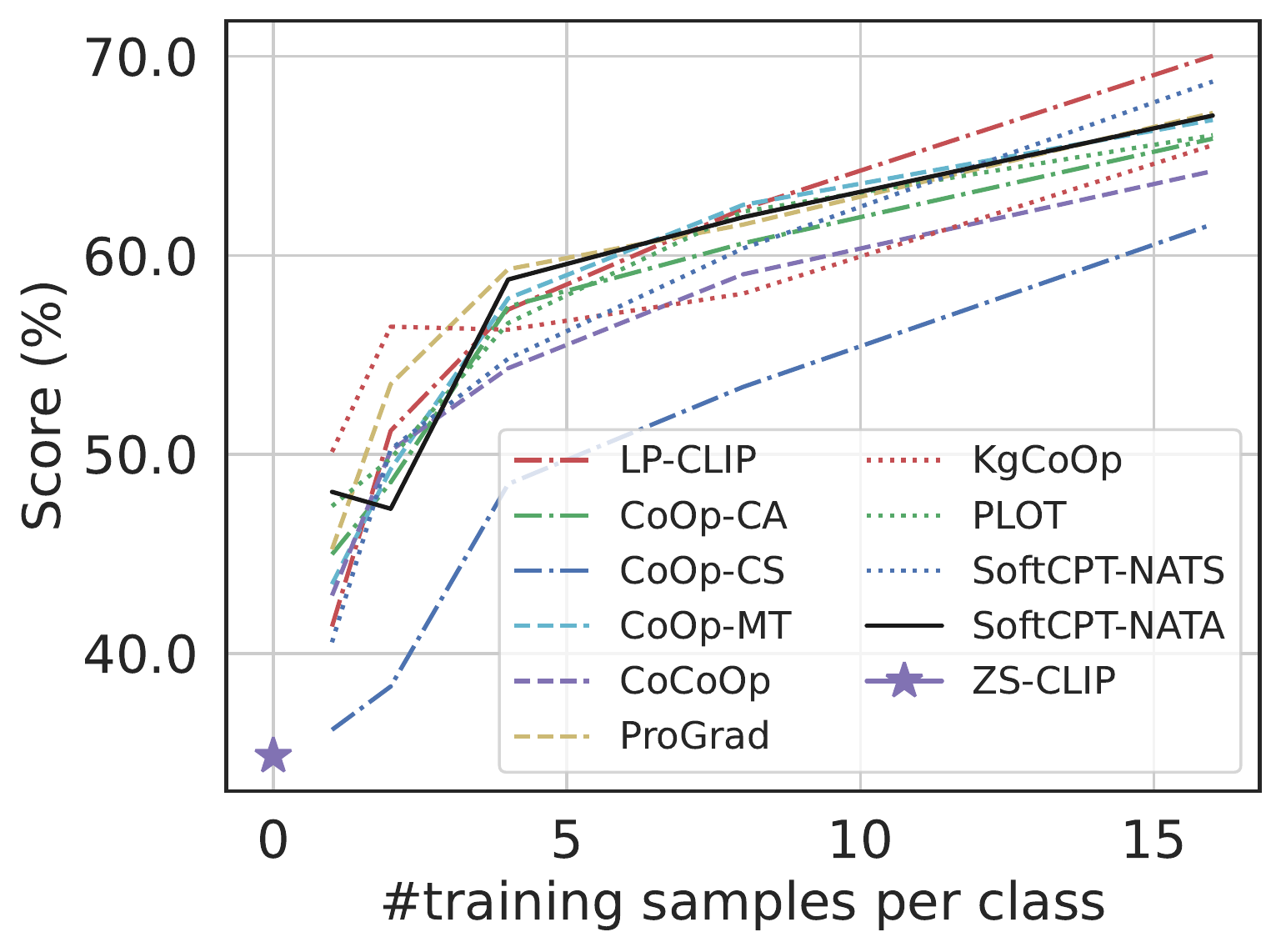}
		\caption{pants type}
	\end{subfigure}
	\begin{subfigure}[t]{0.24\linewidth}
		\centering
		\includegraphics[width=1.6in]{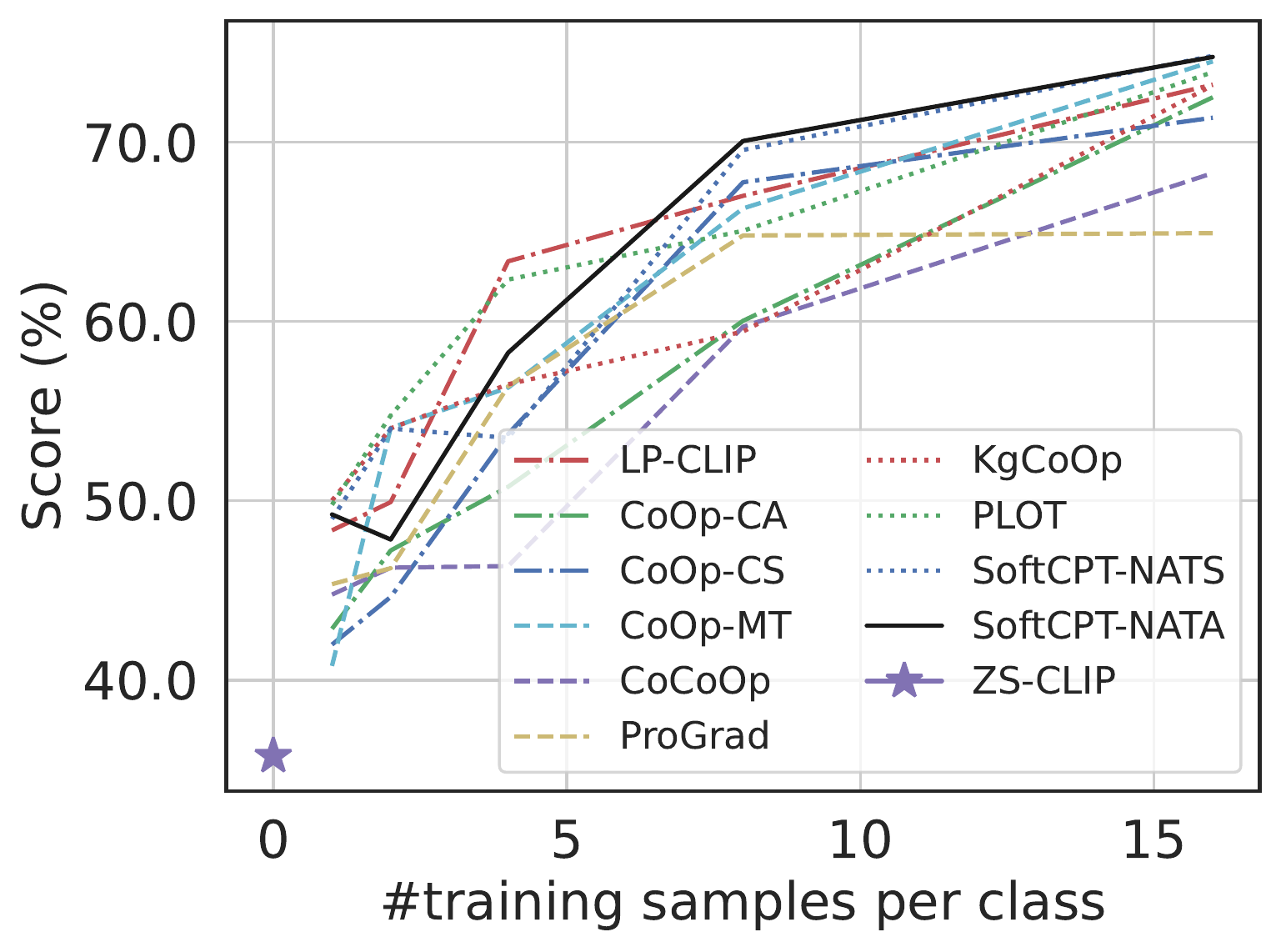}
		\caption{pants length}
	\end{subfigure}
	\begin{subfigure}[t]{0.24\linewidth}
		\centering
		\includegraphics[width=1.6in]{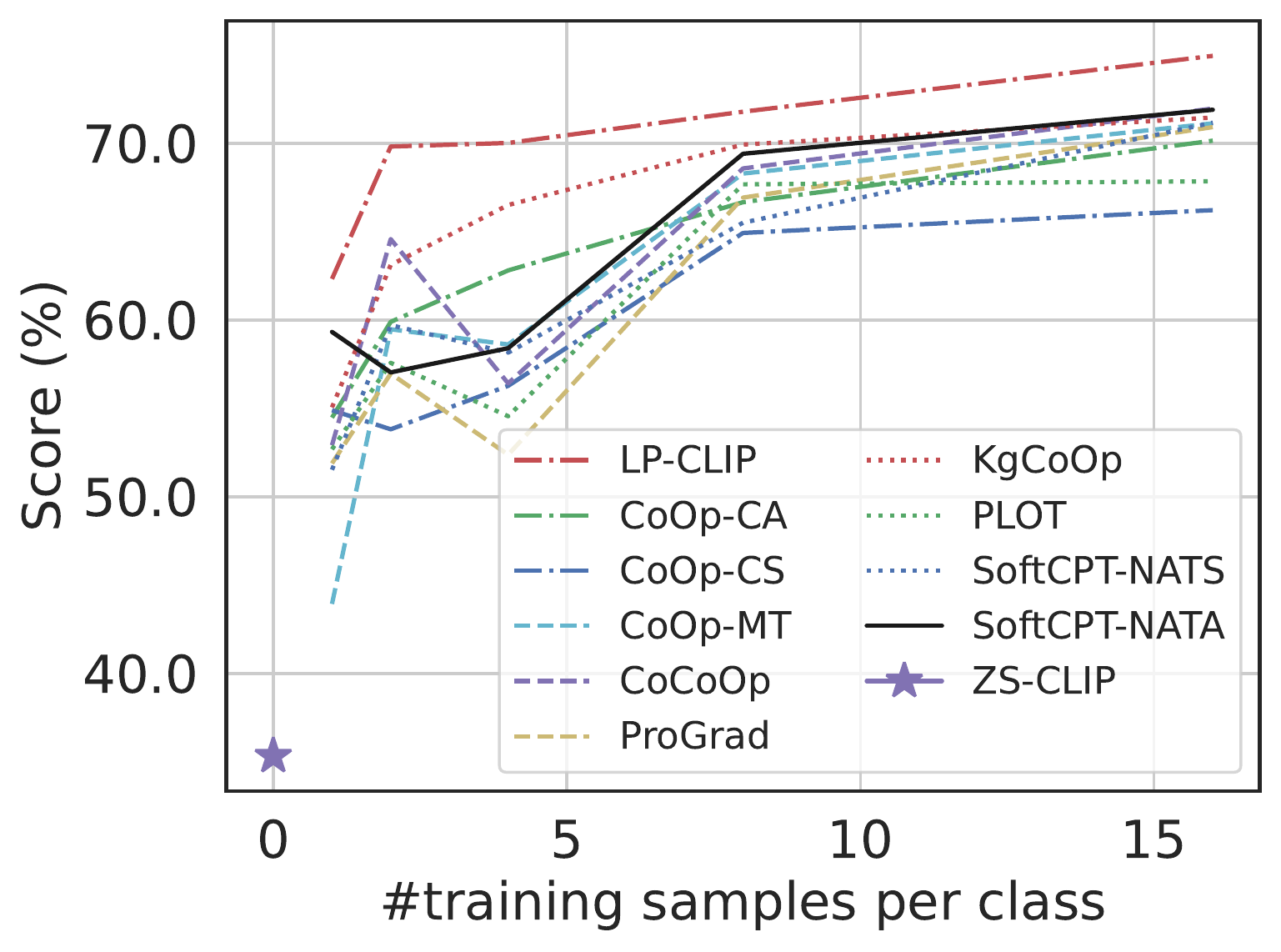}
		\caption{waist type}
	\end{subfigure}
	\begin{subfigure}[t]{0.24\linewidth}
		\centering
		\includegraphics[width=1.6in]{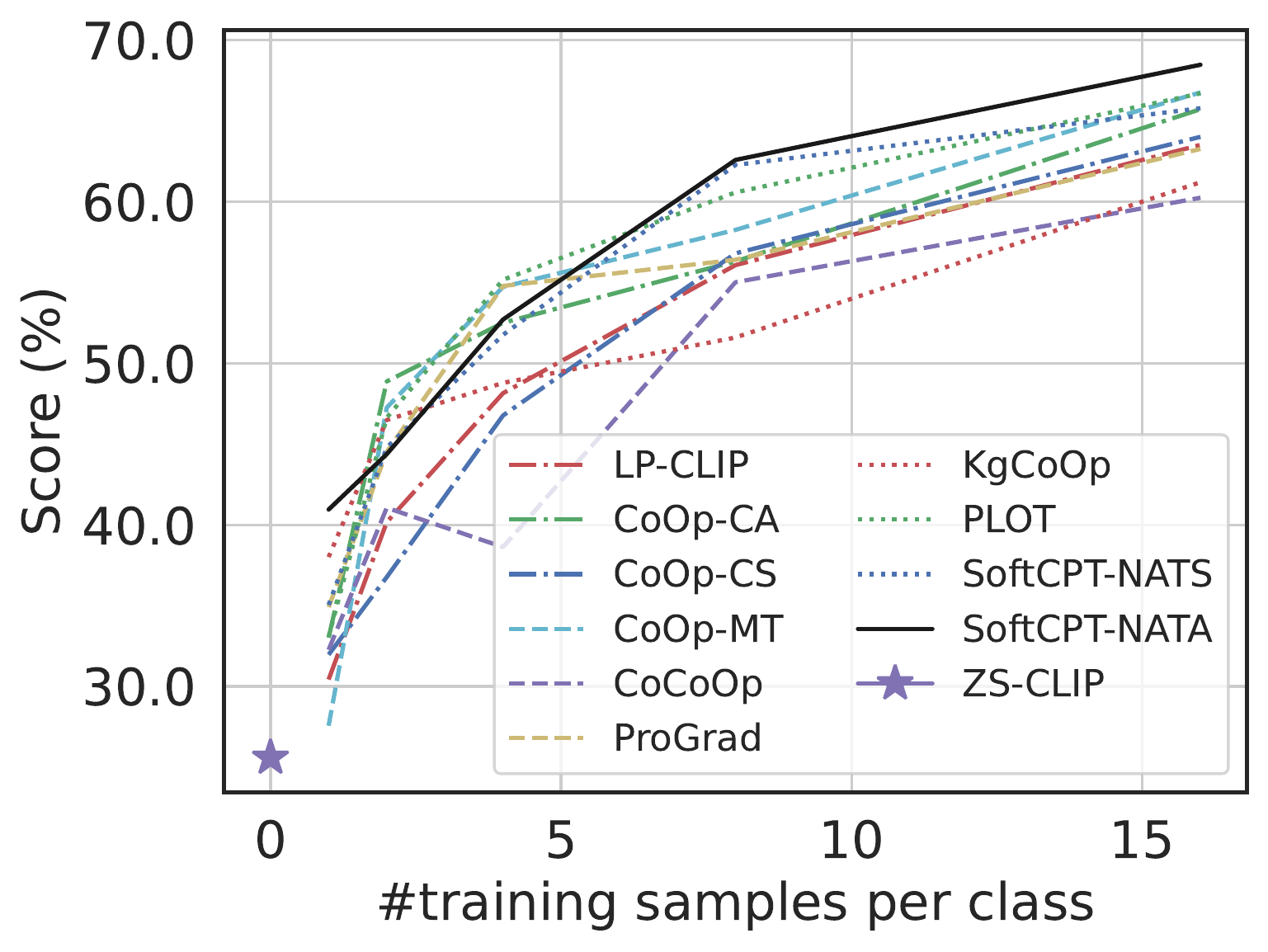}
		\caption{collar type}
	\end{subfigure}
	\\
	
	\begin{subfigure}[t]{0.24\linewidth}
		\centering
		\includegraphics[width=1.6in]{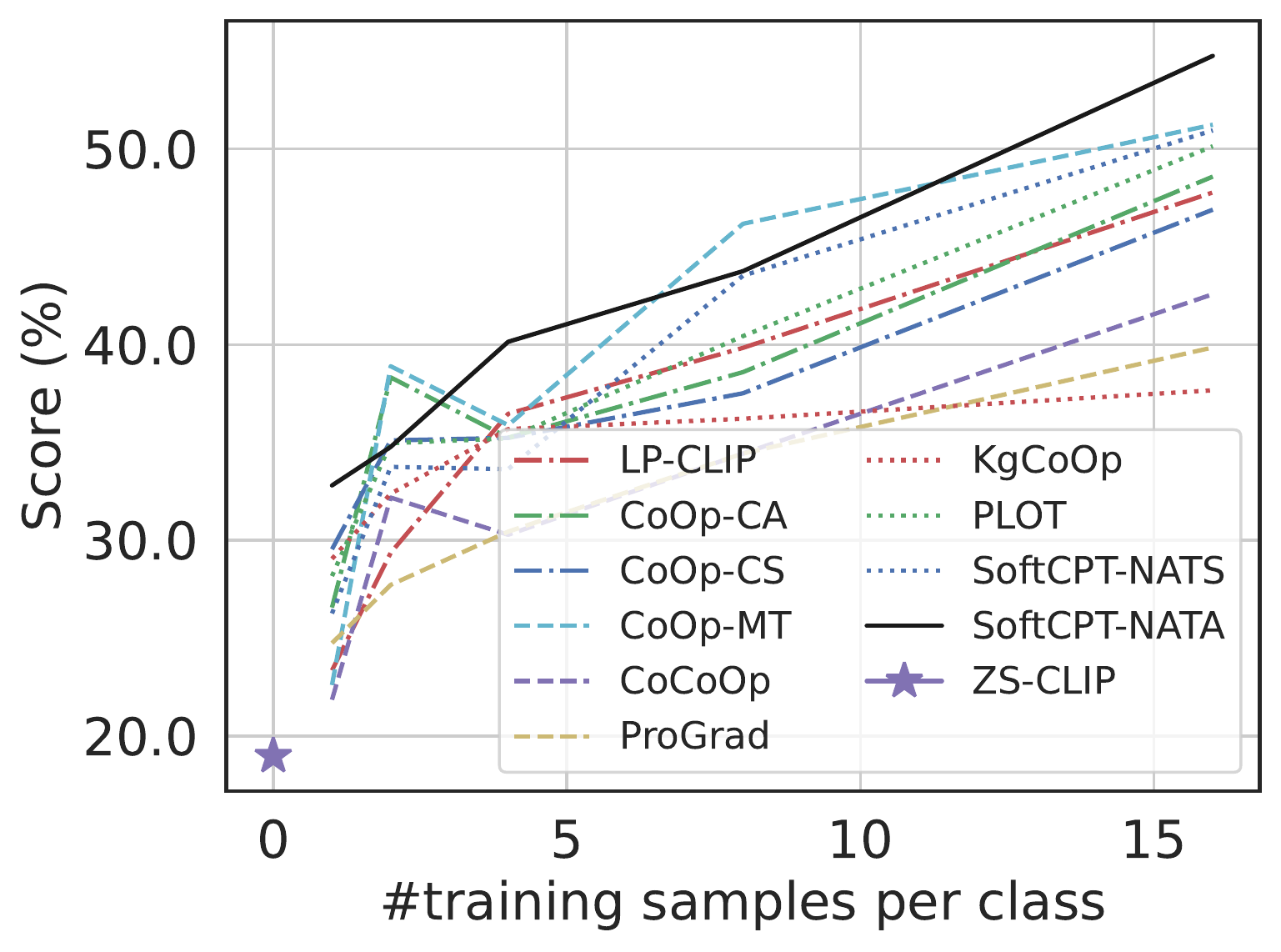}
		\caption{sleeve type}
	\end{subfigure}
	\begin{subfigure}[t]{0.24\linewidth}
		\centering
		\includegraphics[width=1.6in]{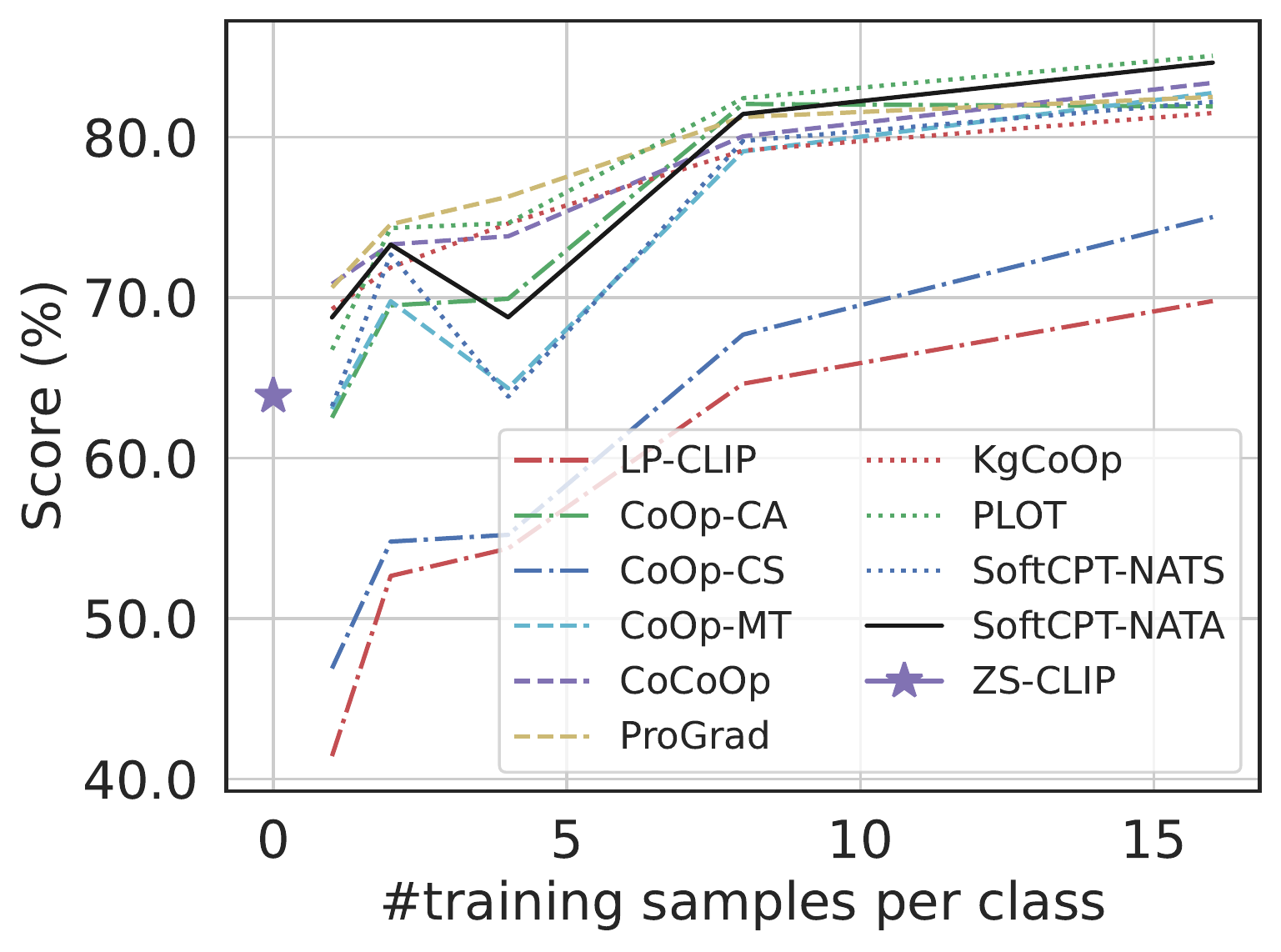}
		\caption{sleeve length}
	\end{subfigure}
	\begin{subfigure}[t]{0.24\linewidth}
		\centering
		\includegraphics[width=1.6in]{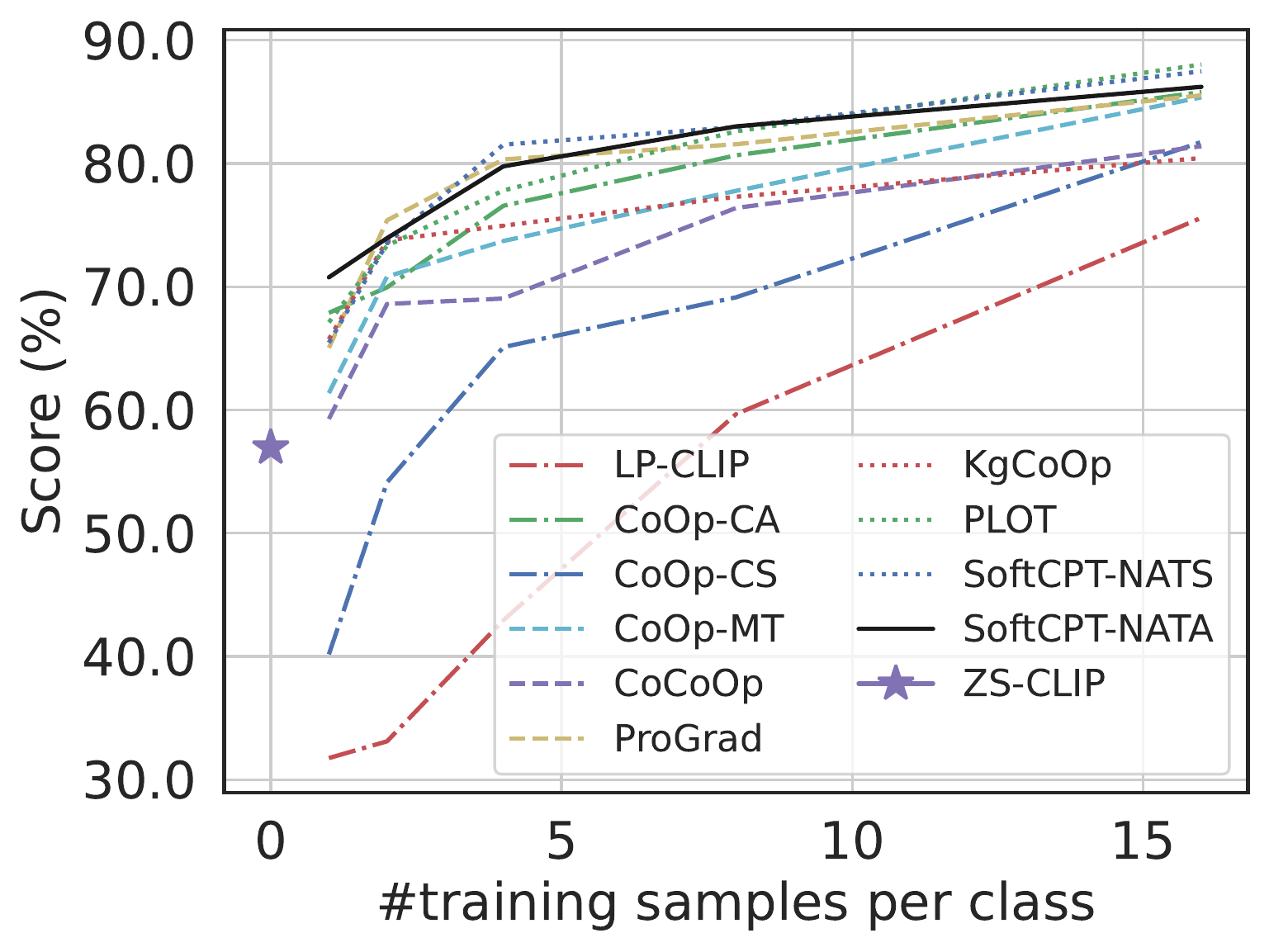}
		\caption{top pattern}
	\end{subfigure}
	\begin{subfigure}[t]{0.24\linewidth}
		\centering
		\includegraphics[width=1.6in]{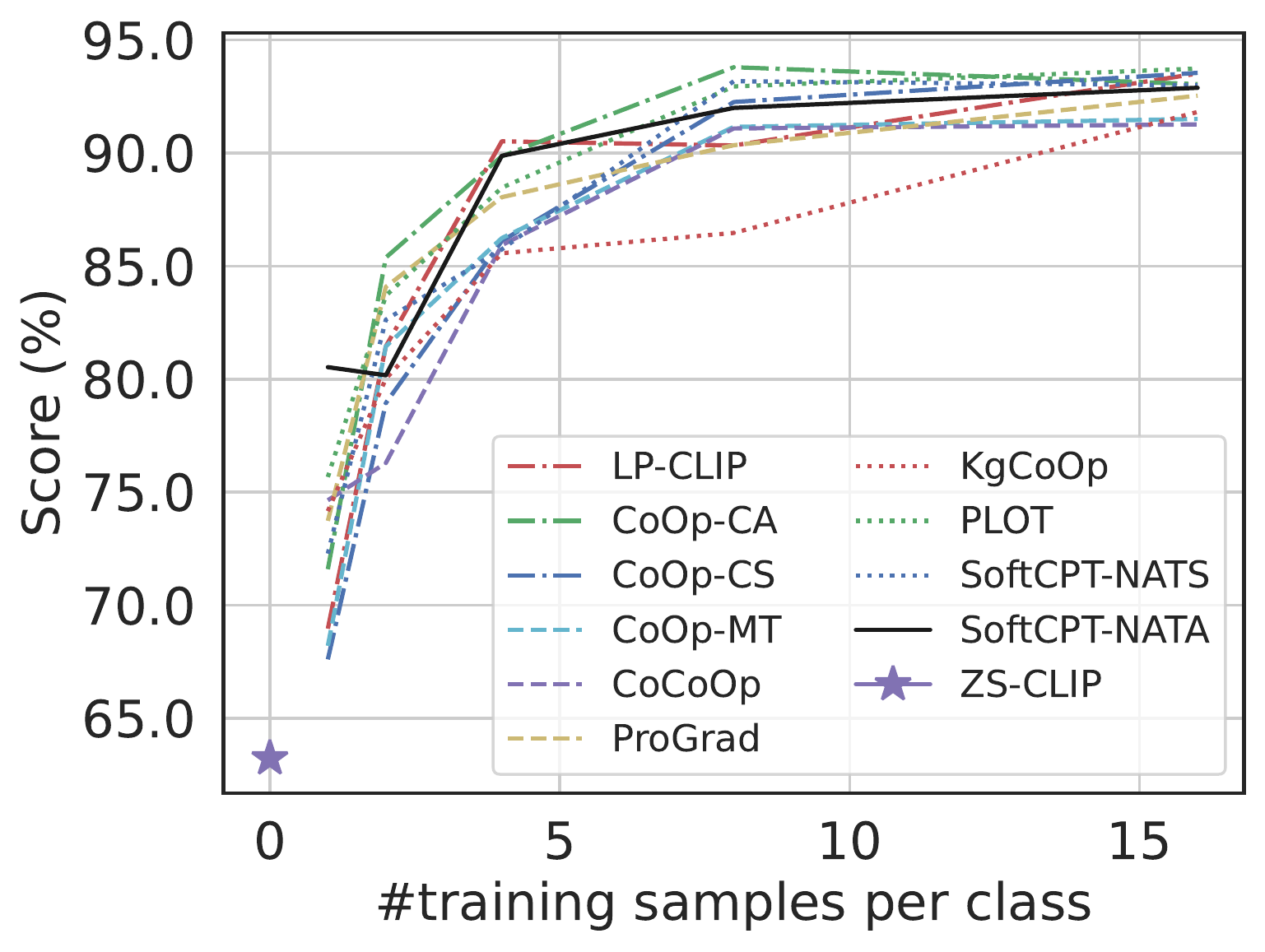}
		\caption{shoe material}
	\end{subfigure}
	\\

    \begin{subfigure}[t]{0.24\linewidth}
    	\centering
    	\includegraphics[width=1.6in]{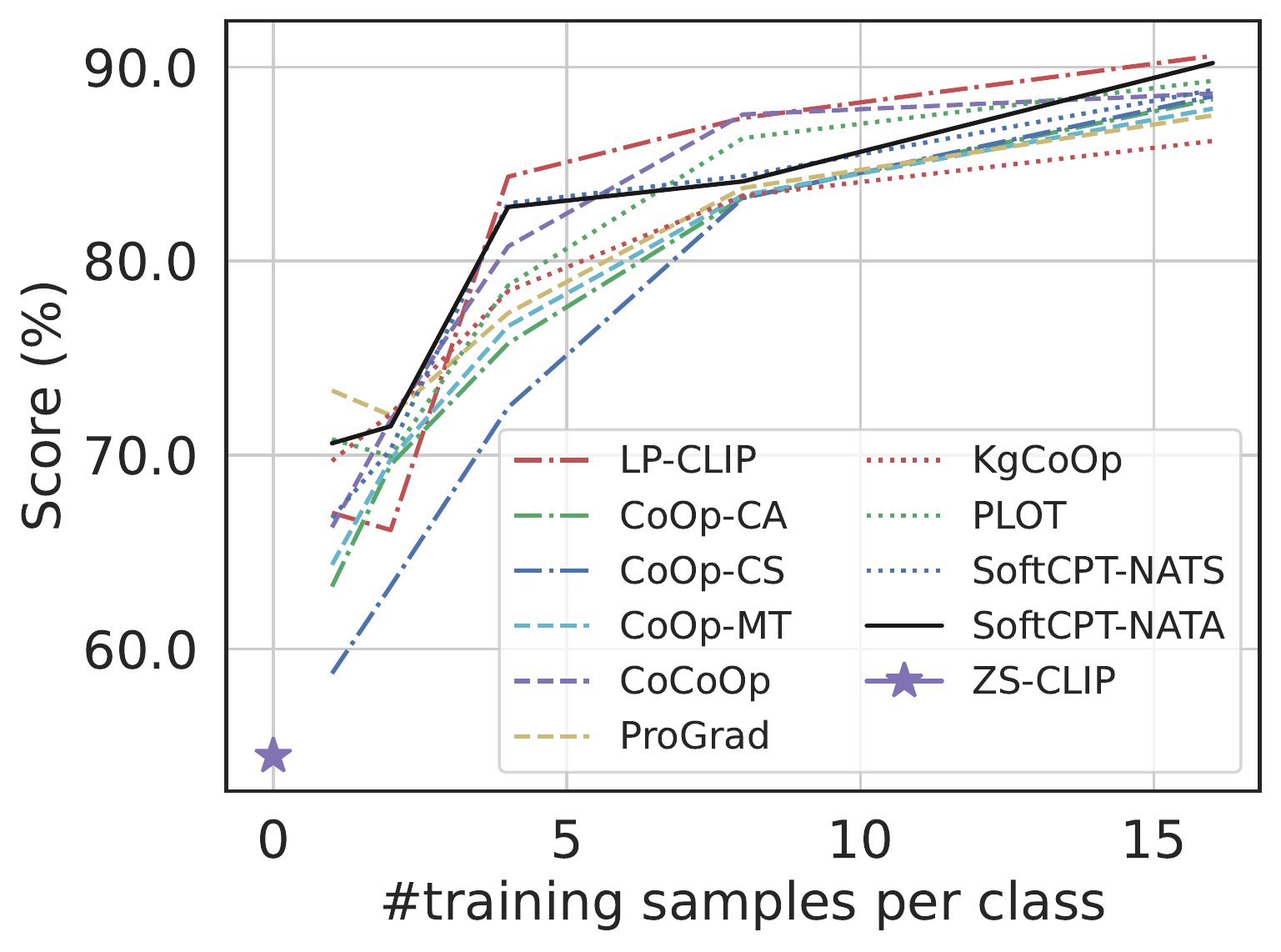}
    	\caption{shoe styleshoe style}
    \end{subfigure}
    \begin{subfigure}[t]{0.24\linewidth}
    	\centering
    	\includegraphics[width=1.6in]{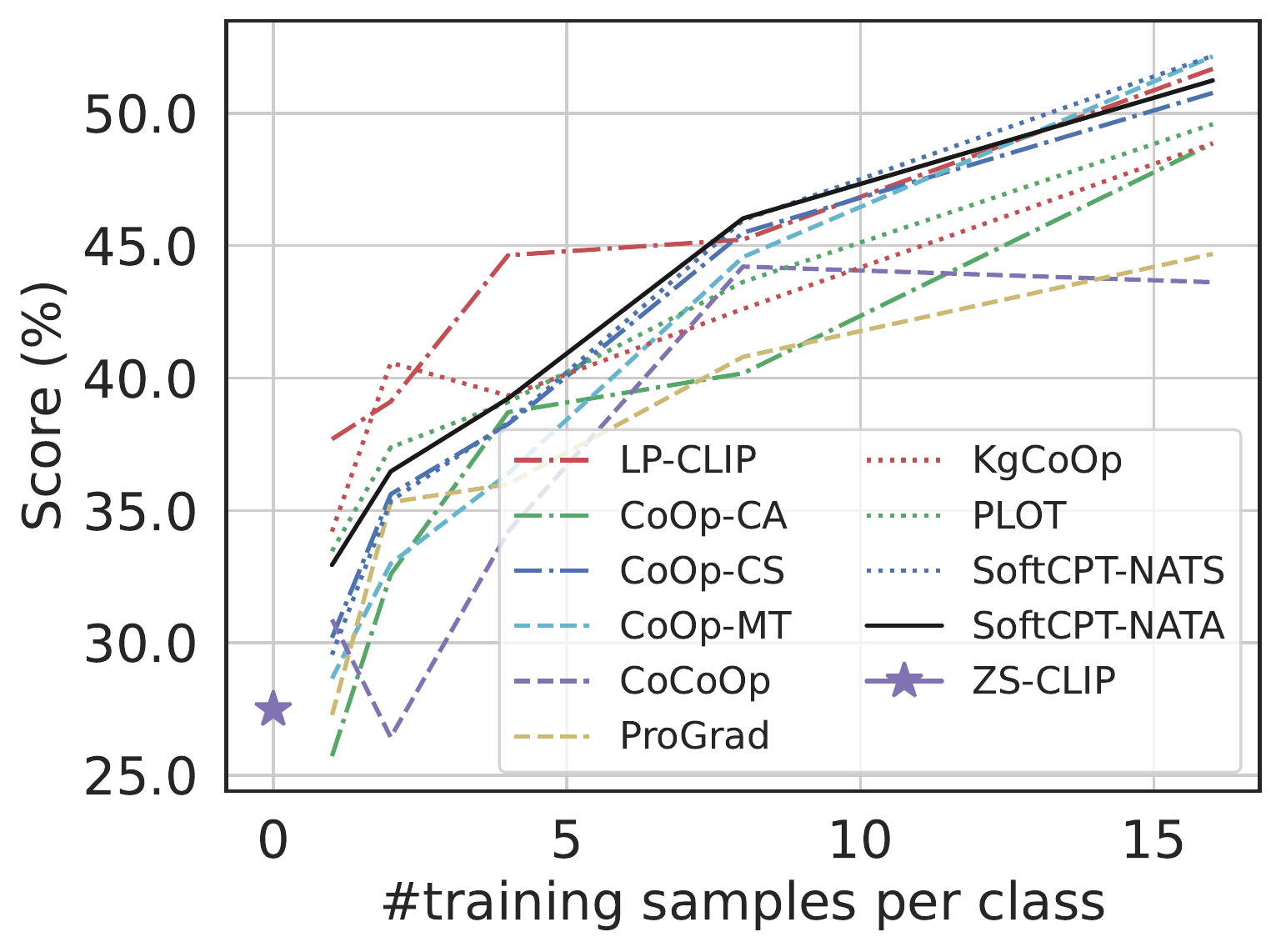}
    	\caption{heel shape}
    \end{subfigure}
    \begin{subfigure}[t]{0.24\linewidth}
    	\centering
    	\includegraphics[width=1.6in]{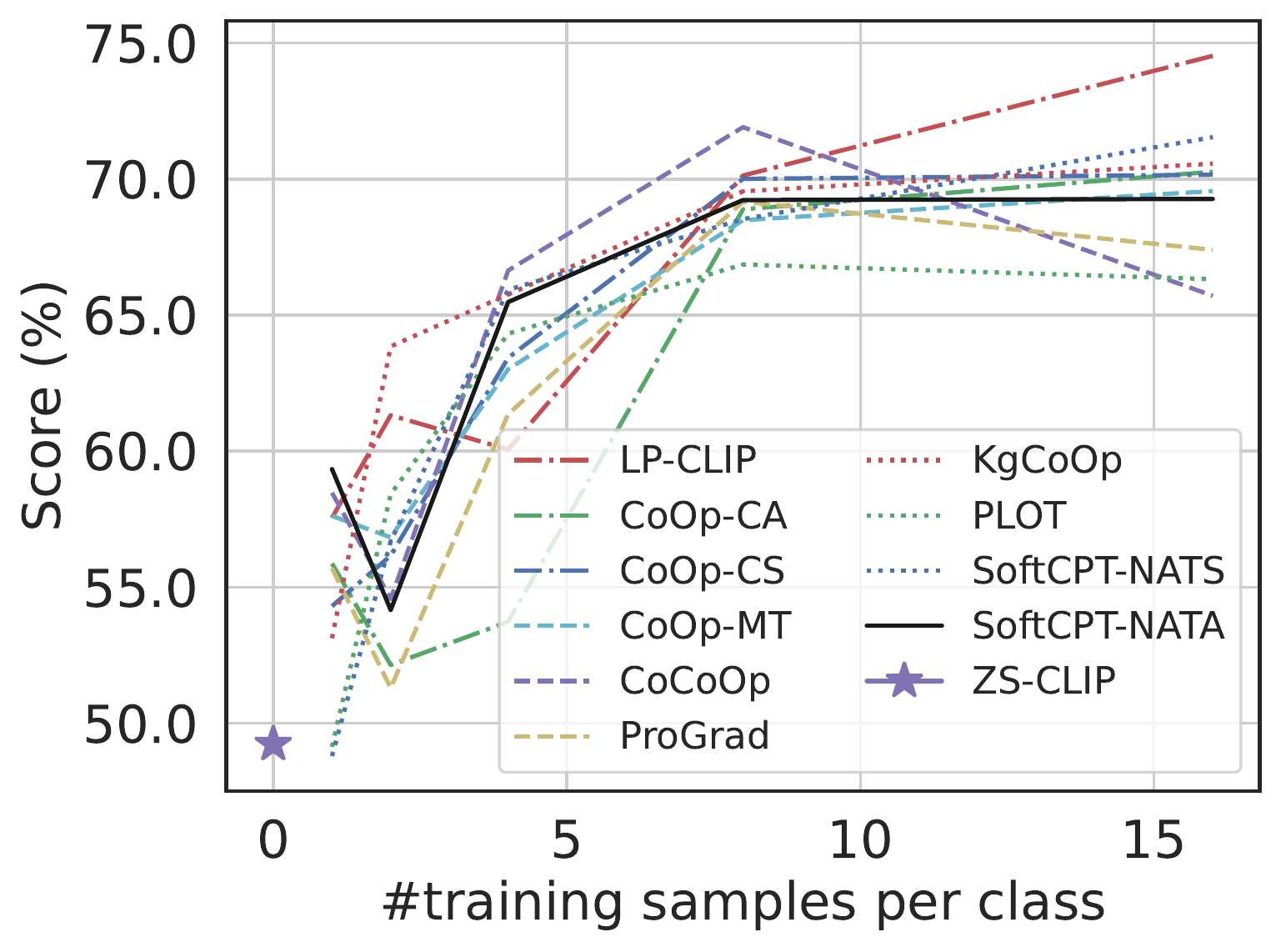}
    	\caption{heel thickness}
    \end{subfigure}
    \begin{subfigure}[t]{0.24\linewidth}
    	\centering
    	\includegraphics[width=1.6in]{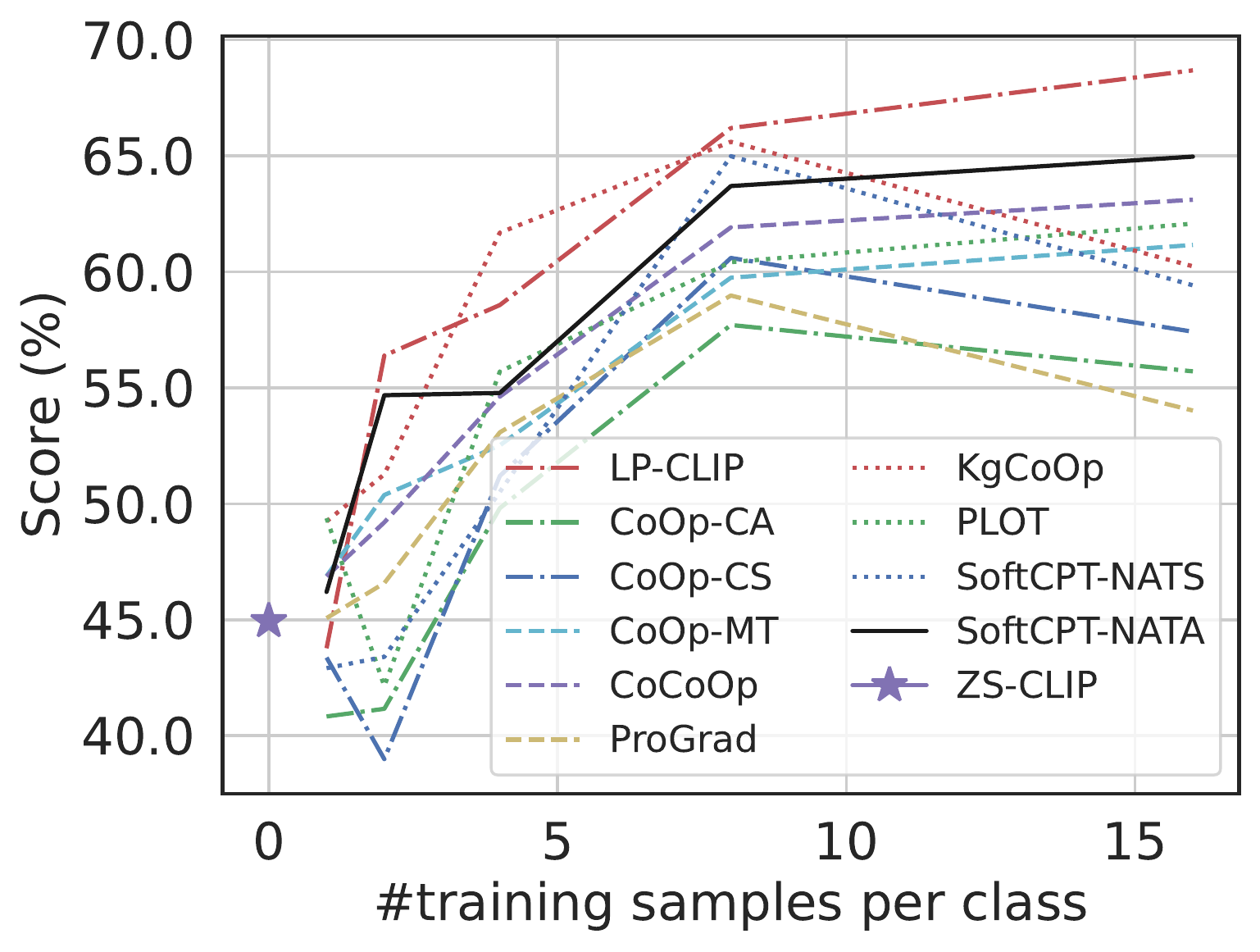}
    	\caption{heel height}
    \end{subfigure}
    \\
    
    \begin{subfigure}[t]{0.24\linewidth}
    	\centering
    	\includegraphics[width=1.6in]{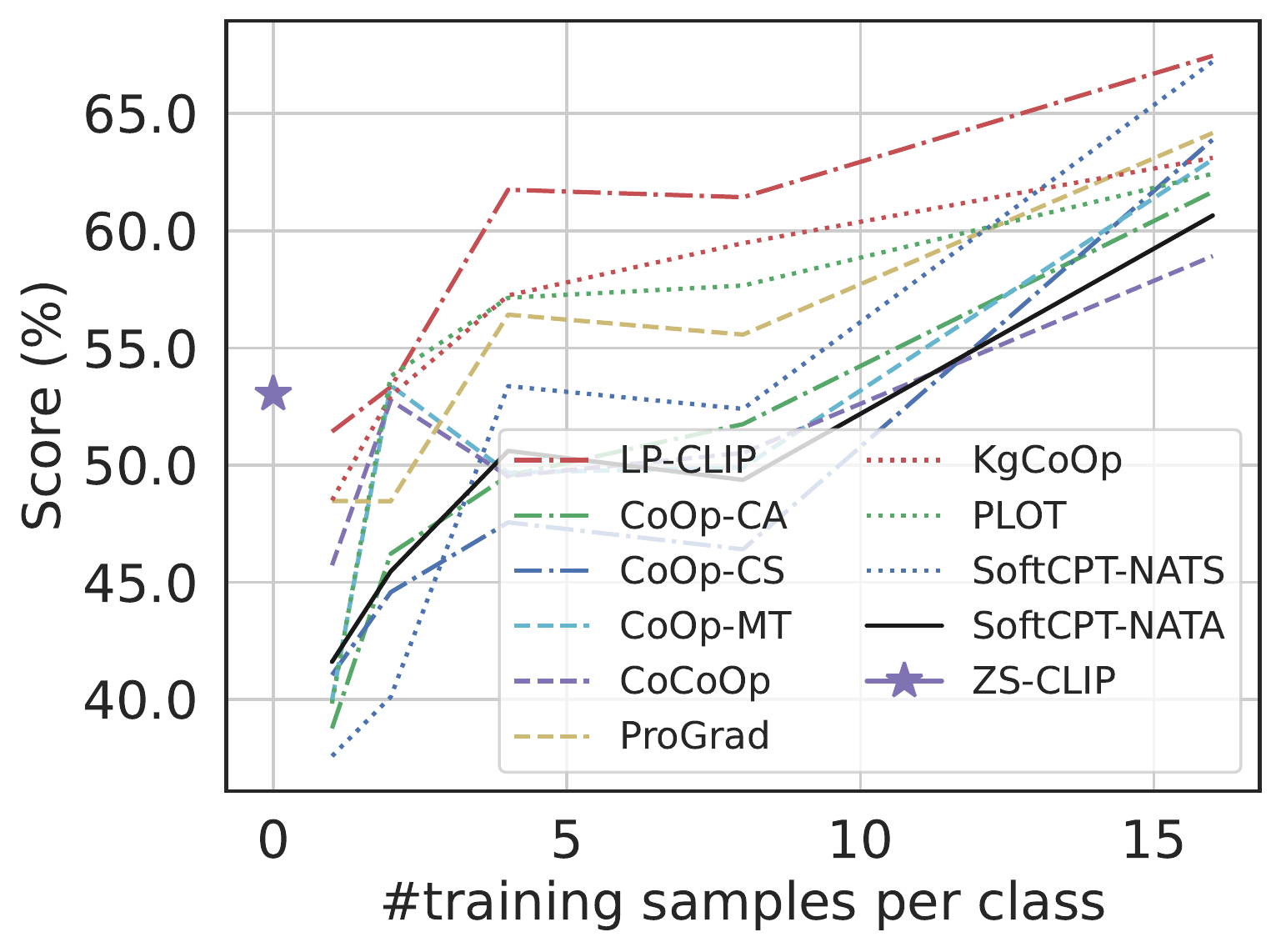}
    	\caption{upper height}
    \end{subfigure}
    \begin{subfigure}[t]{0.24\linewidth}
    	\centering
    	\includegraphics[width=1.6in]{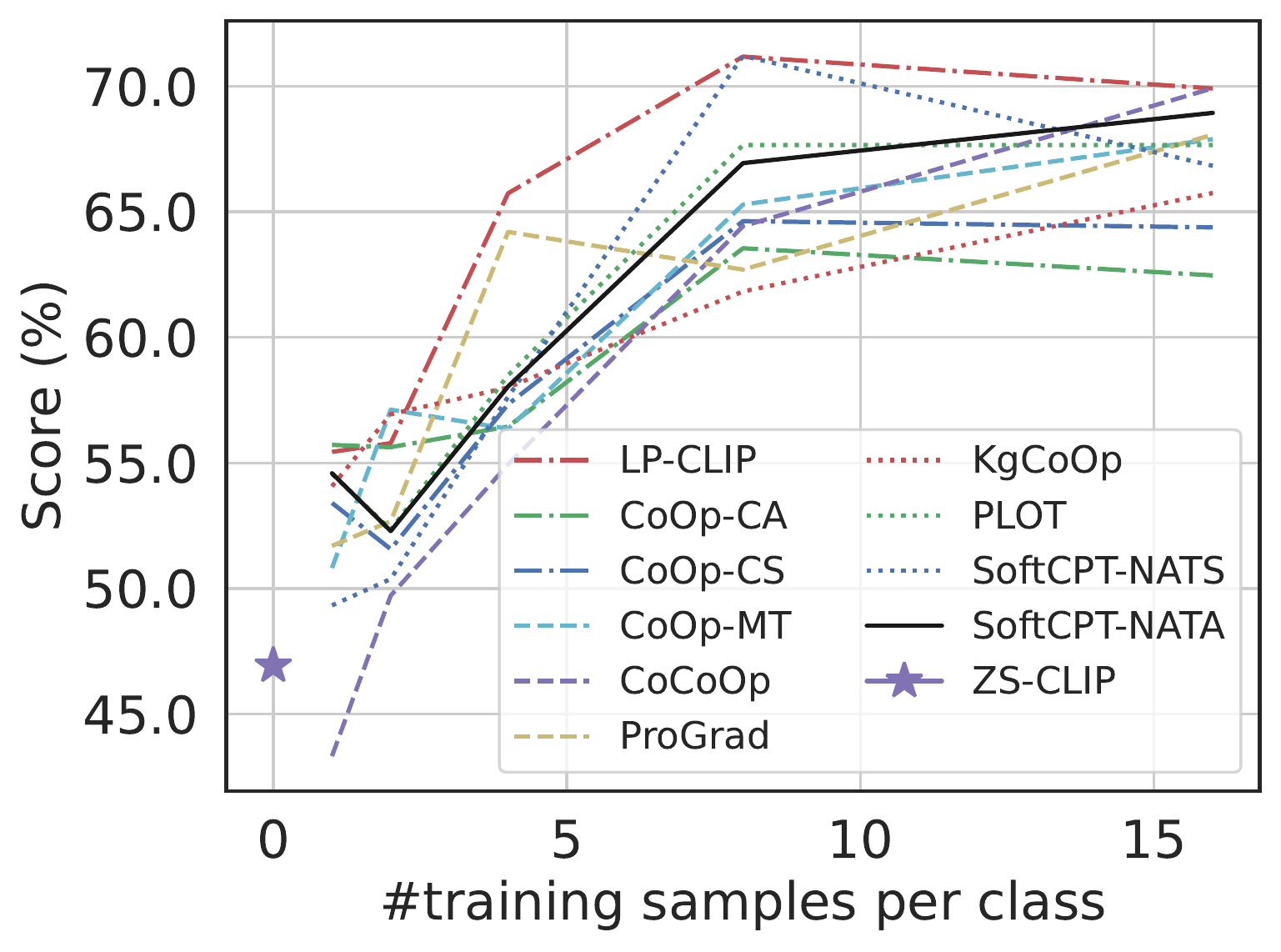}
    	\caption{toe cap style}
    \end{subfigure}
    \begin{subfigure}[t]{0.24\linewidth}
    	\centering
    	\includegraphics[width=1.6in]{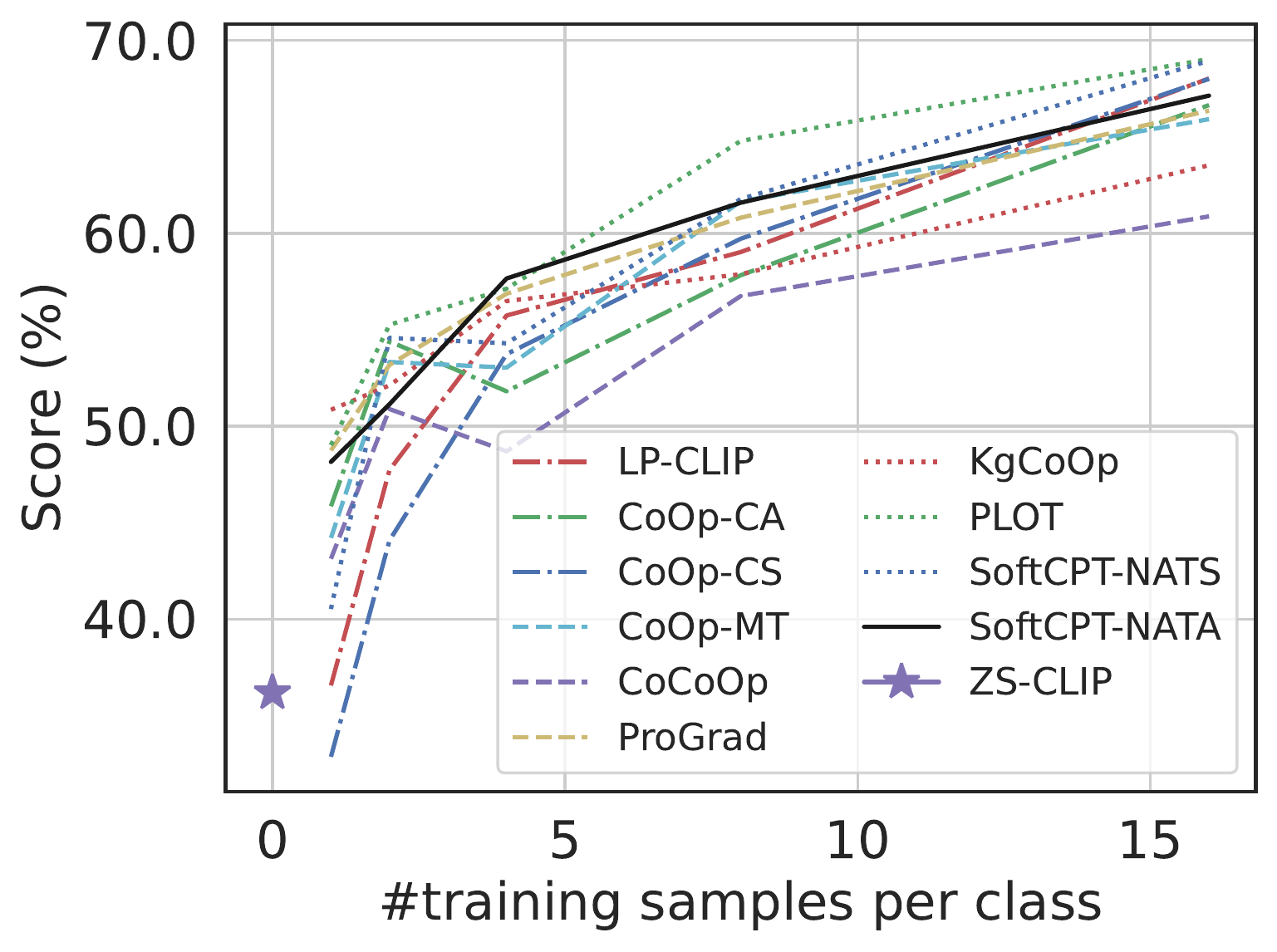}
    	\caption{hat style}
    \end{subfigure}
    \begin{subfigure}[t]{0.24\linewidth}
    	\centering
    	\includegraphics[width=1.6in]{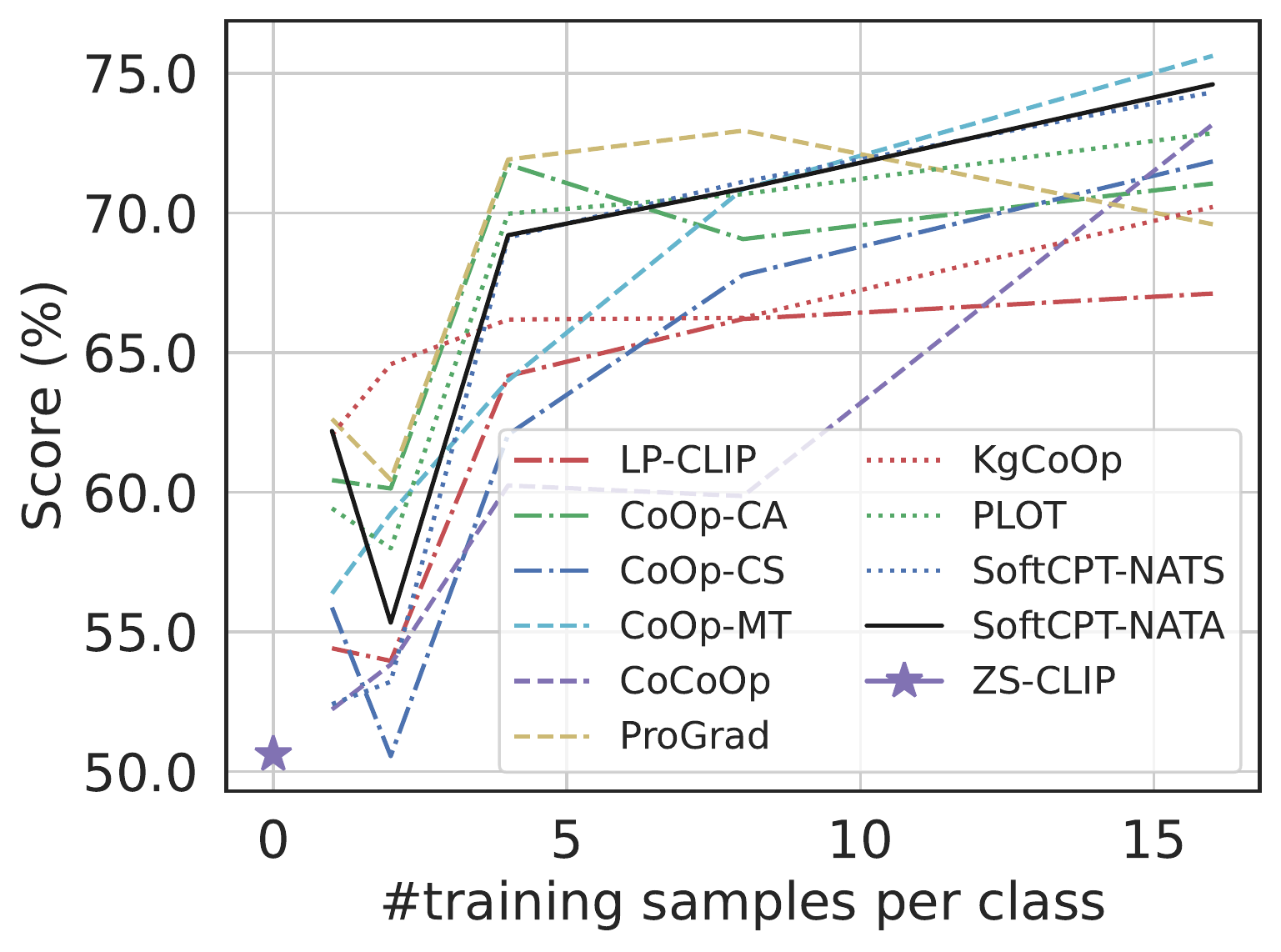}
    	\caption{socks length}
    \end{subfigure}
    \\

    \begin{subfigure}[t]{0.24\linewidth}
    	\centering
    	\includegraphics[width=1.6in]{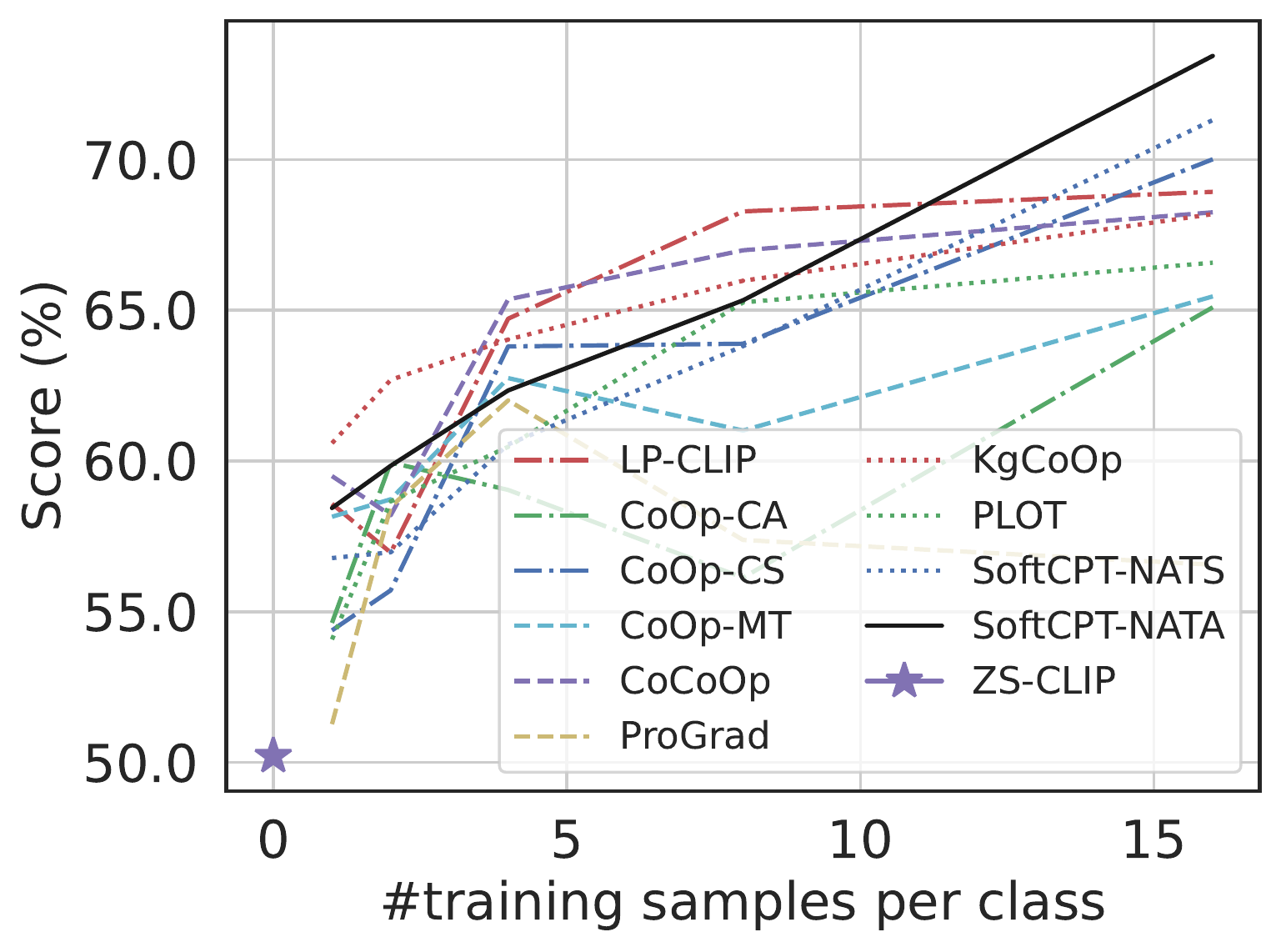}
    	\caption{socks type}
    \end{subfigure}
    \begin{subfigure}[t]{0.24\linewidth}
    	\centering
    	\includegraphics[width=1.6in]{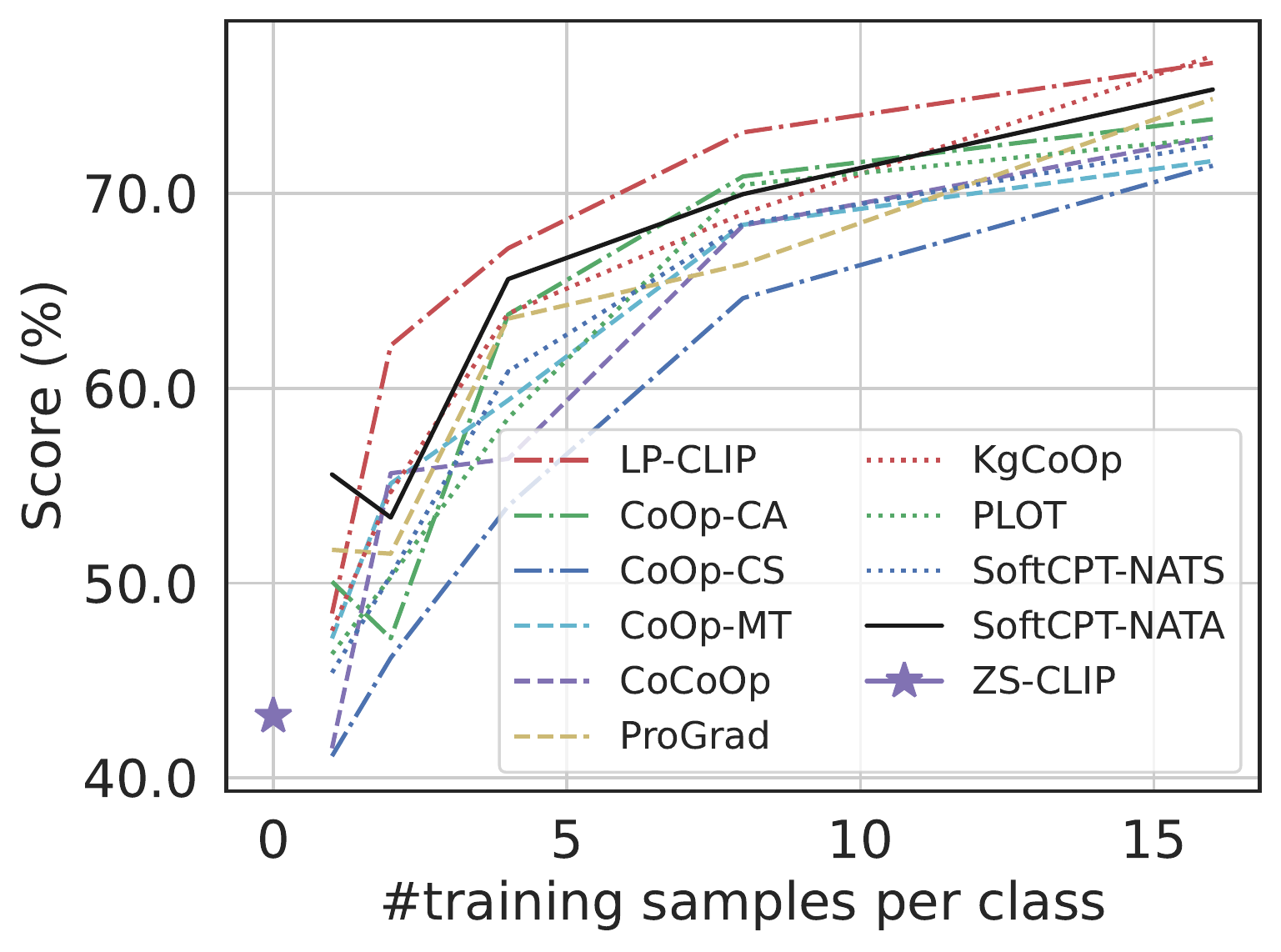}
    	\caption{skirt length}
    \end{subfigure}
    \begin{subfigure}[t]{0.24\linewidth}
    	\centering
    	\includegraphics[width=1.6in]{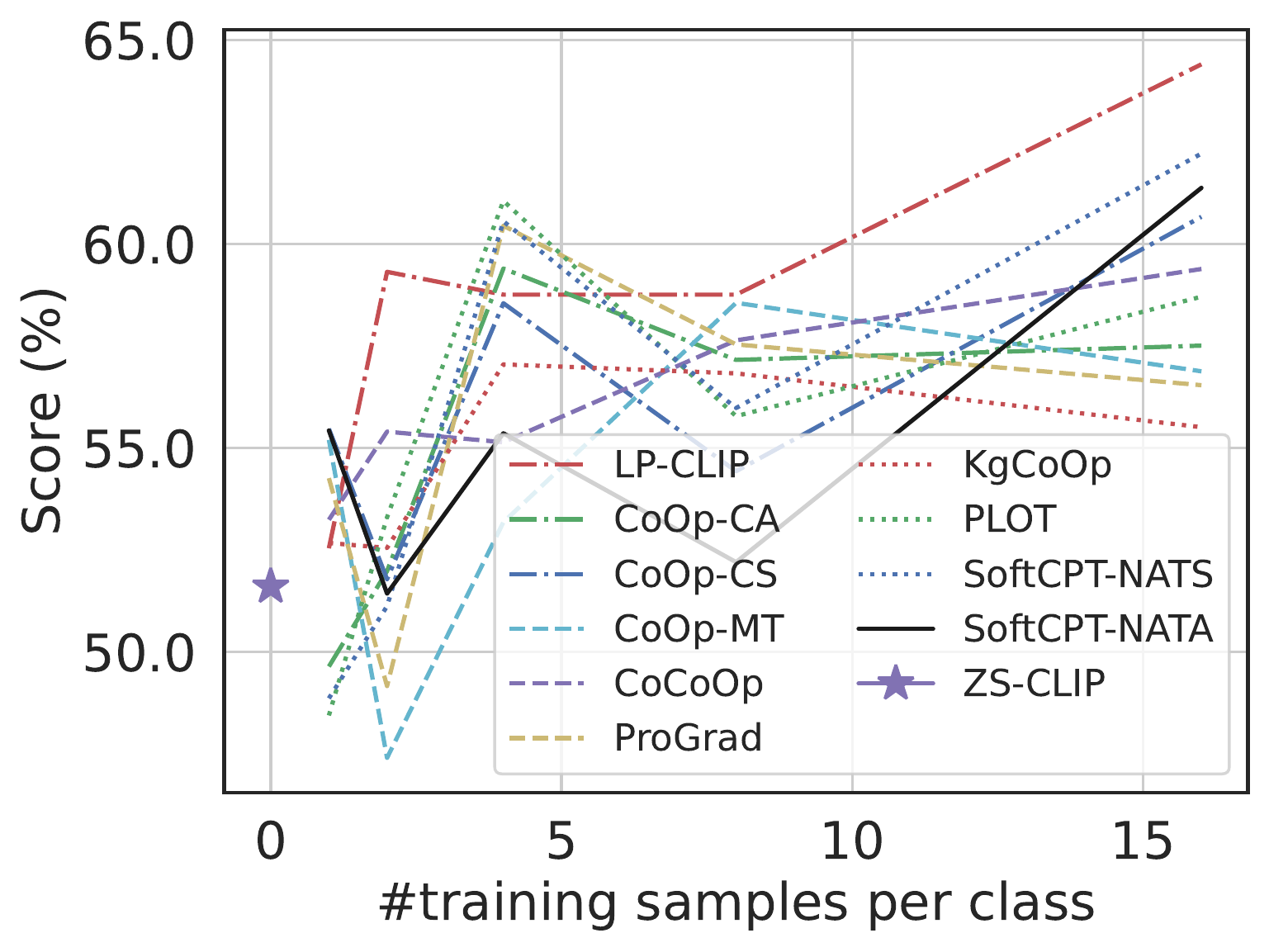}
    	\caption{number of button rows}
    \end{subfigure}
    \begin{subfigure}[t]{0.24\linewidth}
    	\centering
    	\includegraphics[width=1.6in]{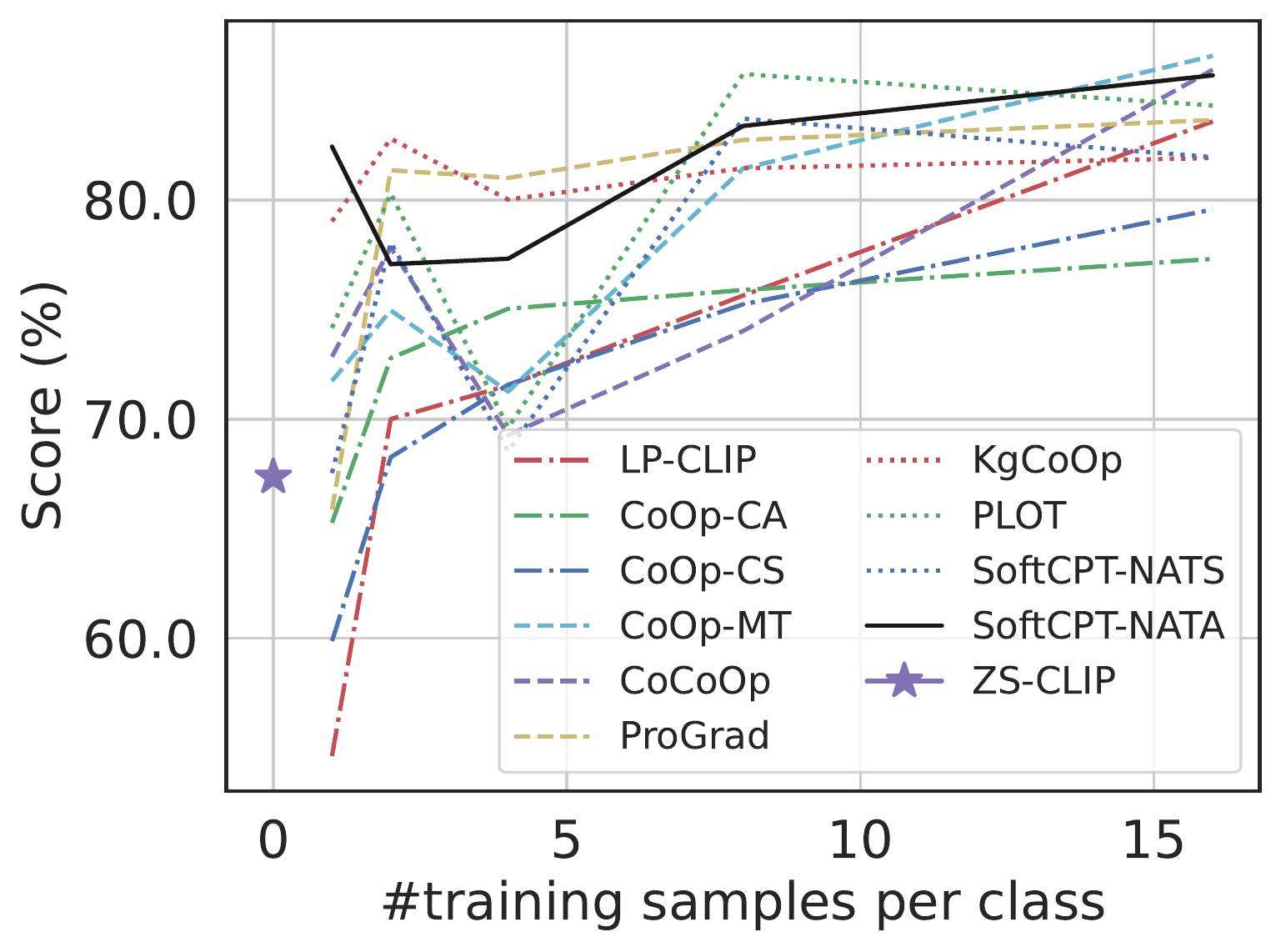}
    	\caption{underwear style}
    \end{subfigure}
	
	\caption{Per-task results on Fashion-20.}
	\label{fig:fashionv1_per_task_results}
\end{figure*}

\begin{figure*}[!t]
	\centering
	\begin{subfigure}[t]{0.4\linewidth}
		\centering
		\includegraphics[width=2.2in]{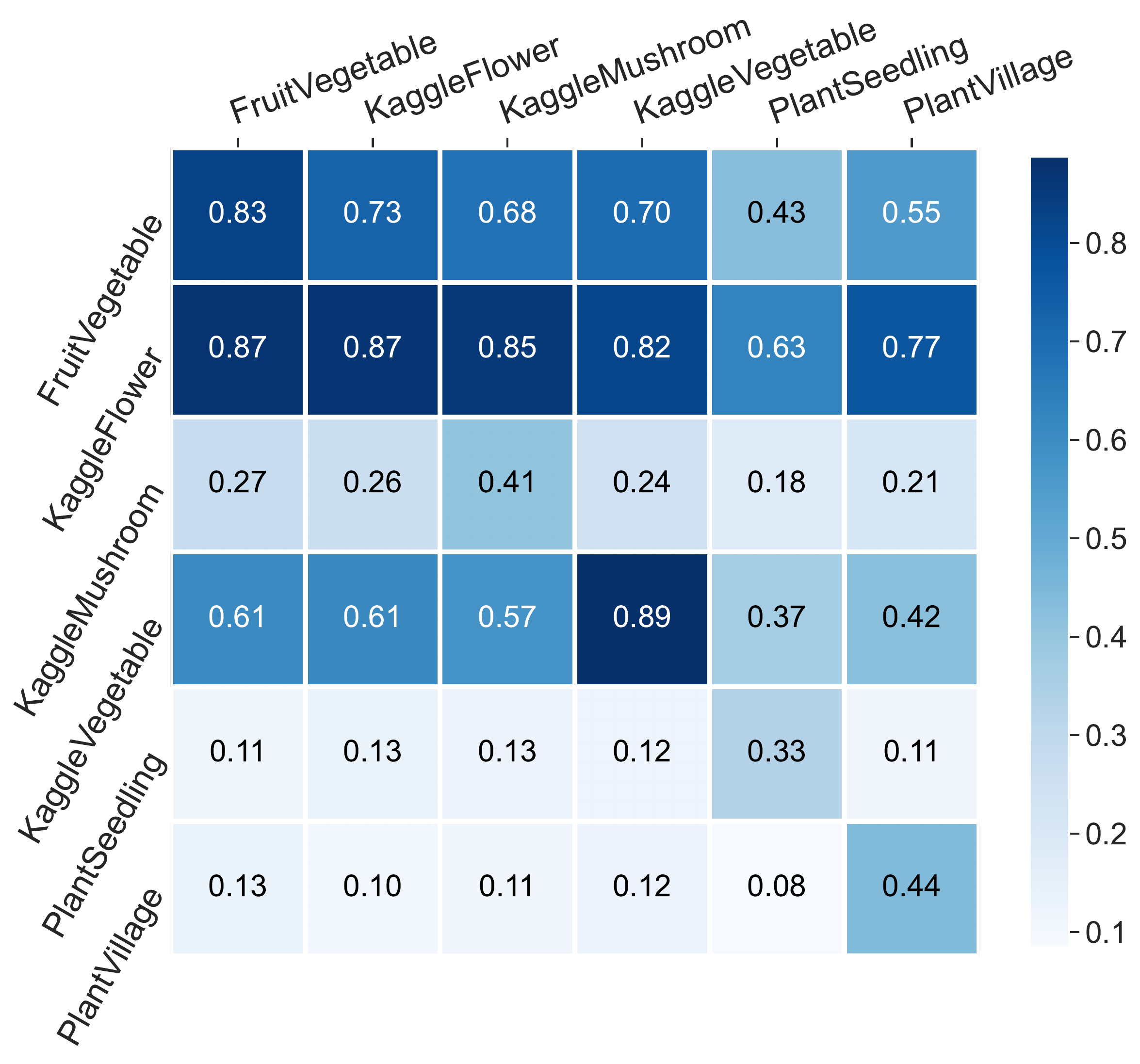}
		\caption{$\boldsymbol{S}$}
		\label{fig:corr_map_a}
	\end{subfigure}
	\begin{subfigure}[t]{0.4\linewidth}
		\centering
	    \includegraphics[width=2.2in]{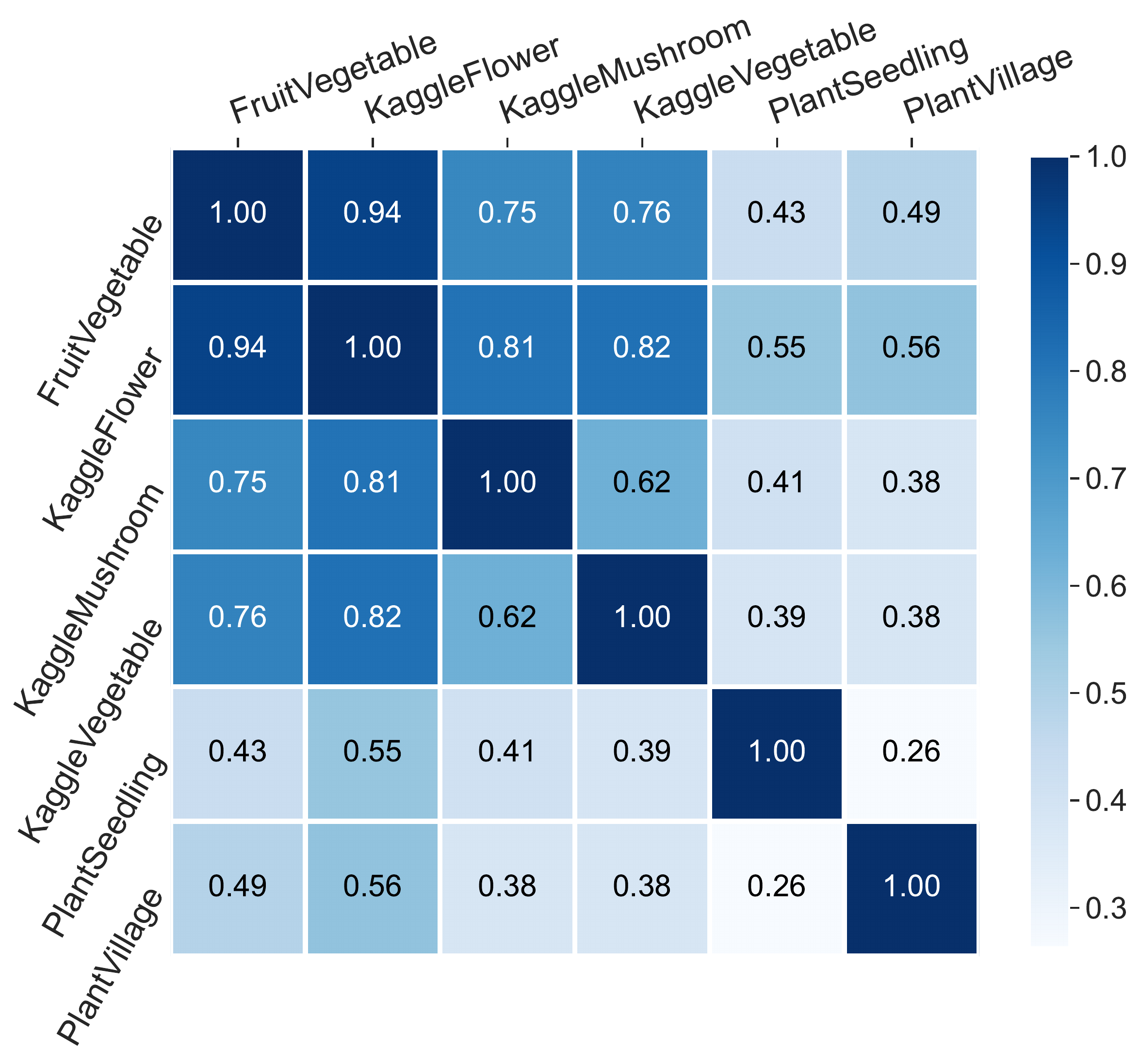}
	    \caption{$\boldsymbol{S}_\text{oracle}$}
	    \label{fig:corr_map_b}
	\end{subfigure}
	\\
	\begin{subfigure}[t]{0.4\linewidth}
		\centering
		\includegraphics[width=2.2in]{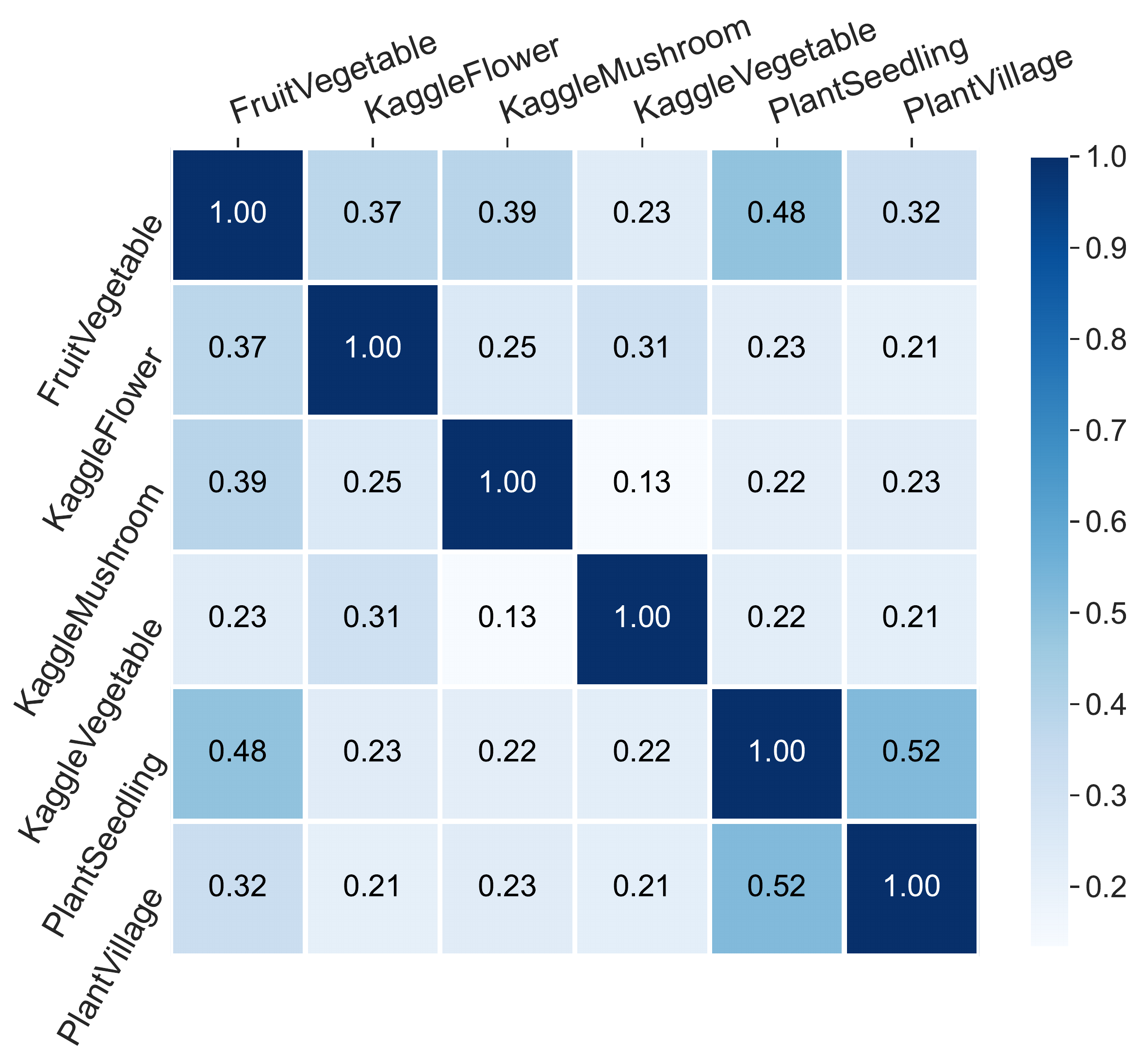}
		\caption{$\boldsymbol{S}_\text{st}$}
		\label{fig:corr_map_c}
	\end{subfigure}
	\begin{subfigure}[t]{0.4\linewidth}
		\centering
		\includegraphics[width=2.2in]{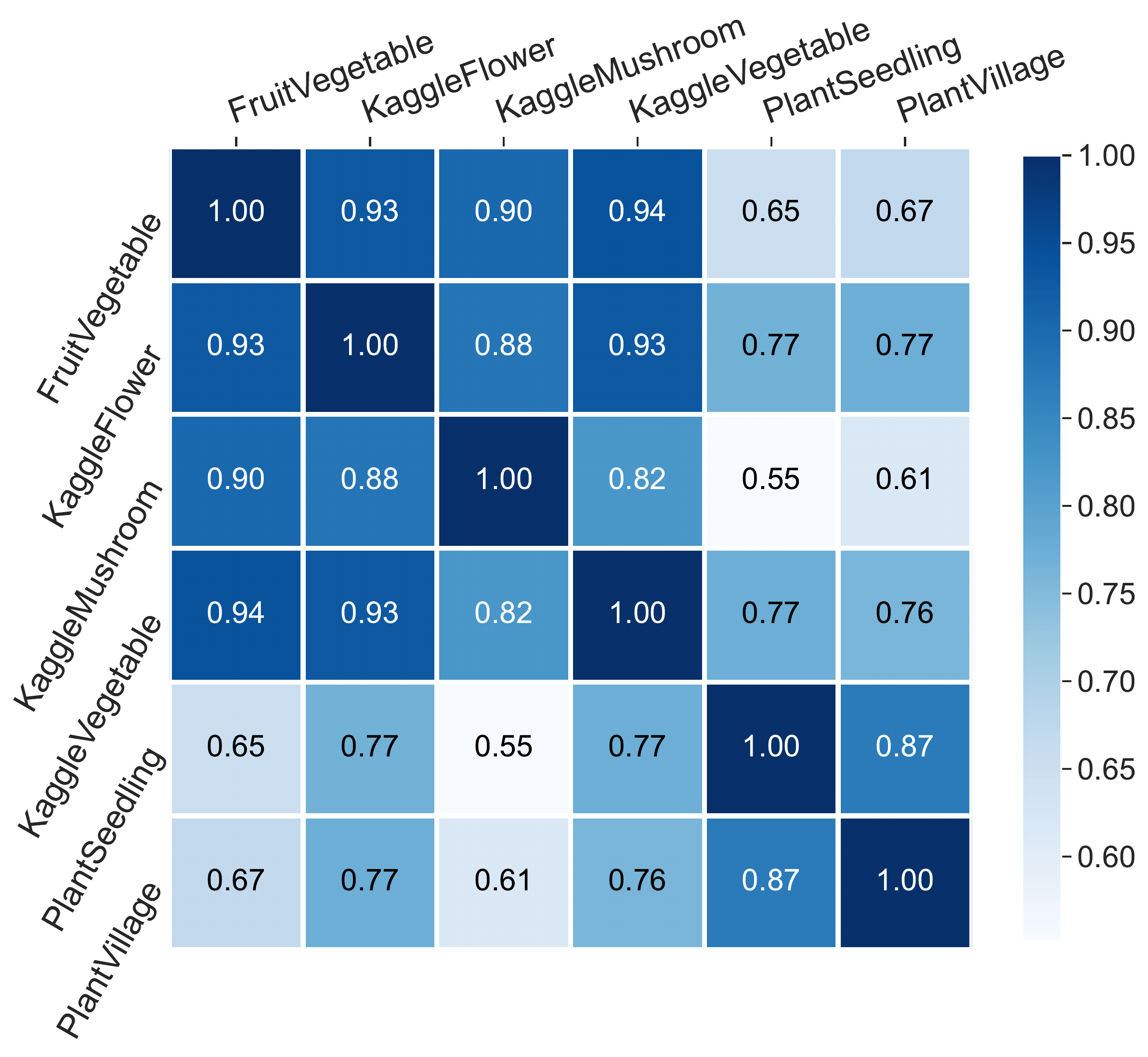}
		\caption{$\boldsymbol{S}_\text{mt}$}
		\label{fig:corr_map_d}
	\end{subfigure}

	\caption{Visualization of task correlation maps on Plant-6. See main text for the description of each map.}
	\label{fig:corr_map}
\end{figure*}

\textbf{Parameter Count and Training Time.} The parameter count and training time of different prompt tuning methods are listed in the second and third column in Table~\ref{tab:main_numerical_results}, respectively. The number of tunable parameters of SoftCPT-NATA is about 8M, which is larger than CoOp and other related methods. However, the relatively large parameter count of SoftCPT is not a disadvantage. The reasons are manifold. First, SoftCPT can maintain the training speed even though there are lots of parameters being optimized. Second, tunable parameter count has no negative impact on the inference speed and storage usage as the text features of class names can be pre-computed. Third, the number of parameters of SoftCPT-NATA does not increase with the number of categories or tasks. By contrast, the parameter count of CoOp-CA, ProGrad, KgCoOp and PLOT increases with the task number and the parameter count of CoOp-CS increases with the class number.

\textbf{Per-task Results.} The per-task results on four datasets are shown in Fig.~\ref{fig:mtcv1_per_task_results}, Fig.~\ref{fig:rsv1_per_task_results} and Fig.~\ref{fig:fashionv1_per_task_results}. It is evident that the performance of linear probe is quite unstable. On Caltech101, DTD, FGVCAircraft, KaggleVegetable and shoe style, it is one of the best performing methods. However, on Food101, Oxford-Pets, SUN397, UCF101, FruitVegetable, WHURS19 and sleeve length, it manifests the worst performance.

For fine-grained tasks, such as Flowers102, PlantSeedling and PlantVillage, CoOp-CS is better than CoOp-CA, while for coarse-grained tasks, such as Caltech101 and SUN397, CoOp-CA is better. This observation is consistent to that in CoOp~\cite{CoOp}. 

As for CoOp-MT, the growth rate of score with more training samples slows down compared to other methods, for example on DTD and EuroSAT, which implies the hard prompt sharing must be a bottleneck of activating the capability of pre-trained models. 

SoftCPT-NATA and SoftCPT-NATS exhibit consistently excellent performances on most tasks besides some fine-grained tasks, such as FGVCAircraft, KaggleMushroom and PlantVillage. To enhance performance on these fine-grained tasks, a promising approach would be to incorporate class-specific information in a more strategic manner. Nevertheless, how to trade off the performances between coarse-grained tasks and fine-grained tasks is a challenging yet interesting question.

\subsection{Visualization}
We visualize the task correlation map on Plant-6. Let us denote the aforementioned score matrix computed by transferring CoOp's prompts as $\boldsymbol{S}$. The normalized score matrix is given by $\boldsymbol{S}'=\boldsymbol{S}*\text{diag}(\boldsymbol{d})$, where $\text{diag}(\cdot)$ is an operator to construct diagonal matrix based on a vector, and the item of $\boldsymbol{d}$ is the inverse of the corresponding value on the diagonal position of $\boldsymbol{S}$. The symmetric task similarity matrix $\boldsymbol{S}_\text{oracle}$ is the mean between $\boldsymbol{S}'$ and its transpose. Let $\boldsymbol{S}_\text{st}$ ($\boldsymbol{S}_\text{mt}$) denotes the single-task (multi-task) task similarity matrix, whose elements are defined as pairwise cosine similarity between two corresponding prompt features, which are computed by encoding the prompt contexts in CoOp-CA (SoftCPT-NATA) with CLIP's text encoder. The correlation maps are shown in Fig.~\ref{fig:corr_map}. The correlation coefficient between upper triangle of $\boldsymbol{S}_\text{oracle}$ and $\boldsymbol{S}_\text{st}$ is -0.03, and that between $\boldsymbol{S}_\text{oracle}$ and $\boldsymbol{S}_\text{mt}$ is 0.32. This means that the proposed method can better reflect the task relatedness.

\section{Conclusion}
\label{sec:conclusion}
Vision-language models (VLMs) are recently adapted to few-shot image recognition tasks by optimizing prompt context. However, existing works of prompt tuning target on single-task learning, ignoring the relation between downstream tasks. Therefore, it is necessary to investigate if introducing multi-task learning to prompt tuning of VLMs is feasible and effective. 

For this aim, we proposed the soft context shared prompt tuning, SoftCPT, which adopts a shared meta network across all tasks to generate the context of a task based on its task description text. The meta network involves extracting task features by pre-trained language model and transforming the task features to needed context by a linear sub-network. The meta network was trained on the joint training set of multiple tasks. To validate the effectiveness of SoftCPT, a series of experiments were conducted. By analyzing the experimental results, we found that: 1) SoftCPT surpasses single-task based CoOp and the simple hard sharing of context, implying that multi-task learning if used in a proper manner does help to improve the overall performance of prompt tuning on all tasks; 2) SoftCPT achieves small improvements on generalized dataset but more significant improvements on specialized datasets, implying that to fully leverage the advantages of multi-task prompt tuning, it is better for the tasks to have strong relationship; 3) The proposed method using multi-task learning can achieve better results compared to state-of-the-art methods especially on specialized datasets. Based on these observations, we can conclude that multi-task learning should be helpful for prompt tuning of VLMs.

Our findings have some significant meanings. Firstly, in real-world applications, it is very common that different tasks have some relation. By leveraging the proposed method, it is able to further enhance the recognition accuracy, demonstrating its practical value. Secondly, our findings will help the computer vision community to realize the importance of multi-task prompt learning, thereby further promoting the development of combining multi-task learning and prompt learning in the adaption of vision-language models.

The proposed method is not without flaws. First, although SoftCPT can still improve the overall accuracy on the generalized dataset General-10, the accuracy boost is less significant. This is mainly due to the weak relation between different tasks. Second, SoftCPT uses a linear sub-network to convert task features to context. However, this linear layer has a relatively large number of parameters.

In the future, our primary focus will be on further enhancing multi-task prompt tuning, particularly for generalized datasets. Additionally, we aim to further compress the parameter count, which could potentially improve the data efficiency of prompt tuning.

{\small
\bibliographystyle{unsrt}
\bibliography{main}

\begin{thebibliography}{10}

\bibitem{ALIGN}
Chao Jia, Yinfei Yang, Ye~Xia, Yi{-}Ting Chen, Zarana Parekh, Hieu Pham,
  Quoc~V. Le, Yun{-}Hsuan Sung, Zhen Li, and Tom Duerig.
\newblock Scaling up visual and vision-language representation learning with
  noisy text supervision.
\newblock In {\em ICML}, pages 4904--4916, 2021.

\bibitem{CLIP}
Alec Radford, Jong~Wook Kim, Chris Hallacy, Aditya Ramesh, Gabriel Goh,
  Sandhini Agarwal, Girish Sastry, Amanda Askell, Pamela Mishkin, Jack Clark,
  Gretchen Krueger, and Ilya Sutskever.
\newblock Learning transferable visual models from natural language
  supervision.
\newblock In {\em ICML}, pages 8748--8763, 2021.

\bibitem{SLIP}
Norman Mu, Alexander Kirillov, David~A. Wagner, and Saining Xie.
\newblock {SLIP:} self-supervision meets language-image pre-training.
\newblock {\em CoRR}, abs/2112.12750, 2021.

\bibitem{LI2023103497}
Xiang Li, Congcong Wen, Yuan Hu, and Nan Zhou.
\newblock Rs-clip: Zero shot remote sensing scene classification via
  contrastive vision-language supervision.
\newblock {\em International Journal of Applied Earth Observation and
  Geoinformation}, 124:103497, 2023.

\bibitem{CoOp}
Kaiyang Zhou, Jingkang Yang, Chen~Change Loy, and Ziwei Liu.
\newblock Learning to prompt for vision-language models.
\newblock {\em IJCV}, 130(9):2337--2348, 2022.

\bibitem{DMPT}
Yinghui Xing, Qirui Wu, De~Cheng, Shizhou Zhang, Guoqiang Liang, Peng Wang, and
  Yanning Zhang.
\newblock Dual modality prompt tuning for vision-language pre-trained model.
\newblock {\em IEEE Transactions on Multimedia}, pages 1--13, 2023.

\bibitem{OpenVocaOD_VLM}
Yu~Du, Fangyun Wei, Zihe Zhang, Miaojing Shi, Yue Gao, and Guoqi Li.
\newblock Learning to prompt for open-vocabulary object detection with
  vision-language model.
\newblock In {\em CVPR}, pages 14084--14093, 2022.

\bibitem{PromptDet}
Chengjian Feng, Yujie Zhong, Zequn Jie, Xiangxiang Chu, Haibing Ren, Xiaolin
  Wei, Weidi Xie, and Lin Ma.
\newblock {PromptDet}: Towards open-vocabulary detection using uncurated
  images.
\newblock In {\em ECCV}, 2022.

\bibitem{gu2022openvocabulary}
Xiuye Gu, Tsung-Yi Lin, Weicheng Kuo, and Yin Cui.
\newblock Open-vocabulary object detection via vision and language knowledge
  distillation.
\newblock In {\em ICLR}, 2022.

\bibitem{lueddecke22_cvpr}
Timo L\"uddecke and Alexander Ecker.
\newblock Image segmentation using text and image prompts.
\newblock In {\em CVPR}, pages 7086--7096, 2022.

\bibitem{DenseCLIP}
Yongming Rao, Wenliang Zhao, Guangyi Chen, Yansong Tang, Zheng Zhu, Guan Huang,
  Jie Zhou, and Jiwen Lu.
\newblock {DenseCLIP}: Language-guided dense prediction with context-aware
  prompting.
\newblock In {\em CVPR}, pages 18082--18091, 2022.

\bibitem{FISCHER2024103024}
Marc Fischer, Alexander Bartler, and Bin Yang.
\newblock Prompt tuning for parameter-efficient medical image segmentation.
\newblock {\em Medical Image Analysis}, 91:103024, 2024.

\bibitem{PowerofScale}
Brian Lester, Rami Al{-}Rfou, and Noah Constant.
\newblock The power of scale for parameter-efficient prompt tuning.
\newblock In {\em EMNLP}, pages 3045--3059, 2021.

\bibitem{PrefixTuning}
Xiang~Lisa Li and Percy Liang.
\newblock {Prefix-Tuning}: Optimizing continuous prompts for generation.
\newblock In {\em ACL-IJCNLP}, pages 4582--4597, 2021.

\bibitem{GPT_Understands}
Xiao Liu, Yanan Zheng, Zhengxiao Du, Ming Ding, Yujie Qian, Zhilin Yang, and
  Jie Tang.
\newblock {GPT} understands, too.
\newblock {\em CoRR}, abs/2103.10385, 2021.

\bibitem{linear_probe}
Yonglong Tian, Yue Wang, Dilip Krishnan, Joshua~B. Tenenbaum, and Phillip
  Isola.
\newblock Rethinking few-shot image classification: {A} good embedding is all
  you need?
\newblock In {\em ECCV}, pages 266--282, 2020.

\bibitem{MT_DNN_Survey}
Michael Crawshaw.
\newblock Multi-task learning with deep neural networks: {A} survey.
\newblock {\em CoRR}, abs/2009.09796, 2020.

\bibitem{KendallGC18}
Alex Kendall, Yarin Gal, and Roberto Cipolla.
\newblock Multi-task learning using uncertainty to weigh losses for scene
  geometry and semantics.
\newblock In {\em CVPR}, pages 7482--7491, 2018.

\bibitem{AttnMixSoftPromptTune}
Akari Asai, Mohammadreza Salehi, Matthew~E. Peters, and Hannaneh Hajishirzi.
\newblock Attentional mixtures of soft prompt tuning for parameter-efficient
  multi-task knowledge sharing.
\newblock {\em CoRR}, abs/2205.11961, 2022.

\bibitem{Intrinsic_Task_Subspace}
Yujia Qin, Xiaozhi Wang, YuSheng Su, Yankai Lin, Ning Ding, Zhiyuan Liu, Juanzi
  Li, Lei Hou, Peng Li, Maosong Sun, and Jie Zhou.
\newblock Exploring low-dimensional intrinsic task subspace via prompt tuning.
\newblock {\em CoRR}, abs/2110.07867, 2021.

\bibitem{VisualBert}
Liunian~Harold Li, Mark Yatskar, Da~Yin, Cho{-}Jui Hsieh, and Kai{-}Wei Chang.
\newblock {VisualBERT}: {A} simple and performant baseline for vision and
  language.
\newblock {\em CoRR}, abs/1908.03557, 2019.

\bibitem{Unicoder_VL}
Gen Li, Nan Duan, Yuejian Fang, Ming Gong, and Daxin Jiang.
\newblock {Unicoder-VL}: {A} universal encoder for vision and language by
  cross-modal pre-training.
\newblock In {\em AAAI}, pages 11336--11344, 2020.

\bibitem{ViLT}
Wonjae Kim, Bokyung Son, and Ildoo Kim.
\newblock {ViLT}: Vision-and-language transformer without convolution or region
  supervision.
\newblock In {\em ICML}, pages 5583--5594, 2021.

\bibitem{BERT}
Jacob Devlin, Ming{-}Wei Chang, Kenton Lee, and Kristina Toutanova.
\newblock {BERT:} pre-training of deep bidirectional transformers for language
  understanding.
\newblock In {\em NAACL}, pages 4171--4186, 2019.

\bibitem{RuderH18}
Jeremy Howard and Sebastian Ruder.
\newblock Universal language model fine-tuning for text classification.
\newblock In {\em ACL}, pages 328--339, 2018.

\bibitem{FineTuning_Pretrained_Language_Models}
Jesse Dodge, Gabriel Ilharco, Roy Schwartz, Ali Farhadi, Hannaneh Hajishirzi,
  and Noah~A. Smith.
\newblock Fine-tuning pretrained language models: Weight initializations, data
  orders, and early stopping.
\newblock {\em CoRR}, abs/2002.06305, 2020.

\bibitem{RevisitingBertFT}
Tianyi Zhang, Felix Wu, Arzoo Katiyar, Kilian~Q. Weinberger, and Yoav Artzi.
\newblock Revisiting few-sample {BERT} fine-tuning.
\newblock In {\em ICLR}, 2021.

\bibitem{SMART}
Haoming Jiang, Pengcheng He, Weizhu Chen, Xiaodong Liu, Jianfeng Gao, and Tuo
  Zhao.
\newblock {SMART:} robust and efficient fine-tuning for pre-trained natural
  language models through principled regularized optimization.
\newblock In {\em ACL}, pages 2177--2190, 2020.

\bibitem{FTDistortFeats}
Ananya Kumar, Aditi Raghunathan, Robbie Jones, Tengyu Ma, and Percy Liang.
\newblock Fine-tuning can distort pretrained features and underperform
  out-of-distribution.
\newblock {\em CoRR}, abs/2202.10054, 2022.

\bibitem{AutoPrompt}
Taylor Shin, Yasaman Razeghi, Robert L.~Logan IV, Eric Wallace, and Sameer
  Singh.
\newblock {AutoPrompt}: Eliciting knowledge from language models with
  automatically generated prompts.
\newblock In {\em EMNLP}, pages 4222--4235, 2020.

\bibitem{HoulsbyGJMLGAG19}
Neil Houlsby, Andrei Giurgiu, Stanislaw Jastrzebski, Bruna Morrone, Quentin
  de~Laroussilhe, Andrea Gesmundo, Mona Attariyan, and Sylvain Gelly.
\newblock Parameter-efficient transfer learning for {NLP}.
\newblock In {\em ICML}, pages 2790--2799, 2019.

\bibitem{RuckleGGBP0G21}
Andreas R{\"{u}}ckl{\'{e}}, Gregor Geigle, Max Glockner, Tilman Beck, Jonas
  Pfeiffer, Nils Reimers, and Iryna Gurevych.
\newblock {AdapterDrop}: On the efficiency of adapters in transformers.
\newblock In {\em EMNLP}, pages 7930--7946, 2021.

\bibitem{CLIP_Adapter}
Peng Gao, Shijie Geng, Renrui Zhang, Teli Ma, Rongyao Fang, Yongfeng Zhang,
  Hongsheng Li, and Yu~Qiao.
\newblock {CLIP-Adapter}: Better vision-language models with feature adapters.
\newblock {\em CoRR}, abs/2110.04544, 2021.

\bibitem{sung2022vladapter}
Yi-Lin Sung and Jaemin~Cho andMohit Bansal.
\newblock {VL}-{A}dapter: Parameter-efficient transfer learning for
  vision-and-language tasks.
\newblock In {\em CVPR}, pages 5227--5237, 2022.

\bibitem{RaffelSRLNMZLL20}
Colin Raffel, Noam Shazeer, Adam Roberts, Katherine Lee, Sharan Narang, Michael
  Matena, Yanqi Zhou, Wei Li, and Peter~J. Liu.
\newblock Exploring the limits of transfer learning with a unified text-to-text
  transformer.
\newblock {\em JMLR}, 21:140:1--140:67, 2020.

\bibitem{BrownMRSKDNSSAA20}
Tom~B. Brown, Benjamin Mann, Nick Ryder, Melanie Subbiah, Jared Kaplan,
  Prafulla Dhariwal, Arvind Neelakantan, Pranav Shyam, Girish Sastry, Amanda
  Askell, Sandhini Agarwal, Ariel Herbert{-}Voss, Gretchen Krueger, Tom
  Henighan, Rewon Child, Aditya Ramesh, Daniel~M. Ziegler, Jeffrey Wu, Clemens
  Winter, Christopher Hesse, Mark Chen, Eric Sigler, Mateusz Litwin, Scott
  Gray, Benjamin Chess, Jack Clark, Christopher Berner, Sam McCandlish, Alec
  Radford, Ilya Sutskever, and Dario Amodei.
\newblock Language models are few-shot learners.
\newblock In {\em NeurIPS}, 2020.

\bibitem{GaoFC20}
Tianyu Gao, Adam Fisch, and Danqi Chen.
\newblock Making pre-trained language models better few-shot learners.
\newblock In {\em ACL-IJCNLP}, pages 3816--3830, 2021.

\bibitem{ProGrad}
Beier Zhu, Yulei Niu, Yucheng Han, Yue Wu, and Hanwang Zhang.
\newblock Prompt-aligned gradient for prompt tuning.
\newblock In {\em {IEEE/CVF} International Conference on Computer Vision,
  {ICCV} 2023, Paris, France, October 1-6, 2023}, pages 15613--15623, 2023.

\bibitem{PPT}
Yuxian Gu, Xu~Han, Zhiyuan Liu, and Minlie Huang.
\newblock {PPT:} pre-trained prompt tuning for few-shot learning.
\newblock In {\em ACL}, pages 8410--8423, 2022.

\bibitem{ConvBypass}
Shibo Jie and Zhi{-}Hong Deng.
\newblock Convolutional bypasses are better vision transformer adapters.
\newblock {\em CoRR}, abs/2207.07039, 2022.

\bibitem{ProDA}
Yuning Lu, Jianzhuang Liu, Yonggang Zhang, Yajing Liu, and Xinmei Tian.
\newblock Prompt distribution learning.
\newblock In {\em CVPR}, pages 5206--5215, 2022.

\bibitem{NPS}
Yuanhan Zhang, Kaiyang Zhou, and Ziwei Liu.
\newblock Neural prompt search.
\newblock {\em CoRR}, abs/2206.04673, 2022.

\bibitem{UPT}
Yuhang Zang, Wei Li, Kaiyang Zhou, Chen Huang, and Chen~Change Loy.
\newblock Unified vision and language prompt learning.
\newblock {\em CoRR}, abs/2210.07225, 2022.

\bibitem{zhou2022cocoop}
Kaiyang Zhou, Jingkang Yang, Chen~Change Loy, and Ziwei Liu.
\newblock Conditional prompt learning for vision-language models.
\newblock In {\em CVPR}, pages 16795--16804, 2022.

\bibitem{KgCoOp}
Hantao Yao, Rui Zhang, and Changsheng Xu.
\newblock Visual-language prompt tuning with knowledge-guided context
  optimization.
\newblock In {\em {IEEE/CVF} Conference on Computer Vision and Pattern
  Recognition, {CVPR} 2023, Vancouver, BC, Canada, June 17-24, 2023}, pages
  6757--6767, 2023.

\bibitem{PLOT}
Guangyi Chen, Weiran Yao, Xiangchen Song, Xinyue Li, Yongming Rao, and Kun
  Zhang.
\newblock {PLOT:} prompt learning with optimal transport for vision-language
  models.
\newblock In {\em The Eleventh International Conference on Learning
  Representations, {ICLR} 2023, Kigali, Rwanda, May 1-5, 2023}, 2023.

\bibitem{DualCoOp}
Ximeng Sun, Ping Hu, and Kate Saenko.
\newblock {DualCoOp}: Fast adaptation to multi-label recognition with limited
  annotations.
\newblock {\em CoRR}, abs/2206.09541, 2022.

\bibitem{MTL}
Rich Caruana.
\newblock Multi-task learning.
\newblock {\em Machine Learning}, 28:41–--75, 1997.

\bibitem{LUO2023126836}
Yuanyi Luo, Rui Wu, Jiafeng Liu, and Xianglong Tang.
\newblock A text guided multi-task learning network for multimodal sentiment
  analysis.
\newblock {\em Neurocomputing}, 560:126836, 2023.

\bibitem{DTD}
Mircea Cimpoi, Subhransu Maji, Iasonas Kokkinos, Sammy Mohamed, and Andrea
  Vedaldi.
\newblock Describing textures in the wild.
\newblock In {\em CVPR}, pages 3606--3613, 2014.

\bibitem{LLaMA}
Hugo Touvron, Thibaut Lavril, Gautier Izacard, Xavier Martinet, Marie{-}Anne
  Lachaux, Timoth{\'{e}}e Lacroix, Baptiste Rozi{\`{e}}re, Naman Goyal, Eric
  Hambro, Faisal Azhar, Aur{\'{e}}lien Rodriguez, Armand Joulin, Edouard Grave,
  and Guillaume Lample.
\newblock Llama: Open and efficient foundation language models.
\newblock {\em CoRR}, abs/2302.13971, 2023.

\bibitem{Caltech101}
Fei{-}Fei Li, Robert Fergus, and Pietro Perona.
\newblock Learning generative visual models from few training examples: {A}n
  incremental bayesian approach tested on 101 object categories.
\newblock {\em Comput. Vis. Image Underst.}, 106(1):59--70, 2007.

\bibitem{EuroSAT}
Patrick Helber, Benjamin Bischke, Andreas Dengel, and Damian Borth.
\newblock {EuroSAT}: {A} novel dataset and deep learning benchmark for land use
  and land cover classification.
\newblock {\em {IEEE} J. Sel. Top. Appl. Earth Obs. Remote. Sens.},
  12(7):2217--2226, 2019.

\bibitem{FGVCAircraft}
Subhransu Maji, Esa Rahtu, Juho Kannala, Matthew~B. Blaschko, and Andrea
  Vedaldi.
\newblock Fine-grained visual classification of aircraft.
\newblock {\em CoRR}, abs/1306.5151, 2013.

\bibitem{Food101}
Lukas Bossard, Matthieu Guillaumin, and Luc~Van Gool.
\newblock Food-101 - mining discriminative components with random forests.
\newblock In {\em ECCV}, pages 446--461, 2014.

\bibitem{Flowers102}
Maria{-}Elena Nilsback and Andrew Zisserman.
\newblock Automated flower classification over a large number of classes.
\newblock In {\em ICVGIP}, pages 722--729, 2008.

\bibitem{OxfordPets}
Omkar~M. Parkhi, Andrea Vedaldi, Andrew Zisserman, and C.~V. Jawahar.
\newblock Cats and dogs.
\newblock In {\em CVPR}, pages 3498--3505, 2012.

\bibitem{StanfordCars}
Jonathan Krause, Michael Stark, Jia Deng, and Li~Fei{-}Fei.
\newblock {3D} object representations for fine-grained categorization.
\newblock In {\em ICCVW}, pages 554--561, 2013.

\bibitem{SUN397}
Jianxiong Xiao, James Hays, Krista~A. Ehinger, Aude Oliva, and Antonio
  Torralba.
\newblock {SUN} database: Large-scale scene recognition from abbey to zoo.
\newblock In {\em CVPR}, pages 3485--3492, 2010.

\bibitem{UCF101}
Khurram Soomro, Amir~Roshan Zamir, and Mubarak Shah.
\newblock {UCF101:} {A} dataset of 101 human actions classes from videos in the
  wild.
\newblock {\em CoRR}, abs/1212.0402, 2012.

\bibitem{fruit_vegetable}
Kaggle.
\newblock {F}ruits and {V}egetables {I}mage {R}ecognition {D}ataset.
\newblock
  \url{https://www.kaggle.com/datasets/kritikseth/fruit-and-vegetable-image-recognition}.
\newblock Accessed: 2022-06-01.

\bibitem{kaggle_flower}
Kaggle.
\newblock {F}lowers {R}ecognition.
\newblock \url{https://www.kaggle.com/datasets/alxmamaev/flowers-recognition}.
\newblock Accessed: 2022-06-01.

\bibitem{kaggle_mushroom}
Kaggle.
\newblock {M}ushrooms classification - {C}ommon genus's images.
\newblock
  \url{https://www.kaggle.com/datasets/maysee/mushrooms-classification-common-genuss-images}.
\newblock Accessed: 2022-06-01.

\bibitem{kaggle_vegetable}
Kaggle.
\newblock {V}egetable {I}mage {D}ataset.
\newblock
  \url{https://www.kaggle.com/datasets/misrakahmed/vegetable-image-dataset}.
\newblock Accessed: 2022-06-02.

\bibitem{plant_seedling}
{Aarhus University}.
\newblock {P}lant {S}eedlings {D}ataset.
\newblock \url{https://vision.eng.au.dk/plant-seedlings-dataset}.
\newblock Accessed: 2022-06-03.

\bibitem{plant_village}
G.~Geetharamani and J.~Arun Pandian.
\newblock Identification of plant leaf diseases using a nine-layer deep
  convolutional neural network.
\newblock {\em Computers \& Electrical Engineering}, 76:323--338, 2019.

\bibitem{xia2017aid}
Gui-Song Xia, Jingwen Hu, Fan Hu, Baoguang Shi, Xiang Bai, Yanfei Zhong,
  Liangpei Zhang, and Xiaoqiang Lu.
\newblock Aid: A benchmark data set for performance evaluation of aerial scene
  classification.
\newblock {\em IEEE Transactions on Geoscience and Remote Sensing},
  55(7):3965--3981, 2017.

\bibitem{RESISC45}
Gong Cheng, Junwei Han, and Xiaoqiang Lu.
\newblock Remote sensing image scene classification: Benchmark and state of the
  art.
\newblock {\em Proc. {IEEE}}, 105(10):1865--1883, 2017.

\bibitem{wang2018scene}
Qi~Wang, Shaoteng Liu, Jocelyn Chanussot, and Xuelong Li.
\newblock Scene classification with recurrent attention of vhr remote sensing
  images.
\newblock {\em IEEE Transactions on Geoscience and Remote Sensing},
  57(2):1155--1167, 2018.

\bibitem{RSICB}
Haifeng Li, Xin Dou, Chao Tao, Zhixiang Wu, Jie Chen, Jian Peng, Min Deng, and
  Ling Zhao.
\newblock Rsi-cb: A large-scale remote sensing image classification benchmark
  using crowdsourced data.
\newblock {\em Sensors}, 20(6):1594, 2020.

\bibitem{RSSCN7}
Qin Zou, Lihao Ni, Tong Zhang, and Qian Wang.
\newblock Deep learning based feature selection for remote sensing scene
  classification.
\newblock {\em IEEE Geoscience and Remote Sensing Letters}, 12(11):2321--2325,
  2015.

\bibitem{TG2}
Zhuang Zhou, Shengyang Li, Wei Wu, Weilong Guo, Xuan Li, Guisong Xia, and Zifei
  Zhao.
\newblock Nasc-tg2: Natural scene classification with tiangong-2 remotely
  sensed imagery.
\newblock {\em IEEE Journal of Selected Topics in Applied Earth Observations
  and Remote Sensing}, 14:3228--3242, 2021.

\bibitem{UCMerced}
Yi~Yang and Shawn~D. Newsam.
\newblock Bag-of-visual-words and spatial extensions for land-use
  classification.
\newblock In {\em 18th {ACM} {SIGSPATIAL} International Symposium on Advances
  in Geographic Information Systems, {ACM-GIS} 2010, November 3-5, 2010, San
  Jose, CA, USA, Proceedings}, pages 270--279, 2010.

\bibitem{Dai2011WHURS19}
Dengxin Dai and Wen Yang.
\newblock Satellite image classification via two-layer sparse coding with
  biased image representation.
\newblock {\em IEEE Transactions on Geoscience and Remote Sensing},
  8(1):173--176, 2011.

\bibitem{VirmauxS18}
Aladin Virmaux and Kevin Scaman.
\newblock Lipschitz regularity of deep neural networks: analysis and efficient
  estimation.
\newblock In {\em NeurIPS}, pages 3839--3848, 2018.

\bibitem{ProtoNet}
Jake Snell, Kevin Swersky, and Richard~S. Zemel.
\newblock Prototypical networks for few-shot learning.
\newblock In {\em NeurIPS}, pages 4077--4087, 2017.

\bibitem{Waywardness}
Daniel Khashabi, Xinxi Lyu, Sewon Min, Lianhui Qin, Kyle Richardson, Sean
  Welleck, Hannaneh Hajishirzi, Tushar Khot, Ashish Sabharwal, Sameer Singh,
  and Yejin Choi.
\newblock Prompt waywardness: The curious case of discretized interpretation of
  continuous prompts.
\newblock In {\em NAACL}, pages 3631--3643, 2022.

\bibitem{ViT}
Alexey Dosovitskiy, Lucas Beyer, Alexander Kolesnikov, Dirk Weissenborn,
  Xiaohua Zhai, Thomas Unterthiner, Mostafa Dehghani, Matthias Minderer, Georg
  Heigold, Sylvain Gelly, Jakob Uszkoreit, and Neil Houlsby.
\newblock An image is worth 16x16 words: Transformers for image recognition at
  scale.
\newblock In {\em ICLR}, 2021.

\end{thebibliography}
}

\end{document}